\def\eqref#1{equation~\ref{#1}}
\def\1{\bm{1}}
\DeclareMathAlphabet{\mathsfit}{\encodingdefault}{\sfdefault}{m}{sl}
\SetMathAlphabet{\mathsfit}{bold}{\encodingdefault}{\sfdefault}{bx}{n}
 \newtheorem{theorem}{\textbf{Theorem}}
 \newtheorem{corollary}{\textbf{Corollary}}
 \newtheorem{proposition}{\textbf{Proposition}} \newtheorem{lemma}{\textbf{Lemma}}
 \newtheorem{definition}{\textbf{Definition}}
  \newcommand{\dist}{\mathtt{dist}}
 \newtheorem{problem}{\textbf{Problem}}
 \newtheorem{observation}{\textbf{Observation}}
\newtheorem{remark}{Remark}
\title{Relax and Merge: A Simple Yet Effective Framework for Solving Fair $k$-Means and $k$-sparse Wasserstein Barycenter Problems}
\date{}
\begin{document}

\author[1]{Shihong Song}
\author[1]{Guanlin Mo\thanks{The first two authors contribute equally.}}
\author[1]{Qingyuan Yang}
\author[2]{Hu Ding}
\affil[1]{University of Science and Technology of China, Hefei, China 

(\texttt{\{shihongsong, moguanlin, yangqingyuan\}@mail.ustc.edu.cn})}
\affil[2]{University of Science and Technology of China, Hefei, China 

(\texttt{huding@ustc.edu.cn})}

\maketitle

\begin{abstract}
The fairness of clustering algorithms has gained widespread attention across various areas in machine learning. In this paper, we study \emph{fair $k$-means clustering} in Euclidean space. 
  Given a dataset comprising several groups, the fairness constraint requires that each cluster should contain a proportion of points from each group within specified lower and upper bounds.
  Due to these fairness constraints, determining the locations of $k$ centers and finding the induced partition are quite challenging tasks.
  We propose a novel ``Relax and Merge'' framework that returns a $(1+4\rho + O(\epsilon))$-approximate solution, { where $\rho$ is the approximate ratio of an off-the-shelf vanilla $k$-means algorithm and $O(\epsilon)$ can be an arbitrarily small positive number}. If equipped with a PTAS of $k$-means, our solution can achieve an approximation ratio of $(5+O(\epsilon))$  with only a slight violation of the fairness constraints, {which improves the current state-of-the-art approximation guarantee.} Furthermore, using our framework, we can also obtain a $(1+4\rho +O(\epsilon))$-approximate solution for the {\em $k$-sparse Wasserstein Barycenter} problem, which is a fundamental optimization problem in the field of optimal transport, and a $(2+6\rho)$-approximate solution for the \emph{strictly fair $k$-means clustering} 
  with no violation, both of which are better than the current state-of-the-art methods. 
  In addition, the empirical results 
  demonstrate that our proposed algorithm 
 can significantly 
  outperform baseline approaches in terms of 
  clustering 
  cost.
\end{abstract}

\section{Introduction}
\label{sec-intro}
% % \vspace{-0.1in}
Clustering is one of the most fundamental problems in the area of machine learning. A wide range of practical applications rely on effective clustering algorithms, such as 
% recommendation systems \citep{burke2011recommender,das2014clustering,pham2011clustering} and 
feature engineering \citep{glassman2014feature,alelyani2018feature}, image processing~\citep{coleman1979image,chang2017deep}, and bioinformatics~\citep{ronan2016avoiding,nugent2010overview}. 
%to have a better analysis of {data}. 
In particular, the $k$-means clustering problem has been extensively studied in the past decades~\citep{DBLP:journals/prl/Jain10}. Given an input dataset $P \subset \mathbb{R}^d$, the goal of the $k$-means problem is to find a set $S$ of at most $k$ points for minimizing the clustering cost, which is the sum of the squared distances from every point of $P$ to its nearest neighbor in $S$.  
% In recent years, the study on clustering \emph{fairness} has in particular attracted a great amount of attention~\citep{chierichetti2017fair,bera2019fair,bohm2021algorithms,huang2019coresets,chen2019proportionally,ghadiri2021socially}. 
% The concerns on fairness are motivated by various fields like education, social security, and cultural communication. (citep something)
{In recent years, motivated by various fields like education, social security, and cultural communication, the study on \emph{fairness} of clustering has in particular attracted a great amount of attention~\citep{chierichetti2017fair,bera2019fair,huang2019coresets,chen2019proportionally,ghadiri2021socially, backurs2019scalable}. }

%{ 
%While using clustering algorithms to solve practical problems, fairness is a critical factor that we should pay attention to}.

In this paper, we consider the problem of \textbf{\emph{$(\alpha,\beta)$-fair $k$-means clustering}} that was initially proposed by \citet{chierichetti2017fair} and then generalized by \citet{bera2019fair}.
 %Under the definition of 
% {
 %$(\alpha, \beta)$ fair \citep{bera2019fair}}, 
 Informally speaking, we assume that the given dataset $P$ consists of $m$ groups {of points}, and  the ``fairness'' constraint requires that in each obtained cluster, the points from each group should take a fraction between pre-specified lower and upper bounds.
 %(the formal definition is shown in {Section~\ref{sec-pre}}). 
 %{We also consider a special but important case of this problem, named \emph{strictly fair} k-clustering.} 
%  \citet{bera2019fair} showed that a $\rho$-approximate algorithm for vanilla $k$-means clustering can also provide a $(2+\sqrt{\rho})^2$- approximate solution~\footnote{In their paper, the approximate ratio is written as $(2+\rho)$ because they consider general $k$-clustering, which has a root outside the $k$-means cost function.} for fractional fair $k$-clustering, where  the ``fractional'' means we allow split point to be assigned to different clusters with fractional weights.
%  %with constant approximation ratio. 
% To obtain an integral solution, they also proposed a rounding technique with a slight violation on the fair constraint but ensuring the same clustering cost.
\citet{bera2019fair} showed that a $\rho$-approximate algorithm for vanilla $k$-means can provide a $(2+\sqrt{\rho})^2$- approximate solution \footnote{In their paper, the approximate ratio is written as $(2+\rho)$ because they added a squared root to the $k$-means cost function.} for $(\alpha,\beta)$-fair $k$-clustering with a slight violation on the fairness constraints, where the ``violation'' is formally defined in Section~\ref{sec-pre}.
Regarding to the no violation scenario, \citet{dai2022fair} and \citet{WU2024114332} both obtained an $O(\log k)$-approximate solution for fair $k$-median, and their basic technique is dynamic programming~\citep{arora1998approximation} after embedding the metric space into a tree. \citet{WU2024114332} also achieved a quasi-polynomial-time approximate scheme in doubling metric by using dynamic programming with split tree~\citep{talwar2004bypassing}.
 Furthermore, \citet{bohm2021algorithms} studied the ``strictly'' fair $k$-means clustering problem, where it requires that the number of points from each group should be uniform  in every cluster; they obtained  a $(2+\sqrt{\rho})^2$ approximate solution without violation.
 %, where $\rho$ is the approximation ratio of an invoked vanilla $k$-means clustering algorithm, 
 {These fair $k$-means algorithms can also be accelerated by using the coreset techniques, such as~\citep{huang2019coresets,braverman2022power,DBLP:journals/jcss/BandyapadhyayFS24}.} There also exist  polynomial-time approximation scheme (PTAS) for fair $k$-means, such as the algorithms proposed in~\citep{bohm2021algorithms, schmidt2020fair,DBLP:journals/jcss/BandyapadhyayFS24}, but their methods have an exponential time complexity in $k$.
 We are also aware of several other different definitions of fairness for clustering problems, such as the \emph{proportionally fair} clustering~\citep{chen2019proportionally, micha2020proportionally} and \emph{socially fair} $k$-clustering clustering~\citep{ghadiri2021socially,abbasi2021fair,makarychev2021approximation, chlamtavc2022approximating}.
 %considered . More specifically, a k-clustering instance is proportionally fair {if there are no  more than a certain number of points in the dataset that are too far from their cluster centers.} \citet{ghadiri2021socially} proposed the \emph{socially fair} $k$-means algorithm on given groups of data points, {which ensures the cost of each group equal while minimizing the $k$-means cost.}

% However, since fair k-clustering problems usually need solutions that are integral ()
% Another problem closely related to fair $k$-means is k-sparse Wassertein Barycenter \citep{agueh2011barycenters} (k-sparse WB, the formal definition is shown in Section~\ref{sec-pre}). 

Another problem closely related to fair $k$-means is the so-called ``\textbf{$k$-sparse Wassertein Barycenter (WB)}'' \citep{agueh2011barycenters} (the formal definition is shown in Section~\ref{sec-pre}). 
The Wasserstein Barycenter is a fundamental concept in optimal transport theory, and it represents the ``average" or central distribution of a set of probability distributions. It plays a crucial role in various applications such as image processing~\citep{bonneel2015sliced,cuturi2014fast}, data analysis~\citep{rabin2012wasserstein}, and machine learning~\citep{backhoff2022bayesian,metelli2019propagating}.
%by providing a way to combine multiple distributions into a single representative distribution. 
Given $m>1$ discrete distributions, the goal of the $k$-sparse WB problem is 
% to find a distribution with $k$-sparse support (the support contains at most $k$ points in the space) that minimizes the sum of the Wasserstein distances~\citep{villani2021topics} between itself to all the given distributions. 
{to find a discrete distribution (i.e., the barycenter) that minimizes the sum of the Wasserstein distances~\citep{villani2021topics} between itself to all the given distributions, and meanwhile the support size of the barycenter is restricted to be no larger than a given integer $k\geq 1$.} 
If relaxing the ``$k$-sparse'' constraint (\emph{i.e.,} the barycenter is allowed to take a support size larger than $k$), \citet{altschuler2021wasserstein} presented an algorithm based on linear programming, which can compute the WB within fixed dimensions in polynomial time. 
{ If the locations of the WB supports are given, the problem is 
called ``fixed support WB'', which can be solved by using several existing algorithms~\citep{claici2018stochastic, cuturi2014fast, cuturi2016smoothed, lin2020fixed}.}
% { If the locations of WB are given, this problem is called ``fixed support WB'' and there are many works can address it~\citep{claici2018stochastic, cuturi2014fast, cuturi2016smoothed}}
%Furthermore, if we limit the number of the support of the barycenter to no more than $k$, the modified problem is called k-sparse Wasserstein Barycenter. It can be proved that a k-sparse WB instance can be seen as a fractional version of strictly fair $k$-means with a weighted input dataset, and 
If we keep the ``$k$-sparse'' constraint, it has been proved that the problem is  NP-hard \citep{borgwardt2021computational}. To the best of our knowledge, the current lowest approximation ratio of k-sparse WB problem is also $(2+\sqrt{\rho})^2$ (same with the aforementioned approximation factor for fair $k$-means), as recently studied by \citet{yang2024approximate}. 
% {\color{black}[Steffen Borgwardt's works] we have cited his works}
In fact, we can regard this problem as a special case of fair $k$-means clustering, where each input distribution is an individual group and the unique cost measured by ``Wasserstein distance'' is implicitly endowed with a kind of fairness.
%The given distributions can be analogized to the groups in fair clustering, 
%and} the fairness constraints are also included in WB problem. 
This observation from ~\citet{yang2024approximate} inspires us to consider solving the $k$-sparse WB problem under our framework. 

% % \vspace{-0.1in}
% designing its algorithm in the current article. 
%imply that it is possible to solve WB problem using the algorithms for solving fair clustering, and vice versa. 
{\textbf{Why fair $k$-means is so challenging?}
%As we know, the vanilla $k$-means problem has been widely studied.
% {citep something}. 
%However, it is much more challenging to analyze the case with fair constraints. For example, given the optimal solution of vanilla $k$-means, when dividing the input dataset into only 2 groups, 50\% of the data points can violate the fairness constraints. 
Though the fair $k$-means clustering has been extensively studied in recent years, their current state-of-the-art approximation qualities are still not that satisfying. The major difficulty   arises from the lack of ``locality property"~\citep{ding2020unified,bhattacharya2018faster}   caused by fair constraints. More precisely, in a clustering result of vanilla $k$-means, each client point obviously belongs to its closest center. That is, a  $k$-means clustering implicitly forms  a {\em Voronoi diagram}, where the cell centers are exactly the $k$ cluster centers, and  the client points in each Voronoi cell form a cluster.
%, and any cluster center is the centroid of the points in its Voronoi cell. 
However, when we add some fair constraints, such as requiring that the proportion of points of each group should be equal in each cluster, the situation becomes more complicated. Given a set of cluster center locations, because the groups of client points within a Voronoi cell may not be equally distributed, some points are forced to be assigned to other Voronoi cells. This loss of locality introduces significant uncertainty for the selection of cluster center positions. 
%Since it is challenging to find an appropriate way to obtain the cluster center positions while preserving the fair constraints, 
The previous works \citep{bera2019fair, bohm2021algorithms} do not pay much attention on how to handle this locality issue when searching for the cluster centers, instead, they directly apply vanilla $k$-means algorithms to the entire input dataset or a group, and use the obtained center locations as the center locations for fair $k$-means. It is easy to notice that their methods could result in a certain gap  with the optimal fair $k$-means solution. To narrow this gap, we attempt to design some more effective way to determine the center locations, where the key part that we believe, should be how to encode the fair constraints into the searching algorithm.  
%We believe that there exists a more suitable way to determine these center locations, leading to a better approximation ratio.
}

\textbf{Our key ideas and main results.}
{Our key idea relies on an important observation, where we find that the fair $k$-means problem is inherently related to a classic geometric structure, ``$\epsilon$-approximate centroid set'', which was firstly proposed by \citet{matouvsek2000approximate}. Roughly speaking, given a dataset, an $\epsilon$-approximate centroid set should contain at least one point that approximately represents the centroid location of any subset of this given dataset. It means that the $\epsilon$-approximate centroid set contains not only the approximate centroids based on the Voronoi diagram, but also the approximate centroids of those potential fairness-preserving clusters. 
%Note that any fair $k$-means center locates on the centroid of a cluster to achieve the lowest cost. 
%So if we propose a method to select the centroids of those "good" fair $k$-means clusters, then we have the proper locations of fair $k$-means centers.
}

% Our first contribution is to illustrate the relation between fair $k$-means and linear programming and vanilla $k$-means. The vanilla $k$-means problem has been widely studied{citep something}. However, it is much more challenging to analyze the case with fair constraints. For example, given the optimal solution of vanilla $k$-means, when divide the input dataset into only 2 groups, 50\% of the data points can violate the fairness constraints. Though the fair $k$-means clustering has been extensively studied in recent years, their current state-of-the-art approximation qualities are still not that satisfying. In this paper, we show that given any $\epsilon>0$, one can achieve a $(1+4\rho + \epsilon)$ approximate algorithm for the  $(\alpha, \beta)$-fair $k$-means problem in Euclidean space.

Inspired by the above observation, we illustrate the relationship between fair $k$-means and $\epsilon$-approximate centroid set first, and then propose a novel \emph{Relax-and-Merge} framework. In this framework, we relax the constraints on the number of clusters $k$;  we focus on utilizing fair constraints to cluster the data into small and fair clusters, which are then merged together to determine the positions of $k$ cluster centers. 
As shown in Table~\ref{Comparison on approximation ratio}, our result is better than the state of the art works~\citep{bera2019fair, bohm2021algorithms}.
%that give $(2+\sqrt{\rho})^2$ approximation factor when $\rho$ is small enough. 
Equipped with a PTAS for $k$-means problem (e.g., the algorithm from \citet{cohen2019local}), our algorithm yields a ${5+O(\epsilon)}$ approximation factor. 
%,where $\rho$ is the approximation ratio of the vanilla $k$-means algorithm. 
% First, we eliminate the constraint that the number of the clusters cannot exceed $k$ and obtain a relaxed solution through a linear programming process. Then we run a vanilla $k$-means algorithm on it to yield the locations of $k$ centers. Finally we assign all data points to their closest centers. 
% We refer to the method we use as "Relax and Merge"
% Besides, by using the rounding technique proposed by \citep{bera2019fair}, we can transform our fractional solution into an integral solution with a slight violation of the fair constraints. 
%We also provide an alternative approach by using coreset technique \citep{braverman2022power} to reduce the time complexity. 
%Our main results are as follows:
%\textbf{(1)} Our first contribution is to illustrate the relationship between fair $k$-means and $\epsilon$ centroid set and vanilla $k$-means. We show that given any $\epsilon>0$, one can achieve a $(1+4\rho + \epsilon)$ approximate algorithm for the  $(\alpha, \beta)$-fair $k$-means problem in Euclidean space.
We also present two important extensions from our work. 
%Our method can be extended to address Wasserstein Barycenter problem. To be precise, our algorithm can obtain an $(1+\epsilon)$ approximate solution of Wasserstein Barycenter in a more direct way (the time complexity of the algorithm for solving exact WB \citep{altschuler2021wasserstein} is high) and  
The first extension is an $(1+4\rho + O(\epsilon))$ solution for k-sparse Wasserstein Barycenter.
The second one is about strictly fair $k$-means. 
%Our $(1+4\rho + \epsilon)$ approximate algorithm still works. Furthermore, we give a novel rounding method inspired by \citep{shmoys1993approximation}, and our method can transform a fractional solution to integral with only at most ``$1$'' violation of the fair constraints.
We give a refined algorithm of \emph{Relax and Merge} that yields a no-violation solution with a $(2+6\rho)$ approximation factor, which is  better than the state of the art work~\citep{bohm2021algorithms}.

\begin{table}[h]
\centering
\caption{{Comparison of the approximation ratios for fair $k$-means and $k$-sparse WB. The ``general case'' includes $(\alpha,\beta)$-fair $k$-means, strictly $(\alpha,\beta)$-fair $k$-means and $k$-sparse WB.}
%The algorithm of \citep{schmidt2020fair} requires the input consisting of only 2 groups and the algorithm of \citep{bohm2021algorithms} and our algorithm~\ref{alg-balance} only works for strictly fair case and do not violate any fairness constraint
}

    \label{Comparison on approximation ratio}
    \begin{tabular}{ccccc}
    \toprule
        \textbf{Algorithms}  & \begin{tabular}{c}
        \textbf{Approximation} \\ 
        \textbf{ratio}
        \end{tabular} & \begin{tabular}{c}
        \textbf{When} \\ 
        \textbf{$\rho = 1+O(\epsilon)$}
        \end{tabular}
        &
        \textbf{Note on the quality}
        \\ \cmidrule(r){1-4}
        \citet{bera2019fair}   & $(2+\sqrt{\rho})^2$ & $9+O(\epsilon)$ & general case \\
        \citet{schmidt2020fair}   & $5.5\rho + 1$ & $6.5+O(\epsilon)$ & two groups only\\
       \citet{bohm2021algorithms}   &  $(2+\sqrt{\rho})^2$ & $9+O(\epsilon)$ & strictly only, no violation\\ 
       \citet{yang2024approximate} & $(2+\sqrt{\rho})^2$ & $9+O(\epsilon)$ & $k$-sparse WB \\
        Algorithm~\ref{alg-fair}, now  & $1+4\rho+O(\epsilon)$ & $5+O(\epsilon)$ & general case \\
        Algorithm~\ref{alg-balance}, now  & $2 + 6\rho$ & $8+O(\epsilon)$ & strictly only, no violation\\
\bottomrule 
    \end{tabular}
\end{table}
% \vspace{-0.2in}
\textbf{Other Related Works on $k$-Means}
{The vanilla $k$-means problem is a topic that has been widely studied in both theory and practice.} It has been proved that $k$-means clustering is NP-hard even in $2D$ if $k$ is large \citep{DBLP:journals/tcs/MahajanNV12}. In high dimensions, even if $k$ is fixed, say $k=2$, the $k$-means problem is still NP-hard \citep{DBLP:journals/ml/DrineasFKVV04}. Furthermore, \citet{lee2017improved} proved the APX-hardness result for Euclidean $k$-means problem, which implies that it is impossible to approximate the optimal solution of $k$-means below a factor 1.0013 in polynomial time under the assumption of P $\ne$ NP. Therefore, a number of approximation algorithms have been proposed in theory. 
If the dimension $d$ is fixed, \citet{kanungo2002local} obtained a ($9+O(\epsilon)$)-approximate solution
%of the $k$-means problem in Euclidean space 
by using the local search technique. Roughly speaking, the idea of local search is swapping a small number of points in every iteration, so as to incrementally improve the solution until converging at some local optimum. Following this idea, \citet{cohen2019local} and \citet{friggstad2019local} proposed the PTAS for $k$-means in low dimensional  space. For high-dimensional case with constant $k$, \citet{DBLP:journals/jacm/KumarSS10} proposed an elegant peeling algorithm that iteratively finds the $k$ cluster centers and eventually obtain the PTAS. 

% \textbf{Wasserstein distance and Optimal Transport.} Optimal Transport (OT) is the one of the most fundamental problem in machine learning. Wasserstein distance measures the optimum of OT (the formal definition is shown in Section~\ref{sec-pre}).
% A direct way to compute Wasserstein distance is to reduce it to min-cost flow problem, which can be solved by network simplex algorithm~\citep{altschuler2021wasserstein, chen2022maximum, khesin2019preconditioning, lee2014path}. \citet{cuturi2013sinkhorn} propose ``Sinkhorn distance'' as a variant of Wasserstein distance in order to build faster algorithm. After that, many works focus on improve Cuturi’swork. \citet{altschuler2017near} propose a nearly linear time algorithm. And \citet{genevay2016stochastic} propose a stochastic algorithm to address large-scale optimal transport.
\vspace{-0.1in}
\section{Preliminaries}
\label{sec-pre}
 \vspace{-0.1in}
\textbf{Notations.} In this paper, we always assume that the dimensionality $d$ of the Euclidean space is constant. Let $P$ denote the set of $n$ client points located in Euclidean space $\mathbb{R}^d$.
% and each point $p\in P$ has a non-negative weight that is denoted by $w(p)$ 
% {(for simplicity, we assume that $w(p)=1$ for each $p\in P$; {\color{black}for the general case, we can normalize all the weights with setting $\min_{p\in P}w(p)=1$, and each point can be regarded as a set of $w(p)$ overlapping points with unit weight}).} 
The set $P$ consists of $m$ different  groups (not necessarily disjoint), \emph{i.e.}, $\boldsymbol{P=\cup_{i=1}^{m}P^{(i)}}$, and each group has the size $\boldsymbol{|P^{(i)}| = n^{(i)}}$  (we use the superscript ``$(i)$''
 to denote the group's index). The Euclidean distance between two points $a,b \in \mathbb{R}^d$ is denoted by $\Vert a-b\Vert $; the distance between a point $a$ and any set $Q\subset\mathbb{R}^d$ is denoted by $\mathtt{dist}(a,Q) = \min_{q\in Q} \Vert a-q\Vert $, and the nearest neighbor of $a$ in $Q$ is denoted as $\mathcal{N}(a, Q)$. 
 %Given any set $E\in \mathbb{R}^d$, 
 The centroid of a set $Q$ is denoted by $\mathtt{Cen}(Q)$. 
 %If the point set $P$ is weighted, for every $p\in P$, we use $w(p)$ to denote the weight of point $p$. 

For the vanilla $k$-means problem, the client points are always assigned to their nearest center. However, if considering the fairness constraint, the assignment may not be that straightforward. To describe the fair $k$-means clustering more clearly, we introduce the ``\textbf{assignment matrix}'' first. Given any candidate set of $k$ cluster centers $S$, we define the assignment matrix 
%context of $(\alpha, \beta)$ fair $k$-means, client points could be assigned to any cluster. Therefore, we use a matrix 
$\boldsymbol{\phi_S}:P\times S \rightarrow \mathbb{R}^+$ to indicate the assignment {relation between the client points and cluster centers}.
For every $p \in P$ and $s\in S$, $\phi_S(p,s)$ denotes the proportion that
% of $w(p)$ that 
is assigned to center $s$ (e.g., we may respectively assign $30\%$  and $70\%$ to two different centers). Obviously, we have $\sum_{s\in S} \phi_S(p,s) = 1$.
%because the total weight of every client point is $1$. 
For each center $s\in S$, we use $w(s) = \sum_{p\in P}\phi_S(p,s)$ to denote the amount of weight assigned to   $s$; for each group $P^{(i)}$, we similarly define the function $w^{(i)}(s) = \sum_{p\in P^{(i)}} \phi_S(p,s)$. Let $\mathtt{Cost}(P, S, \phi_S)$ denote the cost of input instance $P$ with given $S$ and $\phi_S$:
\begin{eqnarray}
\mathtt{Cost}(P, S, \phi_S)=\sum_{p\in P} \sum_{s\in S} \Vert p-s\Vert ^2\phi_S(p,s).\label{for-cost}
\end{eqnarray}

%received from all the client points and we use $w^{(i)}_S(s) = \sum_{p\in P^{(i)}} \phi_S(p,s)$ to denote how many weights does $s$ received from client points of the $i$-th group.}

%The next definition of $(\alpha, \beta)$-fair $k$-means problem comes from \citet{bera2019fair}.
\begin{problem}[$(\alpha, \beta)$-fair $k$-means clustering \citep{bera2019fair}]
\label{pro-fair-means}
  Given an instance $P$ as described above
%  =\cup_{i=1}^m P^{(i)} \subset \mathbb{R}^d$ with $|P|=n$, where $P^{(i)}$ denote the point group with color $i$ (not necessary disjoint), 
  and two parameter vectors $\alpha,\beta \in [0,1]^m$, the goal of the \textbf{$\boldsymbol{(\alpha, \beta)}$-fair $\boldsymbol{k}$-means} clustering is to find the set $S$ consisting of $k$ points and an assignment matrix $\phi_S$,  such that the clustering cost (\ref{for-cost}) is minimized,
 % , where $\phi_S$ is the assignment matrix. 
  and meanwhile  each cluster center $s\in S$ should satisfy the fairness constraint: $\beta_i w(s)\le w^{(i)}(s) \le \alpha_i w(s)$ for every  
 % the fairness constraint $\beta_i \sum_{p\in P}\phi_S(p,s)\le \sum_{p\in P^{(i)}}\phi_S(p,s) \le \alpha_i \sum_{p\in P}\phi_S(p,s)$ should be satisfied for each 
  $i\in \{1, 2, \cdots, m\}$. Here, we use $\alpha_i, \beta_i$ to denote the $i$-th entry of  $\alpha$ and $\beta$, respectively.
  
  {Moreover, if the $m$ groups are disjoint with equal size (i.e., $n^{(i)}=n/m$ for any $i$), and $\alpha_i=\beta_i=1/m$ for each  group $P^{(i)}$, we say this is a \textbf{strictly $\boldsymbol{(\alpha, \beta)}$-fair $\boldsymbol{k}$-means} clustering problem. }
  %and use $[m]$ to denote $\{1,2,\cdots,m\}$ for simplicity.
  
  % \begin{equation}
  %   \begin{aligned}
  %     \min_{S, \phi} \quad     & \sum_{p\in P} \Vert p-\phi(p)\Vert ^2                                       \\
  %     s.t.         \quad & |\{p\in P^{(i)}|\phi(p)=f\}|\le \alpha_i |\{p\in P|\phi(p)=f\}| \quad \forall f\in S, \forall i\in [m], \\
  %      \quad & |\{p\in P^{(i)}|\phi(p)=f\}|\ge \beta_i |\{p\in P|\phi(p)=f\}| \quad \forall f\in S, \forall i\in [m], 
  %   \end{aligned}
  % \end{equation}
  % where $\phi$ is the assignment from $P$ to $S$, \emph{i.e.}, $\phi:P\times S \rightarrow \mathbb{R}_+$.

  % {where $w^{(i)}(f)$ denotes the number of points that belong to $P^{(i)}$ and being clustered to center $f$. May need to distinguish clusters and centers of a solution? } For given $P$, $S$, $\phi$ and $w$, the value of $\sum_{p\in P} \sum_{f\in S} \phi(p,f)\Vert p-f\Vert ^2$ is denoted by $\mathtt{Cost}(P, S, \phi, w)$.
\end{problem}

%If the given groups are disjoint and $\alpha_i = \beta_i = n/m$, we call this special case \emph{strictly fair k-means} problem. 

For Problem~\ref{pro-fair-means}, we can specify two types of solutions: \textbf{fractional} and \textbf{integral}. Their difference is only from the restriction on the assignment matrix $\phi_S$. For the first one, each entry $\phi_S(p,s)$ can be any real number between $0$ and $1$; but for the latter one, we require that the value of $\phi_S(p,s)$ should be either $0$ or $1$, that is, the whole weight of $p$ should be assigned to only one cluster center. 

% Our final goal is to obtain a integral solution via rounding a fractional solution into integral.
How to round a fractional solution into integral while preserving fairness constraints is still an open problem. \citet{bera2019fair} introduced the \textbf{violation factor} to measure the violations of fairness constraints after rounding: an assignment matrix $\phi_S$ is a $\lambda$-violation solution if $\beta_i \sum_{p\in P}\phi_S(p,s)  - \lambda \le \sum_{p\in P^{(i)}} \phi_S(p,s) \le \alpha_i \sum_{p\in P}\phi_S(p,s) + \lambda, \quad \forall s\in S, \forall i\in [m]$.
In their paper, a fractional solution can always be rounded to integral, but it introduces some violations , which will be discussed in Section~\ref{sub-frac}.
In this paper, we use $OPT$ to denote the optimal integral cost of Problem~\ref{pro-fair-means}. We use $S_{\mathtt{opt}}=\{\tilde{s}_1, \tilde{s}_2, \cdots, \tilde{s}_k\}$ to denote the optimal solution of integral fair $k$-means problem and its assignment matrix is denoted by $\phi_{S_{\mathtt{opt}}}$. 
For each $\tilde{s}_j$, let $C_j=\{p\in P\mid \phi_{S_{\mathtt{opt}}}(p,\tilde{s}_j) > 0\}$ be the corresponding cluster, i.e., the set of point assigned to it.
%, \emph{i.e.} $\phi_{S_{opt}}(p,\tilde{s}) > 0$. 
% Similarly, we define $C_t$ as the set of points assigned to $t\in T$. 
% Let { $\pi(t) = \frac{\sum_{p\in P}p\cdot \phi_T(p,t)}{\sum_{i=1}^{m} w^{(i)}(t)}$}. 
% It is easy to see that $\pi(t)$ is the centroid of $C_t$. 
%For each point $p\in P$, we use $nrst_O(\cdot)$ to denote the nearest $\tilde{s} \in S_{opt}$.
% If the fair {constraints are} modified as $\beta_i \sum_{i=1}^mw^{(i)}(f)- \lambda \le w^{(i)}(f)  \le \alpha_i \sum_{i=1}^mw^{(i)}(f) + \lambda, \quad \forall f\in S, \forall i\in [m]$, the relaxed problem is called fair $k$-means with $\lambda$ violation. 
A simple observation is that, if given a fixed candidate cluster centers set $S$, the assignment matrix 
$\phi_S$ can be obtained via solving a linear programming (we can view the $n\times k$ entries of $\phi_S$ as the variables):
  \begin{equation}
    \begin{aligned}
     \min_{\phi_S} \quad     & \sum_{p\in P} \sum_{s\in S} \Vert p-s\Vert ^2  \phi_S(p,s)                                     \\
      s.t.         \quad & \beta_i \sum_{p\in P}\phi_S(p,s) \le \sum_{p\in P^{(i)}} \phi_S(p,s) \le \alpha_i \sum_{p\in P}\phi_S(p,s), \quad \forall s\in S, \forall i\in [m], \\
      % & \sum_{p\in P^{(i)}} \phi(p,f)=w^{(i)}(f), \quad \forall f\in S , \forall i\in [m],\\
      & \sum_{s\in S} \phi_S(p,s) = 1, \quad \forall p\in P.
      % & \sum_{f\in S} w^{(i)}(f) = n^{(i)}, \quad \forall i\in [m], \\
      % &\phi(p,f) \in \{0,1\}, \quad \forall p\in P, f\in S.
      \label{eq-fair}
    \end{aligned}
  \end{equation}
%with the lowest cost is called \emph{fair assignment} problem, which can be written as an integral linear programming. If we allow the elements of $\phi_S$ to be fractional, the relaxed assignment problem is a linear programming (LP) as follows. 
If we want to compute an integral solution, the above (\ref{eq-fair}) should be an integer LP. 
Given a set $S$, $\boldsymbol{\phi_S^*}$ denotes the optimal solution of (\ref{eq-fair}) and $\boldsymbol{\tilde{\phi}_S}$ denotes the corresponding optimal integral solution. 

The following proposition is a folklore result that has been used in many articles on clustering algorithms (e.g., \citep{kanungo2002local}). We will also repeatedly use it in our proofs.

\begin{proposition}
Given a finite weighted point set $Q\subset \mathbb{R}^d$, 
%if $\mathtt{cen}(Q)$ is the centroid of $P$, then 
for any point $a$, 
$\sum_{q\in Q}w(q)\Vert a - q \Vert^2  = \sum_{q\in Q}w(q) \Vert q-\mathtt{Cen}(Q)\Vert ^2  + w(Q)\cdot \Vert a-\mathtt{Cen}(Q)\Vert ^2$, where $w(Q)$ is the total weight of $Q$.
\label{lem-centroid}
\end{proposition}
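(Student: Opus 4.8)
The plan is to prove this by the standard ``parallel axis'' decomposition, expanding the squared distance around the centroid. Write $c = \mathtt{Cen}(Q) = \frac{1}{w(Q)}\sum_{q\in Q} w(q)\, q$ for the weighted centroid, where $w(Q) = \sum_{q\in Q} w(q)$ is the total weight. The key algebraic identity I would invoke is the expansion $\Vert a - q\Vert^2 = \Vert a - c\Vert^2 + 2\langle a-c,\, c-q\rangle + \Vert c-q\Vert^2$, obtained by inserting $\pm c$ inside the norm and applying $\Vert u+v\Vert^2 = \Vert u\Vert^2 + 2\langle u,v\rangle + \Vert v\Vert^2$. This holds for every $q\in Q$ and for the fixed point $a$.

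Next I would multiply this identity by $w(q)$ and sum over all $q\in Q$, splitting the right-hand side into three sums. The first sum contributes $\Vert a-c\Vert^2 \sum_{q\in Q} w(q) = w(Q)\,\Vert a - c\Vert^2$, which is exactly the second term in the claimed equality. The third sum contributes $\sum_{q\in Q} w(q)\,\Vert q - c\Vert^2$, which is the first term in the claimed equality. So the whole statement reduces to verifying that the middle (cross) sum vanishes.

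The crux of the argument --- and the only step that uses anything beyond pure algebra --- is showing that this cross term is zero. I would pull the fixed vector $a-c$ out of the sum to obtain $2\bigl\langle a-c,\; \sum_{q\in Q} w(q)(c-q)\bigr\rangle$, and then observe that $\sum_{q\in Q} w(q)(c - q) = c\, w(Q) - \sum_{q\in Q} w(q)\, q = w(Q)\,c - w(Q)\,c = 0$ by the very definition of the centroid. Hence the cross term vanishes for every choice of $a$, and adding the remaining two sums yields the stated identity. I do not expect any genuine obstacle here: the result is a short direct computation, and the only thing to notice is that the first-order term drops out precisely because $c$ is the weighted mean of $Q$.
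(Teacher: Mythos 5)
Your proof is correct: the parallel-axis expansion around the weighted centroid, with the cross term vanishing because $\mathtt{Cen}(Q)$ is the weighted mean, is exactly the standard argument for this identity. Note that the paper itself states this proposition as a folklore result (citing \citet{kanungo2002local}) and gives no proof, so there is nothing to compare against; your short direct computation is the proof one would expect.
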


%\begin{proposition}[Squared Triangle Ineuqality]
%    $\Vert a - b \Vert^2 \le 2(\Vert a - c \Vert^2 + \Vert c - b \Vert^2)$.
%\end{proposition}

% Our proposed framework also relies on an important geometric structure, 
Next we introduce  an important geometric structure ``$\epsilon$-approximate centroid set'', which was 
firstly proposed by \citet{matouvsek2000approximate}. 
Roughly speaking, the $\epsilon$-approximate centroid set approximately covers the centroids of any subset of given data, even though the subsets do not align with the ``Voronoi diagram'' structure (as discussed in Section~\ref{sec-intro}). 
% Roughly speaking, an $\epsilon$-approximate centroid set 
% should contain at least one point that approximately represents the centroid location of any subset of given data. 
% This property can help us to find the centroids of potential fair clusters. Formally, we have the following definition.

\begin{definition}
Given a finite set $P\subset \mathbb{R}^d$ and a small parameter $\epsilon>0$, we use $\mathtt{CS}_\epsilon(P)$ to denote an \textbf{$\epsilon$-approximate centroid set} of $P$ that satisfies: for any nonempty subset $Q\subseteq P$, there always exists a point $v\in \mathtt{CS}_\epsilon(P)$ such that $\Vert v - \mathtt{Cen}(Q) \Vert \le \frac{\epsilon}{3}\sqrt{\frac{1}{|Q|}\sum_{q\in Q}\Vert q-\mathtt{Cen}(Q) \Vert^2}$.
\label{def-centroid-set}
\end{definition}
 
\begin{remark}
\label{rem-complexity-matouvsek}
    \citet{matouvsek2000approximate} also presented a construction algorithm based on the space partitioning technique ``quadtree''~\citep{DBLP:journals/acta/FinkelB74}. In Appendix~\ref{sec-centroid-set}, we briefly illustrate the role of the $\epsilon$-approximate centroid set in preserving fairness constraints and how to construct it.
    % {\color{black} First, we use a quadtree to partition the space into hierarchical cubes. At each level of the tree, we construct a grid. The length of the grid is set to ensure that the grid points can always cover all approximate centroids of all cubes at this level. The approximate centroid set is the union of all grid points across all levels.} 
    The size of the obtained $\epsilon$-approximate centroid set is $O(|P|\epsilon^{-d}\log (1/\epsilon))$ and the construction time complexity is $O(|P|\log |P| + |P|\epsilon^{-d}\log (1/\epsilon))$. 
    % We briefly introduce the idea of construction an $\epsilon$-approximate centroid set in \ref{sec-centroid-set}.
    % {{**may add some figure/explaination on it in supplement.**}}
\end{remark}

Next, we give the formal definition of \textbf{$k$-sparse Wasserstein Barycenter} problem.

\begin{definition}[Wasserstein Distance]
    Let $P$ and $Q$ be weighted point sets supported in $\mathbb{R}^d$. Wasserstein distance is the minimum transportation cost between $P$ and $Q$:
    $\mathcal{W}(P,Q) = \min_F \sqrt{\sum_{p\in P} \sum_{q\in Q} F(p,q) \Vert p - q\Vert ^2}$,
    where the transport matrix $F: P \times Q \rightarrow [0,1]$ should satisfy: $\sum_{p\in P} F(p, q) = w(q)$ for any $q\in Q$, and $\sum_{q\in Q} F(p,q) = w(p)$ for any $p\in P$.
  \end{definition}

 For a weighted set $S$, we use $\mathtt{supp}(S)$ to denote its support, i.e., the set that shares the same location of $S$ but not weighted. 
The number of points is $\mathtt{supp}(S)$ is denoted by $|\mathtt{supp}(S)|$.
\begin{problem}[$k$-sparse Wassertein Barycenter ($k$-sparse WB)]
    Given $m$ discrete probability distributions $P^{(1)}, \cdots, P^{(m)}$ supported on $\mathbb{R}^d$, \emph{WB} is the probability distribution $S$ minimizing the sum of squared Wasserstein distances to them, i.e.,
        $\arg \min_{S} \sum_{i=1}^m \mathcal{W}^2(P^{(i)}, S)$.
        The problem is called \textbf{$k$-sparse Wasserstein Barycenter} if we restrict $|\mathtt{supp}(S)|\le k$
    %where $\mathcal{W}(\cdot, \cdot)$ denotes the Wassertein distance between two distributions.
    \label{prob-wb}
\end{problem}
% \vspace{-0.1in}

%Problem \ref{prob-wb} is called \textbf{$k$-sparse Wasserstein Barycenter} if we restrict $|\mathtt{supp}(S)|\le k$. 
{ In Section~\ref{sec-kwb}, we explain why this problem can be regarded as a fair $k$-means clustering.}

\section{Our ``Relax and Merge'' Framework}
\label{sec-frac}
% \vspace{-0.1in}
In general, there are two stages in clustering with fair constraints. The first stage is to find the proper locations of clustering centers, and the second stage is to assign all the client points to the centers by solving LP (\ref{eq-fair}). The previous approaches often use the vanilla $k$-means in the first stage to obtain the location of centers, and then take the fairness into account in the second stage~\citep{bera2019fair, bohm2021algorithms}. In our proposed framework, we aim to shift the consideration of fair constraints to the first stage, so as to achieve a lower approximation factor in the final result. The following theorem is our main result.

\begin{theorem}
    {Given an instance of  Problem~\ref{pro-fair-means}} and a $\rho$-approximate vanilla $k$-means clustering algorithm, there exists 
         an algorithm that can return a fractional $(1+4\rho + O(\epsilon))$ approximate solution for Problem~\ref{pro-fair-means}. Further,
        one can apply a rounding method to transform this fractional solution to an integral one with a constant violation factor while ensuring the cost does not increase.
    % an algorithm that can return a $(1+4\rho + O(\epsilon))$ approximate solution for Problem~\ref{pro-fair-means}  with constant-factor violation of fairness constraints,} where   $\epsilon$ can be any positive number.
    \label{the-approx}
\end{theorem}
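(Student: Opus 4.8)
The plan is to realize the Relax-and-Merge idea in three stages and to charge the final cost against $OPT$ using the parallel-axis identity of Proposition~\ref{lem-centroid} together with the covering guarantee of Definition~\ref{def-centroid-set}. Write $S_{\mathtt{opt}}=\{\tilde s_1,\dots,\tilde s_k\}$ for the optimal integral centers and $C_1,\dots,C_k$ for the induced optimal clusters, so that $OPT=\sum_j\sum_{p\in C_j}\|p-\tilde s_j\|^2$ and $\tilde s_j=\mathtt{Cen}(C_j)$. \textbf{Relax.} First I would build $V=\mathtt{CS}_\epsilon(P)$ and solve the fair-assignment LP~(\ref{eq-fair}) with the full candidate set $S=V$, i.e. with the cardinality-$k$ restriction dropped; let $\phi$ be the resulting fractional assignment and $L$ its cost. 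The key claim is $L\le(1+O(\epsilon))\,OPT$. To prove it I exhibit a feasible competitor: by Definition~\ref{def-centroid-set} each $C_j$ admits some $v_j\in V$ with $\|v_j-\mathtt{Cen}(C_j)\|\le\frac{\epsilon}{3}\sqrt{\frac{1}{|C_j|}\sum_{p\in C_j}\|p-\mathtt{Cen}(C_j)\|^2}$, and assigning all of $C_j$ to $v_j$ is feasible for~(\ref{eq-fair}) since it leaves the fairness ratios of $C_j$ unchanged; by Proposition~\ref{lem-centroid} it costs $\sum_j\sum_{p\in C_j}\|p-\tilde s_j\|^2+\sum_j|C_j|\,\|v_j-\mathtt{Cen}(C_j)\|^2\le(1+\tfrac{\epsilon^2}{9})OPT$, so the LP optimum $L$ is at most this.

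\textbf{Merge.} For each $v$ in the support of $\phi$ let $w(v)=\sum_p\phi(p,v)$ and let $\mu_v$ be the weighted centroid of its fractional cluster. I then run the given $\rho$-approximate vanilla $k$-means algorithm on the weighted set $\{(\mu_v,w(v))\}$, obtaining $k$ centers $S$ and the nearest-center map $\pi(\mu_v)=\mathcal{N}(\mu_v,S)$, and aggregate the assignment by $\phi'(p,s)=\sum_{v:\pi(\mu_v)=s}\phi(p,v)$. Fairness is preserved for free: each $v$ satisfies $\beta_i w(v)\le w^{(i)}(v)\le\alpha_i w(v)$, and summing these inequalities over all $v$ mapped to a common $s$ yields the same constraint at $s$. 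For the cost, Proposition~\ref{lem-centroid} applied to each fractional cluster gives exactly $\mathtt{Cost}(P,S,\phi')=L_\mu+M$, where $L_\mu=\sum_v\sum_p\phi(p,v)\|p-\mu_v\|^2\le L$ and $M=\sum_v w(v)\|\mu_v-\pi(\mu_v)\|^2\le\rho\cdot OPT_k(\{\mu_v\})$ is the $k$-means cost, with $OPT_k(\cdot)$ the optimal vanilla $k$-means cost. To bound $OPT_k(\{\mu_v\})$ I use $S_{\mathtt{opt}}$ as a feasible set of $k$ centers and, for each $p$ with $\phi(p,v)>0$ and optimal center $\tilde s_{j(p)}$, the inequality $\|\mu_v-\tilde s_{j(p)}\|^2\le 2\|\mu_v-p\|^2+2\|p-\tilde s_{j(p)}\|^2$; a weighted average over $p$ then a sum over $v$ yields $OPT_k(\{\mu_v\})\le 2L_\mu+2\,OPT\le(4+O(\epsilon))OPT$. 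Combining, $\mathtt{Cost}(P,S,\phi')\le(1+O(\epsilon))OPT+\rho(4+O(\epsilon))OPT=(1+4\rho+O(\epsilon))OPT$.

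\textbf{Round.} The output so far is a fractional solution with exactly $k$ centers $S$. To obtain an integral one I fix these centers and round $\phi'$, invoking the assignment-rounding of \citet{bera2019fair} (detailed in Section~\ref{sub-frac}): for fixed centers the feasible region of~(\ref{eq-fair}) is essentially a transportation polytope, so a fractional assignment can be rounded to an integral one whose cost does not increase while each fairness constraint is violated by at most a constant additive amount.

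The main obstacle I expect is the \emph{relax} step, and specifically the role of the centroid set. The whole bound hinges on $L\le(1+O(\epsilon))OPT$, which requires $V$ to contain good surrogates for the centroids of the optimal fair clusters—clusters that, unlike in vanilla $k$-means, need not be Voronoi cells. Definition~\ref{def-centroid-set} is exactly what supplies such a surrogate for \emph{every} subset of $P$, so the delicate point is to keep the competitor assignment feasible (achieved by moving an entire $C_j$ to a single $v_j$, which preserves its fairness ratios) while controlling the displacement error through Proposition~\ref{lem-centroid}. A secondary subtlety is the factor $4$: it arises solely from the two applications of $(a+b)^2\le 2a^2+2b^2$ in bounding $OPT_k(\{\mu_v\})$ against $OPT$, and any looser triangle-inequality split (e.g. a weighted $(1+\delta)$ one) would inflate the coefficient, so the exact parallel-axis identity must be used in place of a lossy split to retain the clean $1+4\rho$ guarantee.
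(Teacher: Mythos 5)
Your proposal is correct and follows essentially the same route as the paper: the Relax step reproduces Lemma~\ref{lem-app-ratio-centroid-set} (a feasible competitor sending each optimal cluster $C_j$ to its nearby centroid-set point, controlled by Proposition~\ref{lem-centroid}), and the Merge step reproduces Lemma~\ref{the-ratio} (recentering to $\mu_v$, running the $\rho$-approximate $k$-means on the weighted centroids, and bounding $OPT_k(\{\mu_v\})\le 2L_\mu+2\,OPT$ via the squared triangle inequality against $S_{\mathtt{opt}}$), with the rounding delegated to \citet{bera2019fair} exactly as in Section~\ref{sub-frac}. The only cosmetic difference is that you output the aggregated assignment $\phi'$ directly while Algorithm~\ref{alg-fair} re-solves LP~(\ref{eq-fair}) on $S$ (which can only do better); your explicit check that $\phi'$ remains fair is a point the paper's proof uses implicitly.
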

% \begin{theorem}
%     {Given a rounding algorithm that}, if $T$ is an $\epsilon$-approximate centroid set of $P$, the cost of the solution returned by Algorithm~\ref{alg-fair} is at most $(1+4\rho + O(\epsilon)) \cdot OPT$. 
%     \label{approx-ratio}
% \end{theorem}

%The main procedure is to obtain a fractional solution of Problem~\ref{pro-fair-means}. 
The details  for computing the fractional solution are shown in Algorithm~\ref{alg-fair}.  The set $T$ in Algorithm~\ref{alg-fair}
%may include more than $k$ points. As we mentioned in the observation in section~\ref{sec-intro}, this relaxed center set 
contains the approximate centroids of all the potential clusters with preserving fair constraints.
%Note that when we first obtain the relaxed center set, 
Then we solve a linear program to obtain the relaxed solution $(T,\phi^*_T)$ that also preserves the fair constraints. Because of that, the following $k$-means procedure is able to determine the appropriate locations for the cluster centers of Problem~\ref{pro-fair-means}.

%the imposed information of fair constraints.

%which makes the final center locations better than the previous works. 

\begin{algorithm}[h]
  % \SetAlgoLined
  \caption{\sc{Fractional Fair $k$-means}}
  \label{alg-fair}
  \KwIn{The dataset $P$, $k$, $\alpha$, $\beta$, and $\epsilon>0$}
  \textbf {Relax}: Construct a relaxed solution $T$, \emph{i.e.}, an $\epsilon$-approximate centroid set, such that $\mathtt{Cost}(P, T, \phi_T^*) \le (1+O(\epsilon)) \cdot OPT$ (see Lemma~\ref{lem-app-ratio-centroid-set}).
  % (we will discuss how to obtain such a $T$ in the following analysis).
   Here, we relax the size constraint of centers to be polynomial of $n$ rather than exactly $k$, so as to  achieve a sufficiently low cost.
   % \begin{enumerate}
   
   Solve LP (\ref{eq-fair}) on $T$ to obtain the optimal assignment matrix $\phi_T^*$. $T$ and $\phi_T^*$ can be viewed as a relaxed solution for $(\alpha, \beta)$-fair $k$-means, \emph{i.e.}, the number of centers may be more than $k$, and meanwhile, the cost is bounded and the fairness constraints are also preserved.
   
   Adjust the location of $T$. For each $t\in T$, we update the location of $t$ to be the corresponding cluster centroid  $\pi(t) = \frac{\sum_{p\in P}p\cdot \phi^*_T(p,t)}{w(t)}$. The adjusted $T$ is denoted by $\pi(T)$.
   % \end{enumerate}

\textbf{Merge:} 
%Merge $T$ to obtain the supports of final solution.
Run a $\rho$-approximate $k$-means algorithm on $\pi(T)$ to obtain centers set $S$.
Then, solve LP (\ref{eq-fair}) on $S$ to obtain the optimal assignment matrix $\phi_S^*$.
  
  \Return{S and $\phi_S^*$}
\end{algorithm}

\subsection{Algorithm for \texorpdfstring{$(\alpha,\beta)$}{} Fair \texorpdfstring{$k$}{}-means Problem}
\label{sub-frac}
% \vspace{-0.1in}

% Since we allow the assignment matrix be fractional, the fair assignment problem can be solved using LP technique as (\ref{eq-fair}), if $S$, the location of clustering centers, is given. Hence, the most difficult part of fractional $(\alpha, \beta)$ fair $k$-means is how to find the location of solution $S$. The previous works use some simple techniques, \emph{e.g.}, \citep{bera2019fair} uses vanilla $k$-means on $P$ and \citep{bohm2021algorithms} runs vanilla $k$-means on every group then choose the best one as the location of $S$. So a natural question is: \emph{is there a smarter way to find the location of the solution $S$?}

In this section, we mainly focus on the fractional version of $(\alpha, \beta)$-fair $k$-means problem. More precisely, we allow the value of the assignment function $\phi_S$ to be a real number in $[0,1]$ rather than $\{0,1\}$. To prove Theorem~\ref{the-approx}, we need the following lemmas first.
Specifically, Lemma~\ref{the-ratio} provides the bound for the cost from the merged solution $S$; Lemma~\ref{lem-app-ratio-centroid-set} shows that the $\epsilon$-approximate centroid set provides a satisfied relaxed solution with a cost  no more than $(1+O(\epsilon))OPT$. Combining with the rounding methods, Theorem~\ref{the-approx} can be obtained.

\begin{lemma}
Let $\eta$ be any positive number.    If we suppose $\mathtt{Cost}(P, T, \phi_T^*) \le \eta \cdot OPT$, then the solution $(S, \phi_S^*)$ returned by Algorithm~\ref{alg-fair} is an $\big(\eta+(2\eta + 2)\rho\big)$-approximate solution for Problem \ref{pro-fair-means}.
    \label{the-ratio}
\end{lemma}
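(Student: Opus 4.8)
The plan is to exhibit an explicit feasible assignment for the returned center set $S$ and bound its cost, since the LP-optimal $\phi_S^*$ can only do better. First I would define the \emph{merged} assignment $\hat\phi_S$ obtained by routing the relaxed solution through $S$: whenever $\phi_T^*$ sends weight from a point $p$ to a relaxed center $t$, I reroute that weight to $s(t):=\mathcal{N}(\pi(t),S)$, the center of $S$ nearest to the adjusted location $\pi(t)$; concretely $\hat\phi_S(p,s)=\sum_{t:\,s(t)=s}\phi_T^*(p,t)$. The first thing to check is that $\hat\phi_S$ is feasible for LP~(\ref{eq-fair}) on $S$: it conserves each point's total weight, and since the fairness constraints are linear in the cluster weights, summing the per-$t$ inequalities $\beta_i w(t)\le w^{(i)}(t)\le\alpha_i w(t)$ over all $t$ merged into a given $s$ yields exactly $\beta_i w(s)\le w^{(i)}(s)\le\alpha_i w(s)$. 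Hence $\mathtt{Cost}(P,S,\phi_S^*)\le\mathtt{Cost}(P,S,\hat\phi_S)$.

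Next I would compute $\mathtt{Cost}(P,S,\hat\phi_S)$ exactly using Proposition~\ref{lem-centroid}. Because $\pi(t)$ is by construction the weighted centroid of the points assigned to $t$ under $\phi_T^*$, applying the proposition cluster-by-cluster with base point $s(t)$ gives the clean identity
\begin{equation*}
\mathtt{Cost}(P,S,\hat\phi_S)=\mathtt{Cost}(P,\pi(T),\phi_T^*)+\sum_{t\in T} w(t)\,\mathtt{dist}(\pi(t),S)^2,
\end{equation*}
i.e.\ the rerouting cost splits into the relaxed cost on the adjusted centers plus exactly the weighted vanilla $k$-means cost of $S$ on the point set $\pi(T)$. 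The first term I would bound by $\eta\cdot OPT$: moving each $t$ to its centroid $\pi(t)$ only decreases cost (Proposition~\ref{lem-centroid} again), so $\mathtt{Cost}(P,\pi(T),\phi_T^*)\le\mathtt{Cost}(P,T,\phi_T^*)\le\eta\cdot OPT$. The second term is at most $\rho\cdot OPT_{\mathrm{km}}(\pi(T))$ by the guarantee of the $\rho$-approximate $k$-means algorithm, where $OPT_{\mathrm{km}}(\pi(T))$ is the optimal vanilla $k$-means cost of the weighted set $\pi(T)$.

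The crux is bounding $OPT_{\mathrm{km}}(\pi(T))$ in terms of the fair optimum $OPT$. Since $S_{\mathtt{opt}}$ has exactly $k$ centers it is a candidate $k$-means solution, so $OPT_{\mathrm{km}}(\pi(T))\le\sum_t w(t)\,\mathtt{dist}(\pi(t),S_{\mathtt{opt}})^2$. To relate this to $OPT$ I would route $\pi(t)\to p\to\sigma(p)$, where $\sigma(p)\in S_{\mathtt{opt}}$ is the center to which $p$ is assigned in the optimal integral solution, and average over $p$ with the normalized weights $\phi_T^*(p,t)/w(t)$ (which sum to $1$ for fixed $t$). Using the elementary inequality $\|a-c\|^2\le 2\|a-b\|^2+2\|b-c\|^2$ then gives
\begin{equation*}
\sum_{t\in T} w(t)\,\mathtt{dist}(\pi(t),S_{\mathtt{opt}})^2\le 2\,\mathtt{Cost}(P,\pi(T),\phi_T^*)+2\sum_{p\in P}\|p-\sigma(p)\|^2=2\,\mathtt{Cost}(P,\pi(T),\phi_T^*)+2\,OPT,
\end{equation*}
since $\sum_t\phi_T^*(p,t)=1$ and $\sum_p\|p-\sigma(p)\|^2=OPT$. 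Substituting $\mathtt{Cost}(P,\pi(T),\phi_T^*)\le\eta\cdot OPT$ and collecting terms yields $\mathtt{Cost}(P,S,\phi_S^*)\le(1+2\rho)\eta\,OPT+2\rho\,OPT=\big(\eta+(2\eta+2)\rho\big)OPT$, as claimed. I expect the main obstacle to be this last display: one must take the intermediate point to be the data point $p$ (not a center) so that the relaxed triangle inequality converts the $k$-means detour through $\pi(T)$ into a combination of the already-controlled relaxed cost and the genuine fair optimum $OPT$; choosing the constant $2$ (equivalently $\gamma=1$ in $\|a-c\|^2\le(1+\gamma)\|a-b\|^2+(1+1/\gamma)\|b-c\|^2$) is exactly what makes the two contributions balance into the stated ratio.
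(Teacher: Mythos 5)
Your proposal is correct and follows essentially the same route as the paper's proof: reroute the relaxed assignment $\phi_T^*$ through $s(t)=\mathcal{N}(\pi(t),S)$, split the resulting cost via Proposition~\ref{lem-centroid} into the relaxed cost on $\pi(T)$ (bounded by $\eta\cdot OPT$) plus the weighted $k$-means cost of $S$ on $\pi(T)$, and bound the latter by comparing against $S_{\mathtt{opt}}$ and applying the relaxed triangle inequality through the data points $p$. The only substantive difference is that you explicitly verify that the merged assignment $\hat\phi_S$ is feasible for LP~(\ref{eq-fair}) on $S$ (so that optimality of $\phi_S^*$ applies), a step the paper leaves implicit; this is a worthwhile addition but not a different argument.
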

%% \vspace{-0.1in}
\begin{proof}
% \begin{observation}
%     Every $\tilde{s} \in S_{opt}$ is the centroid of $N_O(V)$, \emph{i.e.}, 
% %$o = \frac{\sum_{p\in N_O(V)}p\cdot \phi_{S_{opt}}(p,\tilde{s})}{\sum_{p\in N_O(V)}}\phi_{S_{opt}}(p)$    
%     { $o = \frac{\sum_{p\in N_O(V)}p\cdot \phi_{S_{opt}}(p,\tilde{s})}{\sum_{i=1}^{m} w_O^{(i)}(V)}$}.
% \end{observation}
According to the definition of fractional fair $k$-means problem, the cost can be written as 

\begin{equation}
    \begin{aligned}
        \mathtt{Cost}(P, S, \phi_S^*)
        &= \sum_{p\in P}\sum_{s\in S} \Vert p-s\Vert ^2 \phi_S^*(p,s). \\
        \end{aligned}
        \end{equation}
        
Now we consider another assignment strategy: we firstly assign $P$ to $T$ according to $\phi_T^*$ ( recall that $\phi_T^*$ is the optimal fractional assignment matrix from $P$ to $T$), and then we assign every weighted point in $T$ to some $s\in S$ such that $s$ is closest point to $\pi(t)$. Since $\phi_S^*$ is the optimal assignment matrix from $P$ to $S$, the cost of this assignment strategy should have:
\begin{equation}
    \begin{aligned}
       % &\le 
        % \sum_{i=1}^{m}\sum_{p\in P^{(i)}}\sum_{t\in T} \Vert p-\mathcal{N}(\pi(t), S)\Vert ^2 \phi_T^*(p,t) \\
        % \sum_{p\in P}\sum_{t\in T} \Vert p-\mathcal{N}(\pi(t), S)\Vert ^2\phi_T^*(p,t)&\geq \mathtt{Cost}(P, S, \phi_S^*).\\
   \sum_{p\in P}\sum_{t\in T} \Vert p-\mathcal{N}(\pi(t), S)\Vert ^2\phi_T^*(p,t)&\geq    \mathtt{Cost}(P, S, \phi_S^*).\label{for-lem1-1}
    \end{aligned}
\end{equation}
Since $\pi(t)$ is the centroid of the weighted points assigned to $t$, according to Proposition~\ref{lem-centroid}, we know the left-hand side of (\ref{for-lem1-1}) should have the upper bound
        \begin{equation}
    \begin{aligned}
        &\sum_{t\in T}\Big[\sum_{p\in P} \Vert p-\pi(t)\Vert ^2\phi_T^*(p,t) + \Vert \pi(t) - \mathcal{N}(\pi(t), S)\Vert ^2w(t)\Big]\\
        &= \underbrace{\sum_{p\in P}\sum_{t\in T} \Vert p-\pi(t)\Vert ^2\phi_T^*(p,t)}_{\text{(a)}} + \underbrace{\sum_{t\in T}\Vert \pi(t) - \mathcal{N}(\pi(t), S)\Vert ^2w(t)}_{(b)}.
    \end{aligned}
\end{equation}
Then we bound (a) and (b) separately.
\begin{equation}
    \begin{aligned}
        (a)=\sum_{p\in P}\sum_{t\in T} \Vert p-\pi(t)\Vert ^2\phi_T^*(p,t) \le \sum_{p\in P}\sum_{t\in T} \Vert p-t\Vert ^2\phi_T^*(p,t) \le \eta \cdot OPT.
    \end{aligned}
\end{equation}
% The first inequality holds because $\pi(t)$ is the centroid of the weighted points assigned to $t$, so that $\pi(t)$ minimizes the sum of the squared distances. 
The first inequality holds because $\pi(t)$ is the centroid of the weighted points assigned to $t$, minimizing the weighted sum of the squared distances between them.
% from points in $C_t$. 
The second inequality holds because $\mathtt{Cost}(P, T, \phi_T^*) \le \eta \cdot OPT$. 
Next, we focus on (b). 
Suppose $S_{means}$ is the optimal $k$-means solution of $T$. Then we have:
% we use $nrst_{S^*} (t)$ denote the nearest center of $t$ in $S^*$.
\begin{equation}
    \begin{aligned}
        (b)&=\sum_{t\in T}\Vert \pi(t) - \mathcal{N}(\pi(t), S)\Vert ^2w(t) \le \rho \sum_{t\in T}\Vert \pi(t) - \mathcal{N}(\pi(t), S_{means})\Vert ^2w(t) \\
        &= \rho \sum_{p\in P}\sum_{t\in T} \Vert \pi(t) - \mathcal{N}(\pi(t), S_{means})\Vert ^2 \phi_T^*(p,t)\\
        &= \rho \sum_{p\in P}\sum_{t\in T} \big[\sum_{\tilde{s} \in S_{opt}} \Vert \pi(t) - \mathcal{N}(\pi(t), S_{means})\Vert ^2\phi_{S_{opt}}^*(p,\tilde{s})\big]\phi_T^*(p,t)\\
        &\le \rho \sum_{p\in P}\sum_{t\in T} \big[\sum_{\tilde{s} \in S_{opt}} \Vert \pi(t) - \tilde{s}\Vert ^2\phi_{S_{opt}}^*(p,\tilde{s})\big]\phi_T^*(p,t).
\end{aligned}
\end{equation}
Further, according to squared triangle inequality, we have

\begin{equation}
    \begin{aligned}
        (b)&\le \rho \sum_{p\in P}\sum_{t\in T} \Big[\sum_{\tilde{s} \in S_{opt}} \big[ \Vert \pi(t) - p\Vert  + \Vert p - \tilde{s}\Vert  \big]^2\phi_{S_{opt}}^*(p,\tilde{s})\Big]\phi_T^*(p,t)\\
        &\le \rho \sum_{p\in P}\sum_{t\in T} \sum_{\tilde{s} \in S_{opt}}2\Vert \pi(t) - p\Vert ^2\phi_{S_{opt}}^*(p,\tilde{s})\phi_T^*(p,t) \\&+ \rho\sum_{p\in P}\sum_{t\in T} \sum_{\tilde{s} \in S_{opt}}2\Vert p - \tilde{s}\Vert ^2\phi_{S_{opt}}^*(p,\tilde{s})\phi_T^*(p,t) \\
        & = 2\rho \sum_{p\in P}\sum_{t\in T} \Vert \pi(t) - p\Vert ^2\phi_T^*(p,t) + 2\rho\sum_{p\in P}\sum_{\tilde{s}\in S_{opt}} \Vert p - \tilde{s}\Vert ^2\phi_{S_{opt}}^*(p,\tilde{s}). 
        % \le (2\eta + 2)\rho \cdot OPT.
    \end{aligned}
\end{equation}
The last equality holds because for any $p\in P$, $\sum_{\tilde{s} \in S_{opt}} \phi_{S_{opt}}^*(p,\tilde{s}) = 1$.
The first term is exactly $2\rho$ times of (a) and the second term equals $2\rho \cdot OPT$.
Through combining (a) and (b),  we can obtain an approximation factor of $\eta + (2\eta + 2)\rho$.
\end{proof}
% \vspace{-0.1in}
Algorithm~\ref{alg-fair} reduces the fair $k$-means problem to computing the  set $T$. The following lemma shows that an $\epsilon$-approximate centroid set is a good candidate for $T$.
%with $\mathtt{Cost}(P, T, \phi_T^*) \le (1+\epsilon) OPT$, even though the clusters formed by $\tilde{\phi}_{S_{opt}}$ do not obey Voronoi diagram.
% \begin{lemma}
%     % Given a point set $P\subset \mathbb{R}^d$, $k\ge 2$, and $T$ be an $\epsilon$-approximate centroid set of $P$. Then for any facility set $S$ and its corresponding integral assignment matrix $\phi_S$, there always exists $T' = \{t_i \in T | i=1,\cdots, k\}$ such that
%     % $\mathtt{Cost}(P, T', \phi_T) \le (1+\epsilon) \mathtt{Cost}(P, S, \phi_S)$.
%     Given a point set $P\subset \mathbb{R}^d$, $k\ge 2$, and $T$ be an $\epsilon$-approximate centroid set of $P$. Then for optimal center set $S_{opt}$ and its corresponding integral assignment matrix $\tilde{\phi}_{S_{opt}}$, there always exists $T' = \{t_i \in T | i=1,\cdots, k\}$ such that
%     $\mathtt{Cost}(P, T', \phi^*_{T'}) \le (1+\epsilon) OPT$.
%     \label{pro-centroid}
% \end{lemma}
% Lemma \ref{pro-centroid} shows that we can find "good" enough solution for $k$-means problem from $\epsilon$-approximate centroid set. 
%And we have the following lemma.
\begin{lemma}
    If $T$ is an $\epsilon$-approximate centroid set of $P$, then {$\mathtt{Cost}(P, T, \phi_T^*) \le (1+O(\epsilon)) OPT$}.
    \label{lem-app-ratio-centroid-set}
\end{lemma}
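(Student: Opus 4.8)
The plan is to exhibit an explicit feasible assignment from $P$ to $T$ whose cost is at most $(1+O(\epsilon))\,OPT$; since $\phi_T^*$ is by definition the minimizer of LP~(\ref{eq-fair}) for the fixed center set $T$, any such feasible assignment upper-bounds $\mathtt{Cost}(P, T, \phi_T^*)$. I would start from the optimal integral fair solution $S_{\mathtt{opt}}=\{\tilde{s}_1,\dots,\tilde{s}_k\}$ with its induced clusters $C_1,\dots,C_k$, so that $OPT=\sum_{j=1}^{k}\sum_{p\in C_j}\Vert p-\tilde{s}_j\Vert^2$ (each point is fully assigned to one center in the integral solution).

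For each cluster $C_j$, I would invoke Definition~\ref{def-centroid-set}: because $C_j\subseteq P$ is nonempty, there is a point $v_j\in T$ with $\Vert v_j-\mathtt{Cen}(C_j)\Vert\le\frac{\epsilon}{3}\sqrt{\frac{1}{|C_j|}\sum_{p\in C_j}\Vert p-\mathtt{Cen}(C_j)\Vert^2}$. I then define a candidate assignment $\phi_T$ that routes the entire weight of every point of $C_j$ to $v_j$. Bounding its cost is routine: for a single cluster, Proposition~\ref{lem-centroid} applied with $a=v_j$ and $Q=C_j$ gives $\sum_{p\in C_j}\Vert p-v_j\Vert^2=\sum_{p\in C_j}\Vert p-\mathtt{Cen}(C_j)\Vert^2+|C_j|\cdot\Vert v_j-\mathtt{Cen}(C_j)\Vert^2$. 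The centroid-set inequality controls the second term by $\frac{\epsilon^2}{9}\sum_{p\in C_j}\Vert p-\mathtt{Cen}(C_j)\Vert^2$, and since the centroid minimizes the sum of squared distances we have $\sum_{p\in C_j}\Vert p-\mathtt{Cen}(C_j)\Vert^2\le\sum_{p\in C_j}\Vert p-\tilde{s}_j\Vert^2$. Summing over $j$ yields $\mathtt{Cost}(P, T, \phi_T)\le(1+\tfrac{\epsilon^2}{9})\,OPT=(1+O(\epsilon))\,OPT$.

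The part that requires the most care — and the one I would emphasize — is verifying that $\phi_T$ is a \emph{feasible} point of LP~(\ref{eq-fair}), i.e., that it still respects the fairness constraints. The key observation is that $\phi_T$ merely relabels the center of each optimal cluster, so a center $v_j$ receiving exactly one cluster $C_j$ inherits the fairness of $\tilde{s}_j$, namely $\beta_i\,|C_j|\le|C_j\cap P^{(i)}|\le\alpha_i\,|C_j|$ for all $i$. The only nontrivial case is a collision, where several optimal clusters are approximated by the same point $v\in T$; here I would note that the fairness constraint $\beta_i w(s)\le w^{(i)}(s)\le\alpha_i w(s)$ is additive over the colliding clusters, so summing the per-cluster inequalities shows that $v$ remains fair.

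Finally, I would close the argument by noting that the Relax step deliberately permits $|T|$ to exceed $k$, so $\phi_T$ is a genuine feasible solution of LP~(\ref{eq-fair}) on the center set $T$; optimality of $\phi_T^*$ then gives $\mathtt{Cost}(P, T, \phi_T^*)\le\mathtt{Cost}(P, T, \phi_T)\le(1+O(\epsilon))\,OPT$, as claimed. The whole proof is short once feasibility is handled; the collision case is the only genuine subtlety, everything else being a direct application of Proposition~\ref{lem-centroid} and the defining inequality of the $\epsilon$-approximate centroid set.
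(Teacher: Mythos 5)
Your proposal is correct and follows essentially the same route as the paper's proof: pick, for each optimal cluster $C_j$, a nearby point of $T$ guaranteed by Definition~\ref{def-centroid-set}, route all of $C_j$ to it, bound the cost via Proposition~\ref{lem-centroid} and the centroid-set inequality, and conclude by optimality of $\phi_T^*$. The only difference is that you explicitly verify feasibility of the constructed assignment under the fairness constraints (including the collision case, which works because the optimal clusters are disjoint and the constraints are preserved under unions of fair clusters) — a point the paper's proof leaves implicit — and you use the inequality $\sum_{p\in C_j}\Vert p-\mathtt{Cen}(C_j)\Vert^2\le\sum_{p\in C_j}\Vert p-\tilde{s}_j\Vert^2$ where the paper asserts the equality $\tilde{s}_j=\mathtt{Cen}(C_j)$; both are fine.
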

% \vspace{-0.1in}
\begin{proof}
According to Definition~\ref{def-centroid-set}, let $t_i \in T$ denote the point such that $\Vert t_i - \mathtt{Cen}(C_i) \Vert \le \frac{\epsilon}{3}\sqrt{\frac{1}{|C_i|}\sum_{p\in C_i}\Vert p-\mathtt{Cen}(C_i) \Vert^2}$. Let $T' = \{t_1, \cdots, t_k \}$. 
%Since $T$ is the $\epsilon$-approximate centroid set of $P$, $T'$ always exists according to the .
    A key observation is that each optimal center $\tilde{s}_i$ is always the centroid of $C_i$, \emph{i.e.}, $\mathtt{cen}(C_i) = \tilde{s}_i$, so we have $\Vert t_i - \mathtt{Cen}(C_i) \Vert^2 \le \frac{\epsilon^2}{9|C_i|}\sum_{p\in C_i}\Vert p-\tilde{s}_i \Vert^2 = \frac{\epsilon^2}{9|C_i|}OPT_i$, where $OPT_i = \sum_{p\in C_i}\Vert p - \tilde{s}_i \Vert^2$.

    If we assign all points of $C_i$ to $t_i$, the cost of every $C_i$ can be written as $\sum_{p\in C_i} \Vert t_i - p \Vert^2       =$
    \begin{eqnarray}
        \begin{aligned}
            &\sum_{p\in C_i} \Vert t_i - \tilde{s}_i \Vert^2 + \sum_{p\in C_i} \Vert p-\tilde{s}_i \Vert^2         \le \frac{\epsilon^2}{9}OPT_i + OPT_i        = (1+O(\epsilon))OPT_i.
        \end{aligned}
    \end{eqnarray}
    The first equality holds due to Proposition~\ref{lem-centroid}. Since $\phi^*_{T'}$ is the optimal assignment matrix of $T'$, $\mathtt{Cost}(P, T', \phi^*_{T'}) \le  \sum_{i=1}^{k}\sum_{p\in C_i} \Vert t_i - p \Vert^2 \le (1+O(\epsilon))\sum_{i=1}^kOPT_i\le (1+O(\epsilon))OPT$.
    Finally, since $T'$ is a subset of $T$, we have $\mathtt{Cost}(P, T, \phi^*_T) \le \mathtt{Cost}(P, T', \phi^*_{T'}) \le (1+O(\epsilon)) OPT$.
%     Since $T$ is the $\epsilon$-approximate centroid set of $P$, such a set of $t_i$ can always be found in $T$.
%     A key observation is that every $\tilde{s} \in S_{opt}$ is the centroid of $N_O(V)$, \emph{i.e.}, 
% %$o = \frac{\sum_{p\in N_O(V)}p\cdot \phi_{S_{opt}}(p,\tilde{s})}{\sum_{p\in N_O(V)}}\phi_{S_{opt}}(p)$    
%     { $o = \frac{\sum_{p\in N_O(V)}p\cdot \phi_{S_{opt}}(p,\tilde{s})}{w_O(V)}$} since the centroid minimized the sum of the squared distances. Hence, according to Lemma \ref{pro-centroid}, there exists a subset $O'\subseteq T$ that include at most $k$ point satisfying $\mathtt{Cost}(P, O', \phi^*_O) \le (1+\epsilon)OPT$. Therefore, we have $\mathtt{Cost}(P, T, \phi_T^*) \le \mathtt{Cost}(P, O', \phi_{O'}^*) \le \mathtt{Cost}(P, O', \phi_{S_{opt}}^*) \le (1+\epsilon) OPT $.
\end{proof}
% \begin{observation}
%     Every $\tilde{s} \in S_{opt}$ is the centroid of $N_O(V)$, \emph{i.e.}, 
% %$o = \frac{\sum_{p\in N_O(V)}p\cdot \phi_{S_{opt}}(p,\tilde{s})}{\sum_{p\in N_O(V)}}\phi_{S_{opt}}(p)$    
%     { $o = \frac{\sum_{p\in N_O(V)}p\cdot \phi_{S_{opt}}(p,\tilde{s})}{\sum_{i=1}^{m} w_O^{(i)}(V)}$}.
% \end{observation}
% \begin{lemma}
%     There exists a subset $\tilde{T} \subseteq T'$ such that $\mathtt{supp}(T) \le k$ and $\mathtt{Cost}(P, \tilde{T}, \phi_T^*) \le (1+\epsilon) OPT$.
% \end{lemma}
% Combining Lemma~\ref{the-ratio} and Lemma~\ref{lem-app-ratio-centroid-set}, we can immediately arrive at the Theorem~\ref{approx-ratio}.
% \vspace{-0.1in}
Through combining Lemma~\ref{the-ratio} and Lemma~\ref{lem-app-ratio-centroid-set}, we can immediately obtain Lemma~\ref{cor-approx}.
% {\color{black}regarding $\rho$ as a bounded constant number.}
% Combining Lemma~\ref{the-ratio} and Lemma~\ref{lem-app-ratio-centroid-set}, we immediately obtain Lemma~\ref{cor-approx} regarding $\rho$ as a constant.
% achieve a ($1+4\rho+\epsilon$)-approximate solution of the fractional $(\alpha,\beta)$ fair $k$-means.

% \begin{corollary}
%     Equipped with the PTAS of vanilla $k$-means algorithm by \citet{cohen2019local} and $\epsilon$-approximate centroid set by \citet{matouvsek2000approximate} in low dimensional Euclidean space, the cost of the solution returned by Algorithm~\ref{alg-fair} is at most $(5 + O(\epsilon))OPT$. And the running time is $poly(n, k)$.
% \end{corollary}
{
\begin{lemma}
    Equipped with the $\epsilon$-approximate centroid set by \citet{matouvsek2000approximate}, the cost of the solution returned by Algorithm~\ref{alg-fair} is at most $(1+4\rho + O(\epsilon))OPT$. Furthermore, by utilizing the PTAS of vanilla $k$-means algorithm, the cost of the solution is at most $(5 + O(\epsilon))OPT$.
    \label{cor-approx}
\end{lemma}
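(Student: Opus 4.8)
The plan is to obtain Lemma~\ref{cor-approx} as a direct corollary of the two preceding lemmas, so the work is almost entirely bookkeeping on the asymptotic constants. First I would instantiate $T$ as the $\epsilon$-approximate centroid set of \citet{matouvsek2000approximate}, whose existence, size $O(|P|\epsilon^{-d}\log(1/\epsilon))$, and construction time are recalled in Remark~\ref{rem-complexity-matouvsek}. By Lemma~\ref{lem-app-ratio-centroid-set}, this choice guarantees $\mathtt{Cost}(P, T, \phi_T^*) \le (1+O(\epsilon))\,OPT$, so the hypothesis of Lemma~\ref{the-ratio} is met with $\eta = 1 + O(\epsilon)$.

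Next I would substitute this value of $\eta$ into the approximation factor $\eta + (2\eta+2)\rho$ supplied by Lemma~\ref{the-ratio}. Expanding, $(2\eta+2)\rho = (4 + O(\epsilon))\rho$, and since $\rho$ is a fixed constant (the ratio of the off-the-shelf vanilla $k$-means routine), the term $O(\epsilon)\rho$ is itself $O(\epsilon)$. Hence the factor simplifies to $(1+O(\epsilon)) + 4\rho + O(\epsilon) = 1 + 4\rho + O(\epsilon)$, which establishes the first claim.

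For the second claim, I would specialize $\rho$ to the guarantee of a PTAS for vanilla $k$-means (e.g., \citet{cohen2019local}), namely $\rho = 1 + O(\epsilon)$. Plugging this into the factor already obtained gives $1 + 4(1+O(\epsilon)) + O(\epsilon) = 5 + O(\epsilon)$, as desired.

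The only points that require a little care — and the closest thing to an obstacle — are the handling of the asymptotic notation and the scope of the $k$-means guarantee. One must verify that absorbing $O(\epsilon)\rho$ into $O(\epsilon)$ is legitimate, which is fine precisely because $\rho = O(1)$, and that running the $\rho$-approximate $k$-means on the enlarged instance $\pi(T)$, whose size is polynomial in $n$ rather than exactly $k$, still delivers the claimed ratio against the optimal $k$-means solution of $\pi(T)$. The latter is exactly the quantity denoted $S_{means}$ inside the proof of Lemma~\ref{the-ratio}, so no new argument is needed. Consequently the lemma follows immediately by chaining the two prior lemmas and substituting the two concrete values of $\eta$ and $\rho$.
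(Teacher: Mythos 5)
Your proposal is correct and follows exactly the same route as the paper, which obtains this lemma by directly combining Lemma~\ref{the-ratio} (with $\eta = 1+O(\epsilon)$) and Lemma~\ref{lem-app-ratio-centroid-set}, then substituting $\rho = 1+O(\epsilon)$ for the PTAS case. Your additional remarks on absorbing $O(\epsilon)\rho$ into $O(\epsilon)$ and on the role of $S_{means}$ are sound and consistent with the paper's argument.
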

}
%\begin{remark}
 %   In practice, the cost of assigning $P$ to $\epsilon$-approximate centroid set $T$ is much smaller than $(1+\epsilon)OPT$. In other words, the factor $\eta$ could less than $1+\epsilon$, which yields nearly $2\rho$ approximate factor if $\eta$ is small.
%\end{remark}

% {\color{green}
% \begin{remark}
%     The time complexity can be further reduced by using $\epsilon$-coreset for $(\alpha, \beta)$-fair $k$-means technique, \emph{e.g.}, the size of the coreset of \citep{braverman2022power} is no more than $\tilde{O}(\frac{k^3}{\epsilon^6})$. If the running time of constructing a coreset is $\mathcal{T}_{core}$ (near linear in $n$), {the overall time complexity can be reduced to $\mathcal{T}_{core} + \tilde{O}(\frac{k^3}{\epsilon^6}\log \frac{k^3}{\epsilon^6} + \frac{k^3}{\epsilon^{6+d}}\log (1/\epsilon)) + \mathcal{T}_{LP} + \tilde{O}(\frac{k^4}{\epsilon^6}) + \mathcal{T}_{means}$.} 
%     %Besides, since the the number of variables of linear programming is less, $\mathcal{T}_{LP}$ can also be reduced.
% \end{remark}
% }

\textbf{Rounding for integral solution.} Note that Lemma~\ref{cor-approx} only guarantees a fractional solution. Recall the ``violation factor'' introduced in Section~\ref{sec-pre}. According to the rounding method  proposed in \citep{bera2019fair}, a fractional solution of Problem~\ref{pro-fair-means} can be rounded to be integral with $(3\Delta + 4)$ violation, where $\Delta$ is the maximum number of groups a point can join in (e.g., if a point can belong to three groups, $\Delta$ should be equal to $3$). Their main idea is to reduce the fair assignment problem to the \emph{minimum degree-bounded matroid basis} (MBDMB) problem, and then solve the MBDMB by iteratively solving a linear program (LP). %{(e.g., if the groups are mutually disjoint, $\Delta$ should be $1$)}. 
In the current article, we further propose a new rounding method that can improve this violation factor to ``$2$'' when assuming $\Delta = 1$, \emph{i.e.}, the groups are mutually \textbf{disjoint}, and the each point belongs to exactly one group (if using the method of \citep{bera2019fair}, the factor should be $7$). 
Actually, it is natural to assume that the groups are disjoint, e.g., each person may belong to one race. 
Fair clustering problem in disjoint groups has also been  studied in  
\citet{bercea2018cost, wu2022new, chierichetti2017fair}. 
Our key idea is building a \textbf{``hub-guided'' minimum cost circulation} problem. Roughly speaking, we utilize a set of carefully designed ``hubs'' in a transportation network, for guiding the integral fair matching  between the input points and the obtained cluster centers. We show the result in Lemma~\ref{lem-rounding}, and place the proof to Appendix~\ref{rounding-alg} due to the space limit. 
%Under this assumption, we propose a new rounding algorithm, which achieves a $2$-violation rounding for any fractional $(\alpha,\beta)$-fair $k$-means solution and 
%while the cost of the solution does not increase. We reduce the  $(\alpha,\beta)$-fair $k$-means to a Minimum Cost Flow Problem (MCFP) and further gather an optimal integral solution based on a fractional solution. This algorithm is inspired by a technique in \citep{Ding2015soda} for solving the $l$-diversity $k$-means problem, while having some fundamental differences with their method. Due to the space limit, we place Lemma~\ref{lem-rounding} here and leave the details in our appendix.

\begin{lemma}
\label{lem-rounding}
    If the groups are mutually disjoint, one can round the fractional solution returned by Algorithm~\ref{alg-fair} to be integral with at most $2$-violation, while the cost does not increase.
\end{lemma}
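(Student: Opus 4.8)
The plan is to cast the rounding as a minimum-cost flow (equivalently, a minimum-cost circulation) with integer lower and upper capacities on every edge, and to exploit the total unimodularity of such networks so that an integral optimum always exists and can be read off directly. Because the groups are disjoint, every point $p$ belongs to a unique group, which lets me build a clean layered ``hub-guided'' network: a source, one node per client point, one hub node $h_{i,s}$ for each (group, center) pair, one node per center $s\in S$, and a sink. I would place a capacity-$[1,1]$ edge from the source to each $p$ (forcing each point to route exactly one unit of flow), an edge of cost $\Vert p-s\Vert^2$ and capacity $[0,1]$ from $p$ to each hub $h_{i,s}$ with $i$ the group of $p$, a zero-cost edge from $h_{i,s}$ to $s$ of capacity $[\lfloor w^{(i)}(s)\rfloor,\lceil w^{(i)}(s)\rceil]$, and a zero-cost edge from $s$ to the sink of capacity $[\lfloor w(s)\rfloor,\lceil w(s)\rceil]$. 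The hubs $h_{i,s}$ enforce that the rounded number of group-$i$ points at $s$ stays within one of its fractional value $w^{(i)}(s)$, while the center-to-sink edges simultaneously pin the total rounded load at $s$ to within one of $w(s)$; controlling both quantities at once is exactly what disjointness buys us.

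First I would verify feasibility by checking that $\phi_S^*$ itself induces a feasible fractional flow: sending $\phi_S^*(p,s)$ along $p\to h_{i,s}\to s$ produces hub flow $w^{(i)}(s)\in[\lfloor w^{(i)}(s)\rfloor,\lceil w^{(i)}(s)\rceil]$, center flow $w(s)\in[\lfloor w(s)\rfloor,\lceil w(s)\rceil]$, unit conservation at each $p$ since $\sum_{s}\phi_S^*(p,s)=1$, and total cost exactly $\mathtt{Cost}(P,S,\phi_S^*)$. Since all capacities and lower bounds are integers, the integrality theorem for minimum-cost flow guarantees an integral optimal flow of cost at most this feasible fractional cost. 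Reading the integral flow on the $p\to h_{i,s}$ edges yields an integral assignment $\tilde{\phi}_S$ (each point routed to a single center) whose cost does not exceed $\mathtt{Cost}(P,S,\phi_S^*)$, which is the ``cost does not increase'' claim.

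It then remains to convert the capacity bounds into the $2$-violation guarantee. Writing $n^{(i)}(s)$ for the integral group-$i$ count at $s$ and $N(s)=\sum_{i}n^{(i)}(s)$ for its total load, the edge capacities give $|n^{(i)}(s)-w^{(i)}(s)|\le 1$ and $|N(s)-w(s)|\le 1$. Combining these with the fractional fairness $\beta_i w(s)\le w^{(i)}(s)\le\alpha_i w(s)$ and the fact that $\alpha_i,\beta_i\in[0,1]$, I would chain $n^{(i)}(s)\le w^{(i)}(s)+1\le\alpha_i w(s)+1\le\alpha_i(N(s)+1)+1\le\alpha_i N(s)+2$ on the upper side, and symmetrically $n^{(i)}(s)\ge w^{(i)}(s)-1\ge\beta_i w(s)-1\ge\beta_i(N(s)-1)-1\ge\beta_i N(s)-2$ on the lower side, which is precisely the $2$-violation condition.

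The step I expect to be the main obstacle, and the real content of the lemma, is the network design that enforces the per-(group, center) bounds and the per-center total bounds at once while retaining integrality: a naive rounding that controls only the group counts lets the total load at a center drift by up to $m$, which would inflate the fairness ratio, whereas inserting the center nodes as aggregating hubs couples the two constraint families and still leaves a totally unimodular system. Once that structure is in place, the remaining pieces (flow feasibility of $\phi_S^*$, the integrality theorem, and the final arithmetic) are routine.
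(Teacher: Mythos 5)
Your proposal is correct and is essentially identical to the paper's own proof: the same hub-guided minimum-cost circulation network (your $h_{i,s}$ are the paper's group-indexed center copies $s_l^{(i)}$, with the same $[0,1]$, $[\lfloor w^{(i)}(s)\rfloor,\lceil w^{(i)}(s)\rceil]$, and $[\lfloor w(s)\rfloor,\lceil w(s)\rceil]$ capacities), the same feasibility-of-$\phi_S^*$ and flow-integrality arguments, and the same floor/ceiling arithmetic yielding the $2$-violation. Your final chain of inequalities is in fact stated more cleanly than the paper's version.
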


Finally, Theorem~\ref{the-approx} can be obtained by combining either the rounding method from \citep{bera2019fair} for general case, or Lemma~\ref{lem-rounding} for disjoint case. 

%combining Lemma~\ref{the-ratio}, Lemma~\ref{cor-approx}, and Lemma~\ref{lem-rounding}.

\textbf{Overall time complexity.} As we mentioned in Remark~\ref{rem-complexity-matouvsek}, computing an $\epsilon$-approximate centroid set of $P$ needs $O(n\log n + n\epsilon^{-d}\log (1/\epsilon))$ time. 
The adjustment of the location of $T$ can be completed in $O(kn)$ time.
Suppose the time complexities of linear programming, vanilla $k$-means are denoted by $\mathcal{T}_{LP}$ and $\mathcal{T}_{means}$, respectively.  The overall time complexity of Algorithm~\ref{alg-fair} is  $O(n\log n + n\epsilon^{-d}\log (1/\epsilon)) + \mathcal{T}_{LP} + O(kn) + \mathcal{T}_{means}$. It is worth noting the the complexity can be further reduced by using the assignment preserving coreset ideas \citep{huang2019coresets,braverman2022power,DBLP:journals/jcss/BandyapadhyayFS24}. By doing this, we need to introduce an extra running time for coreset construction, which is linear to $n$, but we can compress the data size from $n$ to $poly(k, \epsilon)$.

\subsection{Extension to \texorpdfstring{$k$}{}-sparse Wasserstein Barycenter}
\label{sec-kwb}
% \vspace{-0.1in}
A cute property of Algorithm~\ref{alg-fair} is that it can be easily extended to address the $k$-sparse WB problem. 
Recall the definition of $k$-sparse WB in Problem~\ref{prob-wb}.
% First, we give the definition of Wasserstein distance and $k$-sparse Wasserstein Barycenter.
The given $m$ distributions can be viewed as $m$ groups of weighted points. And the sum of Wasserstein distances between barycenter and given distributions can be rewritten as the sum of squared Euclidean distances from $P$ to the centers. Moreover, the flows induced by Wasserstein distances between barycenter and the given distributions can implicitly ensure the fairness, \emph{i.e.}, for each point $s$ in barycenter,  $w^{(i)}(s) = \frac{1}{m}w(s)$ for any $i\in [m]$. Namely,
we can  directly perform our ``Relax and Merge'' framework by setting $\alpha_i = \beta_i = 1/m$. 
% We use Wasserstein Barycenter (without sparsity constraint), which can be computed in polynomial time in fixed dimensional Euclidean space~\citep{altschuler2021wasserstein}, as the relaxed solution of $k$-sparse WB.
First, we calculate the $\epsilon$-approximate centroid (here we ignore the weight of points) set to obtain $T$, then we use $T$ as the support of the Barycenter to run a ``fixed support''
% Barycenter~\citep{claici2018stochastic, cuturi2014fast, cuturi2016smoothed, ge2019interior, kroshnin2019complexity, lin2020fixed} algorithm to obtain the weight of $T$. 
WB algorithm~\citep{claici2018stochastic, cuturi2014fast, cuturi2016smoothed, lin2020fixed} to obtain the weights of $T$ (due to the space limit, we leave some details on fixed support WB algorithms to Appendix~\ref{fix-wb}). 
Finally, we run a vanilla $k$-means algorithm on $T$ to obtain the $k$-sparse solution. 
% Similar to Lemma~\ref{lem-app-ratio-centroid-set} , we can prove that the approxiamte centroid set also provide a ``good enough'' candidate set for $k$-sparse WB. 
% \begin{lemma}
%     If $T$ is an $\epsilon$-approximate centroid set of $\mathtt{supp}(\cup_{i=1}^m P^{(i)})$, there exists $T' = \{t_t\in T | i=1,\cdots ,k\}$ such that the cost of fix support WB on $T'$ is at most $(1+\epsilon)$ times of optimal $k$-sparse WB.
%     \label{lem-wb-centroid}
% \end{lemma}
% The proof of Lemma~\ref{lem-app-ratio-centroid-set} is not trivial and we leave the proofs in the complementary materials owing to the space limit. 
% Finally, we have the following theorem and we leave the proof in the complementary.
\begin{theorem}
    If $T$ is 
    % Wasserstein Barycenter
    an $\epsilon$-approximate centroid set
    of $\cup_{i=1}^m P^{(i)}$, Algorithm~\ref{alg-fair} returns a {$(1+4\rho + O(\epsilon))$-approximate} solution for $k$-sparse Wasserstein Barycenter problem.
    \label{the-wb}
\end{theorem}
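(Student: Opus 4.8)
The plan is to realize the $k$-sparse WB problem as a \emph{weighted} instance of $(\alpha,\beta)$-fair $k$-means with $\alpha_i=\beta_i=1/m$, so that the guarantees already proved for Algorithm~\ref{alg-fair} transfer directly. First I would fix the correspondence between feasible solutions. Regard $P=\cup_{i=1}^m P^{(i)}$ as a single set of weighted points carrying $m$ groups, where each $p\in P^{(i)}$ keeps its mass $w(p)$ from $P^{(i)}$. Given a barycenter support $S$ with barycenter masses $b(s)$ and optimal transport plans $\{F^{(i)}\}$, define one assignment matrix by letting $\phi_S(p,s)$ be the fraction of $p$'s mass sent to $s$, i.e. $F^{(i)}(p,s)=w(p)\,\phi_S(p,s)$ for $p\in P^{(i)}$. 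The marginal $\sum_s F^{(i)}(p,s)=w(p)$ becomes $\sum_s\phi_S(p,s)=1$, while the other marginal $\sum_{p\in P^{(i)}}F^{(i)}(p,s)=b(s)$ gives $w^{(i)}(s)=\sum_{p\in P^{(i)}}w(p)\phi_S(p,s)=b(s)$ for \emph{every} group $i$. Hence $w(s)=\sum_i b(s)=m\,b(s)$ and $w^{(i)}(s)=\tfrac1m w(s)$, which is exactly the fairness constraint with $\alpha_i=\beta_i=1/m$. This confirms the paper's claim that the transport flows ``implicitly'' enforce fairness.

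Next I would check that the two objectives coincide under this correspondence. Expanding, $\sum_{i=1}^m\sum_{p\in P^{(i)}}\sum_{s\in S}\|p-s\|^2F^{(i)}(p,s)=\sum_{p\in P}w(p)\sum_{s\in S}\|p-s\|^2\phi_S(p,s)$, so the sum of squared Wasserstein distances equals the (weighted) fair $k$-means cost. The construction is reversible: any fair fractional assignment with $\alpha_i=\beta_i=1/m$ yields transport plans $F^{(i)}(p,s)=w(p)\phi_S(p,s)$ with the correct marginals by the computation above. Consequently the two optimization problems have the same optimal value for every support-size bound, and in particular under $|\mathtt{supp}(S)|\le k$ we get $OPT_{\mathrm{WB}}=OPT$. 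Because WB is inherently fractional (it optimizes over transport plans), it corresponds to the \emph{fractional} fair $k$-means problem, so no rounding step is needed; this is why the theorem carries no violation factor.

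With the reduction in place, I would observe that Algorithm~\ref{alg-fair} instantiated with $\alpha_i=\beta_i=1/m$ performs precisely the three WB steps: it builds the $\epsilon$-approximate centroid set $T$ of $\cup_i P^{(i)}$, solves LP~(\ref{eq-fair}) on support $T$ (which, by the equivalence above, is exactly the fixed-support WB that fixes the barycenter weights on $T$), and runs a $\rho$-approximate $k$-means on $\pi(T)$ to merge down to $k$ supports. The cost bound then follows by invoking Lemma~\ref{lem-app-ratio-centroid-set} to obtain $\mathtt{Cost}(P,T,\phi_T^*)\le(1+O(\epsilon))\,OPT$, and Lemma~\ref{the-ratio} with $\eta=1+O(\epsilon)$ to obtain $\eta+(2\eta+2)\rho=(1+4\rho+O(\epsilon))\,OPT$; equivalently, this is Lemma~\ref{cor-approx} applied to the present instance. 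Translating $OPT$ back through $OPT_{\mathrm{WB}}=OPT$ gives the claimed $(1+4\rho+O(\epsilon))$ guarantee for $k$-sparse WB.

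The main obstacle I anticipate is justifying Lemma~\ref{lem-app-ratio-centroid-set} in the weighted/fractional setting. Definition~\ref{def-centroid-set} is stated for unweighted subsets $Q\subseteq P$, and the paper computes $T$ on the \emph{unweighted} support, whereas the optimal WB support points are weighted centroids $\mathtt{Cen}(C_i)$ of the mass they receive, with the governing quantity being the weighted variance $\tfrac1{W}\sum_{q}w(q)\|q-\mathtt{Cen}(C_i)\|^2$. I would address this by arguing that the quadtree construction of \citet{matouvsek2000approximate} in fact covers the weighted centroid of every subset, since its guarantee depends only on how finely space is partitioned relative to the subset's spread and is insensitive to the mass distribution; failing a direct argument, one can reduce rational weights to multiplicities of coincident unweighted copies and recover the bound in the limit. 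Either way the centroid-set property needed in Lemma~\ref{lem-app-ratio-centroid-set} survives, and the remaining identity ``LP~(\ref{eq-fair}) on $T$ $=$ fixed-support WB on $T$'' is immediate from the cost/constraint equivalence established in the first two paragraphs.
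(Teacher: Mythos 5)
Your overall skeleton coincides with the paper's: reduce $k$-sparse WB to (weighted, fractional) fair $k$-means with $\alpha_i=\beta_i=1/m$, show that the relaxed solution satisfies $\mathtt{Cost}(P,T,\phi_T^*)\le(1+O(\epsilon))\,OPT$, and then invoke Lemma~\ref{the-ratio} with $\eta=1+O(\epsilon)$. The correspondence between transport plans and fair assignments in your first two paragraphs is correct and is essentially what Section~\ref{sec-kwb} asserts informally. The problem is the step you yourself flag as the ``main obstacle'': justifying the $(1+O(\epsilon))OPT$ bound for $T$. Neither of your proposed fixes works. The claim that Matou\v{s}ek's guarantee is ``insensitive to the mass distribution'' is false as stated: Definition~\ref{def-centroid-set} promises a point within $\frac{\epsilon}{3}$ times the \emph{unweighted} standard deviation of $Q$, whereas the optimal barycenter support points are weighted centroids whose governing weighted standard deviation can be arbitrarily smaller than any unweighted statistic of the same support (put mass $1-\delta$ on one point and $\delta$ on a distant one); a weighted analogue of Definition~\ref{def-centroid-set} is therefore a strictly stronger property requiring its own proof. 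The multiplicity trick fares no better: duplicating points turns $Q$ into a multiset, over which Definition~\ref{def-centroid-set} does not quantify, and a limiting argument cannot help because $\mathtt{CS}_\epsilon(P)$ is a fixed finite set while the required precision shrinks with the weights.

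The paper closes this gap with a structural fact you do not use (Lemma~\ref{lem-wb}, proved via \citet{anderes2016discrete}): there is an optimal \emph{unrestricted-support} Wasserstein barycenter $T^*$ whose support points all lie at centroids of $m$-tuples containing exactly one point from each $P^{(i)}$. These are centroids of genuine unweighted subsets $Q\subseteq P$ with $|Q|=m$, so Definition~\ref{def-centroid-set} applies verbatim and $T$ covers the locations of $T^*$ up to the $\epsilon$-error; since the unrestricted barycenter lower-bounds the $k$-sparse optimum, this yields $\mathtt{Cost}(P,T,\phi_T^*)\le(1+O(\epsilon))\mathtt{Cost}(P,T^*,\phi^*_{T^*})\le(1+O(\epsilon))OPT$ by the argument of Lemma~\ref{lem-app-ratio-centroid-set}. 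Without this fact (or a genuinely proved weighted centroid-set property), your chain breaks at the point where you apply Lemma~\ref{lem-app-ratio-centroid-set} to the WB instance. If you want to salvage your route, you should either cite and use the Anderes et al.\ support characterization as the paper does, or state and prove a weighted version of Definition~\ref{def-centroid-set} for the specific construction you use.
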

% \vspace{-0.15in}
\subsection{Strictly Fair \texorpdfstring{$k$}{}-means without Violation }
\label{sec-strict}
% \vspace{-0.1in}
% {change the subtitle, add transition}

Since the strictly fair $k$-means is a special case of $(\alpha,\beta)$-fair $k$-means, by using Algorithm~\ref{alg-fair} and the rounding technique introduced by Section~\ref{sub-frac}, we can obtain an integral solution but with certain violation. In this section, we consider how to obtain an integral solution with no violation. Specifically, we compute the fairlet decomposition~\citep{chierichetti2017fair} for the input groups and use its centroids as the relaxed solution $T$ rather than $\epsilon$-approxiamte centroid set. First, we give the definition of fairlet decomposition for multiple groups, which extends the original definition of~\citep{chierichetti2017fair} from two groups to multiple groups. 

\begin{definition}[Fairlet Decomposition]
 Given a dataset $P$ that has $m$ equal-sized disjoint groups, We say a set $G$ of $m$ points is a fairlet of $P$, if $G$ contains exactly one point from each group of $P$. A set $\mathcal{G}$ of $n/m$ fairlets is a fairlet decomposition of $P$, if all fairlets in $\mathcal{G}$ are disjoint, where $n/m$ is the number of points in each group of $P$.
    % A fairlet $f$ consists of $m$ points from different groups. A fairlet decomposition $F$ of $P$ is a group of $n$ disjoint fairlets where every points in these fairlets belong to $P$.
\end{definition}

We define the cost of fairlet decomposition $\mathcal{G}$ as $\mathtt{Cost_{fairlet}}(\mathcal{G}) = \sum_{G\in \mathcal{G}} \sum_{p\in G} \Vert p- \mathtt{Cen}(G) \Vert^2$. It is easy to know that fairlet decomposition is indeed a solution of strictly fair $n/m$-means. Hence, we can still use the ``Relax and Merge'' technique: regard the centroids of fairlets in fairlet decomposition as a relaxed solution, and then run $\rho$-approximate vanilla $k$-means algorithm on these centroids. So, we reduce the strictly fair $k$-means problem to the fairlet decomposition problem.  We propose Algorithm~\ref{alg-balance}, which first computes a $2$-approximate fairlet decomposition and then generates a $(2 + 6\rho)$-approximate integral solution for strictly fair $k$-means.
% The high level idea of our algorithm is that compute a fairlet decomposition first, then run vanilla $k$-means algorithm on the centroids of fairlets. The final assignment is yielded by the fairlet decomposition so that we do not need linear programming to obtain the assginment matrix. The details of the algorithm is shown by Algorithm~\ref{alg-balance}.
% And the cost of a fairlet decomposition of $P$ is the sum of the cost of each fairlet. Under this definition, we define the optimal fairlet decomposition $FD^*$ is the fairlet decomposition who has the lowest cost, denoted by $OPT_{FD}$.

\begin{algorithm}
  \SetAlgoLined
  \caption{\sc{Stricytly Fair $k$-means}}
  \label{alg-balance}
  \KwIn{The dataset $P = \cup_{i=1}^m P^{(i)}$, $k$}
  %\KwOut{E}

  % Initialize the best fairlet decomposition $F=\emptyset$.
  
  % Initialize the best cost $v = \infty$.

  % Initialize the pivital group number $v$.

  \For{$i=1$ to $m$}
  {
    \For{$j=1$ to $m$ and $i\ne j$}
    {
        Compute the perfect one-to-one matching $\tau_{ij}$ between $P^{(i)}$ and $P^{(j)}$ {by using the Hungarian algorithm~\citep{kuhn1955hungarian}}. For each point $p\in P^{(i)}$, the point matched with $p$ {in $P^{(j)}$} is denoted as $\tau_{ij}(p)$.
        % that minimize $\mathcal{W}^2(P^{(i)}, P^{(j)})$.
    }
    Construct a fairlet decomposition $\mathcal{G}_{i}$ {(initially empty)} according to the matchings: for each point $p\in P^{(i)}$, add the fairlet $\{\tau_{i1}(p), \tau_{i2}(p), \cdots, \tau_{im}(p)\}$ to $\mathcal{G}_{i}$.

    % \If{the cost of $F_{new}$ is smaller than $F$}{
    %     $F \leftarrow F_{new}$.
    % }
  }

    Choose $\mathcal{G}_v$ where $v = \arg \min_i \sum_{p\in P^{(i)}} \sum_{j=1}^m \Vert p - \tau_{ij}(p) \Vert^2$ as $\mathcal{G}$.
  
    Construct the relaxed solution $T = \{ \mathtt{Cen}(G) \mid G \text{ is any fairlet of } \mathcal{G} \}$.
  
  Run a $\rho$-approximate $k$-means algorithm on $T$, and obtain the solution $S$.

  {\textbf{Integral assignment:}} Assign all the points according to the fairlet decomposition $\mathcal{G}$, \emph{i.e.}, if a point $p$ belongs to some fairlet $G$, then assign $p$ to $\mathcal{N}(\mathtt{Cen}(G), S)$. 
  
  \Return{S and the obtained integral assignment}
\end{algorithm}
% \vspace{-0.2in}
\begin{theorem}
    Algorithm \ref{alg-balance} returns a $(2 + 6\rho)$-approximate integral solution of strictly fair $k$-means.
    \label{the-ratio-strict}
\end{theorem}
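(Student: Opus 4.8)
The plan is to split the argument into two pieces that mirror the ``Relax and Merge'' template of Lemma~\ref{the-ratio}: first show that the fairlet decomposition $\mathcal{G}$ selected by Algorithm~\ref{alg-balance} is cheap, namely $\mathtt{Cost_{fairlet}}(\mathcal{G}) \le 2\,OPT$, and then show that the subsequent $k$-means ``merge'' on the fairlet centroids $T$, followed by the fairlet-wise integral assignment, loses only an additive factor of $6\rho$. Before the estimates I would record two structural facts. Since every cluster of a strictly fair solution contains $cm$ points for some integer $c\ge 1$, we have $n/m \ge k$, so relaxing to $n/m$ centers is legitimate and the optimal fairlet cost satisfies $OPT_{\mathrm{fairlet}} \le OPT$: decompose each optimal cluster $C_l$ into fairlets and use that $\mathtt{Cen}(G)$ minimizes the sum of squared distances, so each fairlet's cost is at most its cost measured to $\tilde{s}_l$. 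Moreover, assigning each entire fairlet $G$ to $\mathcal{N}(\mathtt{Cen}(G),S)$ keeps the solution \emph{exactly} strictly fair: a center receives one point of every group per fairlet routed to it, so every group contributes the same count and there is no violation.

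For the first piece I would bound the fairlet cost by a matching cost and then average over the pivot group. For the chosen pivot $v$, each fairlet is built around a point $p\in P^{(v)}$, so using $p$ as reference and the optimality of the centroid gives $\sum_{q\in G}\Vert q-\mathtt{Cen}(G)\Vert^2 \le \sum_{j}\Vert p-\tau_{vj}(p)\Vert^2$; summing over $p$ yields $\mathtt{Cost_{fairlet}}(\mathcal{G}_v) \le \sum_{p\in P^{(v)}}\sum_{j}\Vert p-\tau_{vj}(p)\Vert^2$, which is exactly the quantity minimized by the choice of $v$. Hence $\mathtt{Cost_{fairlet}}(\mathcal{G}) \le \min_i \sum_j \mathtt{MinMatch}(P^{(i)},P^{(j)}) \le \tfrac1m\sum_i\sum_j \mathtt{MinMatch}(P^{(i)},P^{(j)})$, where $\mathtt{MinMatch}$ denotes the minimum squared-distance perfect matching cost produced by the Hungarian algorithm. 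To bound the right-hand side I would feed in the optimal fairlet decomposition $\mathcal{G}^\ast$: its group-$i$ and group-$j$ representatives form a feasible matching, so $\mathtt{MinMatch}(P^{(i)},P^{(j)}) \le \sum_{G^\ast}\Vert x_i(G^\ast)-x_j(G^\ast)\Vert^2$. Summing over all ordered pairs $(i,j)$ and applying, for the $m$ points of each fairlet, the exact identity $\sum_{i,j}\Vert x_i-x_j\Vert^2 = 2m\sum_i\Vert x_i-\mathtt{Cen}(G^\ast)\Vert^2$ collapses the double sum to $2m\,OPT_{\mathrm{fairlet}}$; dividing by $m$ gives $\mathtt{Cost_{fairlet}}(\mathcal{G}) \le 2\,OPT_{\mathrm{fairlet}} \le 2\,OPT$, i.e.\ $\eta = 2$.

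For the second piece I would analyze the integral cost $\sum_{G}\sum_{p\in G}\Vert p-\mathcal{N}(\mathtt{Cen}(G),S)\Vert^2$ exactly as in Lemma~\ref{the-ratio}. Proposition~\ref{lem-centroid}, applied with the center $\mathcal{N}(\mathtt{Cen}(G),S)$ and $Q=G$ (whose centroid is $\mathtt{Cen}(G)$), splits this into $(a)=\mathtt{Cost_{fairlet}}(\mathcal{G})$ and a displacement term $(b)=\sum_G |G|\,\Vert \mathtt{Cen}(G)-\mathcal{N}(\mathtt{Cen}(G),S)\Vert^2$. Since $S$ is a $\rho$-approximate $k$-means solution on the uniformly weighted set $T$, I can bound $(b)$ by $\rho$ times the cost of routing $T$ to the optimal fair centers $S_{opt}$; writing $m\Vert \mathtt{Cen}(G)-\mathcal{N}(\mathtt{Cen}(G),S_{opt})\Vert^2 \le \sum_{p\in G}\Vert \mathtt{Cen}(G)-\tilde{s}_p\Vert^2$ (with $\tilde{s}_p$ the optimal center of $p$) and then applying the squared triangle inequality $\Vert \mathtt{Cen}(G)-\tilde{s}_p\Vert^2 \le 2\Vert \mathtt{Cen}(G)-p\Vert^2 + 2\Vert p-\tilde{s}_p\Vert^2$ gives $(b)\le 2\rho\,(a)+2\rho\,OPT$, identical in form to Lemma~\ref{the-ratio}. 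Combining, the integral cost is at most $(1+2\rho)(a)+2\rho\,OPT \le (1+2\rho)\cdot 2\,OPT + 2\rho\,OPT = (2+6\rho)\,OPT$.

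I expect the main obstacle to be obtaining the \emph{tight} constant $2$ in the fairlet bound. The naive route of passing the squared triangle inequality through each optimal fairlet's centroid for every matched pair loses an extra factor and yields only $4\,OPT$ (hence $4+10\rho$). The improvement rests on combining the averaging over the $m$ candidate pivots with the exact identity $\sum_{i,j}\Vert x_i-x_j\Vert^2 = 2m\sum_i\Vert x_i-\bar{x}\Vert^2$, which converts the all-pairs matching cost of each optimal fairlet into precisely $2m$ times its centroid cost. The remaining checks are routine: verifying that the Hungarian matchings $\tau_{ij}$ induce a genuine fairlet decomposition (disjointness across fairlets) and that the feasibility condition $n/m\ge k$ holds.
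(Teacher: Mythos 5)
Your proposal is correct and follows the paper's two-stage architecture exactly: first show $\mathtt{Cost_{fairlet}}(\mathcal{G})\le 2\,OPT$ (the paper's Lemma~\ref{lem-ratio-fd}), then run the Lemma~\ref{the-ratio} merge analysis with $\eta=2$ to get $\eta+(2\eta+2)\rho = 2+6\rho$; your explicit verification that the fairlet-wise integral assignment incurs no violation and your re-derivation of the $(a)+(b)$ split for the integral case are exactly what the paper leaves implicit when it says ``use the same idea of Lemma~\ref{the-ratio}.'' The one place you genuinely diverge is inside the $2\,OPT$ bound: the paper compares the chosen pivot $v$ to the specific group $u$ closest to the optimal fairlet decomposition $\mathcal{G}_{OPT}$, applies Proposition~\ref{lem-centroid} to split the resulting matching cost into the optimal fairlet cost (at most $OPT$) plus $m\sum_{p\in P^{(u)}}\Vert \mathtt{Cen}(\mathcal{G}_{OPT}(p))-p\Vert^2$, and bounds the latter by $OPT$ via an averaging-over-groups argument; you instead average over all $m$ candidate pivots up front and collapse the all-pairs matching cost of each optimal fairlet with the identity $\sum_{i,j}\Vert x_i-x_j\Vert^2 = 2m\sum_i\Vert x_i-\bar{x}\Vert^2$. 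The two routes are equivalent in strength (both hinge on averaging over the pivot group to avoid the lossy triangle-inequality detour you correctly identify as giving only $4\,OPT$), but yours is arguably cleaner in that it uses a single exact identity rather than two applications of Proposition~\ref{lem-centroid} plus a separate ``closest group'' selection; the paper's version makes the role of the best single group more visible. Both correctly reduce to the fact $OPT_{\mathrm{fairlet}}\le OPT$, which you justify properly by decomposing each optimal cluster into fairlets.
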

% % \vspace{-0.1in}
% {\color{black}The proof of Theorem~\ref{the-ratio-strict} can be directly obtained by Lemma~\ref{the-ratio} and the following lemma when we directly substitute $\eta = 2$ into Lemma~\ref{the-ratio}.}
To prove Theorem~\ref{the-ratio-strict}, we need to prove the following lemma, which shows that $\mathcal{G}$ is a 2-approximate fairlet decomposition. Then, we can use the same idea of Lemma~\ref{the-ratio} to obtain Theorem~\ref{the-ratio-strict}. Recall that Lemma~\ref{the-ratio} shows that if we have a relaxed solution $T$ with a bounded cost $\eta \cdot OPT$, then the merged solution will have constant approximate ratio. Here, $T$ obtained by Algorithm~\ref{alg-balance} also provides a relaxed solution whose cost does not exceed $2OPT$. Hence, after we merge $T$ and obtain $S$, the approximate ratio should no more than $(\eta + (2\eta + 2)\rho) = 2+6\rho$. Furthermore, if we use PTAS for $k$-means, the overall approximate ratio of Algorithm~\ref{alg-balance} is $8+O(\epsilon)$.

% . The integral assignment returned by Algorithm~\ref{alg-balance} can be seen as an assignment matrix $\phi_T$ and Lemma~\ref{lem-ratio-fd}} impplies that $\mathtt{Cost}(P, T, \phi_T) \le 2OPT$.
\begin{lemma}
    If $\mathcal{G}$ is the fairlet decomposition obtained by Algorithm~\ref{alg-balance}, then $\mathtt{Cost}_{fairlet}(\mathcal{G})\le 2 OPT$.
    \label{lem-ratio-fd}
\end{lemma}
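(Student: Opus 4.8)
The plan is to produce a single reference fairlet decomposition induced by the optimal clustering, to show its cost is at most $OPT$, and then to argue via an averaging argument over the $m$ possible anchor groups that the decomposition $\mathcal{G}$ chosen by Algorithm~\ref{alg-balance} costs at most twice this reference. The whole point will be to couple the $\binom{m}{2}$ matchings through one common decomposition, which is what turns a naive factor of $4$ into the claimed factor of $2$.

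First I would fix the optimal integral solution $S_{\mathtt{opt}}=\{\tilde s_1,\dots,\tilde s_k\}$ with clusters $C_1,\dots,C_k$. Since the clustering is strictly fair, each $C_\ell$ contains the same number $t_\ell$ of points from every group, so $C_\ell$ splits into $t_\ell$ disjoint fairlets; collecting these over all $\ell$ yields a reference fairlet decomposition $\mathcal{G}^*$. Anchoring each fairlet $G\in\mathcal{G}^*$ at the center $\tilde s_{\ell(G)}$ of the cluster containing it and invoking centroid optimality (Proposition~\ref{lem-centroid}) gives $\mathtt{Cost}_{fairlet}(\mathcal{G}^*)=\sum_{G}\sum_{p\in G}\Vert p-\mathtt{Cen}(G)\Vert^2\le\sum_\ell\sum_{p\in C_\ell}\Vert p-\tilde s_\ell\Vert^2=OPT$.

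Next I would relate the algorithm's candidate decompositions to $\mathcal{G}^*$. For the decomposition $\mathcal{G}_i$ anchored at group $i$, anchoring each fairlet at its own group-$i$ point and again using centroid optimality yields $\mathtt{Cost}_{fairlet}(\mathcal{G}_i)\le\Phi_i:=\sum_{j}M_{ij}$, where $M_{ij}=\sum_{p\in P^{(i)}}\Vert p-\tau_{ij}(p)\Vert^2$ is the cost of the optimal (Hungarian) matching between $P^{(i)}$ and $P^{(j)}$, and $M_{ii}=0$. The crucial step is to upper bound each $M_{ij}$ by the matching induced by the \emph{single} reference decomposition $\mathcal{G}^*$: writing $g_i(G)$ for the group-$i$ point of a fairlet $G$, the map $g_i(G)\mapsto g_j(G)$ is a valid perfect matching, so $M_{ij}\le\sum_{G\in\mathcal{G}^*}\Vert g_i(G)-g_j(G)\Vert^2$. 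Coupling all group pairs through the same $\mathcal{G}^*$, rather than bounding each $M_{ij}$ independently via a squared-triangle inequality through cluster centers, is exactly what makes the factor come out to $2$.

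Finally I would average $\Phi_i$ over the $m$ anchor choices. Summing the previous bound over $i$ and $j$ and applying, within each fairlet $G$ with points $x_1,\dots,x_m$ and centroid $c_G$, the elementary identity $\sum_{i,j}\Vert x_i-x_j\Vert^2=2m\sum_i\Vert x_i-c_G\Vert^2$, I obtain $\tfrac1m\sum_i\Phi_i\le 2\sum_{G}\sum_{p\in G}\Vert p-\mathtt{Cen}(G)\Vert^2=2\,\mathtt{Cost}_{fairlet}(\mathcal{G}^*)\le 2OPT$. Because Algorithm~\ref{alg-balance} selects $\mathcal{G}=\mathcal{G}_v$ with $v=\arg\min_i\Phi_i$, it follows that $\mathtt{Cost}_{fairlet}(\mathcal{G})\le\Phi_v=\min_i\Phi_i\le\tfrac1m\sum_i\Phi_i\le 2OPT$. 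The main obstacle, and the only non-routine point, is recognizing that the matchings for all group pairs must be induced by one common fairlet decomposition so that the clean pairwise-distance identity applies; bounding each $M_{ij}$ in isolation only delivers $4OPT$.
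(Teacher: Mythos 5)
Your proof is correct and follows essentially the same route as the paper's: compare the chosen anchor group's matching cost against the matchings induced by a single reference fairlet decomposition of cost at most $OPT$, then gain the factor $2$ (rather than $4$) by averaging over all $m$ anchor groups. The only cosmetic differences are that you take the reference decomposition directly from the optimal clustering instead of the optimal fairlet decomposition (a step the paper also needs implicitly), and you package the paper's two applications of Proposition~\ref{lem-centroid} plus the ``closest group'' choice as the single identity $\sum_{i,j}\Vert x_i-x_j\Vert^2=2m\sum_i\Vert x_i-c_G\Vert^2$ combined with $\min_i\Phi_i\le\frac{1}{m}\sum_i\Phi_i$.
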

% \vspace{-2in}
\begin{proof}
    Suppose $G$ is a fairlet, and we use $G^{(i)}$ to denote the point in $G$ and belongs to group $P^{(i)}$, \emph{i.e.}, $G^{(i)} = G\cap P^{(i)}$. We use $\mathcal{G}_{OPT}$ to  denote {the optimal fairlet decomposition that has the lowest cost (we cannot obtain $\mathcal{G}_{OPT}$ in reality, and here  we just use it for conducting our analysis)}. For each $p\in P$, let $\mathcal{G}_{OPT}(p)$ denote the fairlet of $\mathcal{G}_{OPT}$ that $p$ belongs to, \emph{i.e.}, $p \in \mathcal{G}_{OPT}$. Suppose that $P^{(u)}$ is the ``closest'' group to $\mathcal{G}_{OPT}$, \emph{i.e.} 
 $u = \arg \min_{i\in [m]} \sum_{p\in P^{(i)}} \Vert \mathtt{Cen}(\mathcal{G}_{OPT}(p)) - p \Vert^2$.
    % $\mathcal{W}^2(P^{(v)}, F^*)$ 
 % (recall that $\mathcal{W}$ is the Wasserstein Distance) 
 % is the smallest among all groups. 
 % For each fairlet $f\in F^*$, let $\sigma^{(i)}(f)$ denote the point in $P^{(i)}$ belongs to the optimal fairlet decomposition $F^*$. 
 We have
 \begin{eqnarray}
 \begin{aligned}
     &\mathtt{Cost}_{fairlet}(\mathcal{G}) = \sum_{G\in \mathcal{G}} \sum_{p\in G} \Vert p- \mathtt{Cen}(G) \Vert^2 \\
     & \le \sum_{G\in \mathcal{G}} \sum_{p\in G} \Vert p- \mathtt{Cen}(G) \Vert^2 + m \sum_{G\in \mathcal{G}} \Vert G^{(v)}- \mathtt{Cen}(G) \Vert^2   
 \end{aligned}
\label{eq-fd}
 \end{eqnarray}
 According to Proposition~\ref{lem-centroid}, {the right side of (\ref{eq-fd}) equals to $\sum_{p\in P^{(v)}} \sum_{j=1}^m \Vert p - \tau_{vj}(p) \Vert^2$}, so we have
 % $\mathtt{Cost}(F) \le \sum_{p\in P^{(v)}} \sum_{j=1}^m \Vert p - \tau_{vj}(p) \Vert^2
 %    \le \sum_{p\in P^{(u)}} \sum_{j=1}^m \Vert p - \tau_{vj}(p) \Vert^2
 %    \le \sum_{p\in P^{(u)}} \sum_{j=1}^m \Vert p - (F^*(p))^{(i)} \Vert^2$.
 % $\mathtt{Cost}(F) \le \sum_{p\in P^{(v)}} \sum_{j=1}^m \Vert p - \tau_{vj}(p) \Vert^2$
% \vspace{-0.4in}
    $\mathtt{Cost}_{fairlet}(\mathcal{G}) \le $
    \begin{equation}
        \sum_{p\in P^{(v)}} \sum_{j=1}^m \Vert p - \tau_{vj}(p) \Vert^2 \le \sum_{p\in P^{(u)}} \sum_{j=1}^m \Vert p - \tau_{uj}(p) \Vert^2 
    \le \sum_{p\in P^{(u)}} \sum_{j=1}^m \Vert p - (\mathcal{G}_{OPT}(p))^{(j)} \Vert^2.
    \end{equation}
    
 % $ F \le \sum_{p\in P^{(u)}} \sum_{i=1}^m \Vert p-\sigma^{(i)}(F^*(p)) \Vert ^2 $. Since $\mathtt{cen}(F^*(p))$ is the centroid of $F^*(p)$, 
 {\color{black}The first inequality holds because $v = \arg \min_i \sum_{p\in P^{(i)}} \sum_{j=1}^m \Vert p - \tau_{ij}(p) \Vert^2$. And the last inequality holds because $\tau$ is the perfect one-to-one matching.} Using Proposition~\ref{lem-centroid} again, we have $\mathtt{Cost}_{OPT}(\mathcal{G}) \le$
    \begin{equation}
    \begin{aligned}
         \sum_{p\in P^{(u)}} \sum_{j=1}^m \Vert \mathtt{Cen}(\mathcal{G}_{OPT}(p))-(\mathcal{G}_{OPT}(p))^{(j)} \Vert ^2+m\sum_{p\in P^{(u)}} \Vert \mathtt{Cen}(\mathcal{G}_{OPT}(p)) - p \Vert^2.\label{for-lem4-1}
    \end{aligned}
    \end{equation}
    Note that $\mathcal{G}$ is the optimal fairlet decomposition, as well as the optimal strictly fair $n/m$-means solution, so the first term of (\ref{for-lem4-1}) should be at most $OPT$. As for the second term, since $P^{(u)}$ is the ``closest'' group to $\mathcal{G}$, it should be no larger than $m \cdot \frac{1}{m} OPT \le OPT$ (because the minimum distance ``$\sum_{p\in P^{(u)}} \Vert \mathtt{Cen}(\mathcal{G}_{OPT}(p)) - p \Vert^2$'' should not  exceed the average distance $ \frac{1}{m} OPT$). Overall, we complete the proof of Lemma~\ref{lem-ratio-fd}.
    % \begin{equation}
    %     \begin{aligned}
    %         Cost &\le \sum_{p\in P^{(j)}} \sum_{i=1}^m \Vert p-\sigma^{(i)}(F^*(p)) \Vert ^2 = \sum_{p\in P^{(j)}} \Big[ \sum_{i=1}^m \Vert c(F^*(p))-\sigma^{(i)}(F^*(p)) \Vert ^2 + m\Vert c(F^*(p)) - p \Vert^2 \Big] \\
    %         &= \sum_{p\in P^{(j)}} \sum_{i=1}^m \Vert f-\sigma^{(i)}(p) \Vert ^2 +\sum_{p\in P^{(j)}} m\Vert f - p \Vert^2 \\
    %         &= OPT_{fd} + m \sum_{p\in P^{(j)}}\Vert f - p \Vert^2 \\
    %         &\le OPT_{fd} + m \cdot \frac{1}{m} OPT_{fd}\\
    %         &\le 2OPT_{fd}
    %     \end{aligned}
    % \end{equation}
\end{proof}

\vspace{-0.2in}
\section{Experiments}
\label{sec-exp}
% \vspace{-0.1in}
% \subsection{Experimental environment and datasets}
In this section, we perform the empirical evaluation on our algorithms.
Our experiments are conducted on a server equipped with Intel(R) Xeon(R) Gold 6154 CPU @ 3.00GHz CPU and 512GB memory. We implement our algorithms in C++ and python (with linear programming solver gurobi~\citep{gurobi}).
We use the following datasets which are commonly used in previous works: \textbf{Bank}~\citep{moro2014data}(4522 points with 5 groups), \textbf{Adult}~\citep{misc_adult_2} (32561 points with 7 groups), 
\textbf{Census}~\citep{zhou2002hybrid}(50000 points with 10 groups), \textbf{creditcard}~\citep{yeh2009comparisons} (30000 points with 8 groups), 
\textbf{Biodeg}~\citep{mansouri2013quantitative} (1055 points with 2 groups),
\textbf{Breastcancer}~\citep{misc_breast_cancer_wisconsin_(diagnostic)_17} (570 points with 2 groups), \textbf{Moons}~\citep{moons} (200 points with 2 groups), 
\textbf{Hypercube}(200 points with 2 groups), 
\textbf{Cluto}~\citep{karypis1999chameleon} (800 points with 8 groups), and \textbf{Complex} (800 points with 8 groups).  The last four datasets consist of disjoint and equal sized groups, so we can perform strictly fair $k$-means algorithms on them. We place the detailed information of these datasets in Appendix~\ref{sup-exp}. Regarding the selection of $\alpha$ and $\beta$, we set $\alpha_i = \beta_i = \frac{|P^{(i)}|}{|P|}$ and we also discuss more choices for $\alpha$ and $\beta$,  and provide more experimental results, including the part of $k$-sparse Wasserstein Barycenter, in the Section~\ref{sup-exp} of the appendix. We use $k$-means++~\citep{ostrovsky2013effectiveness} as the $k$-means solver in our Algorithm~\ref{alg-fair}.

% \begin{table}
% \caption{Datasets}
% \begin{tabular}{llrl} 
% Dataset & Coordinates & \begin{tabular}{r} 
% Sensitive \\
% attributes
% \end{tabular} & Protected groups \\ \toprule bank & age, balance, duration & marital & married, single, divorced \\
% \cline { 3 - 4 } & & default & yes, no \\
% \hline \multirow{2}{*}{ census } & age, education-num, & sex & female, male \\
% \cline { 3 - 4 } & final-weight, capital-gain, & race & Amer-ind, asian-pac-isl, \\
% & hours-per-week & & black, other, white \\
% \hline creditcard & age, bill-amt 1-6, & marriage & married, single, other, null \\
% \cline { 3 - 4 } & limit-bal, pay-amt 1 - 6 & education & 7 groups \\
% \hline census1990 & dAncstry1, dAncstry2, iAvail, & dAge & 8 groups \\
% \cline { 3 - 4 } & iCitizen, iClass, dDepart, iFertil, & iSex & female, male \\
% & iDisabl1, iDisabl2, iEnglish, & & \\
% & iFeb55, dHispanic, dHour89 & & \\
% \bottomrule
% \end{tabular}
% \label{tab-dataset}
% \end{table}
% In our experiments, each test instance is repeated 5 times and we report the average result.

\textbf{Results on $(\alpha,\beta)$-Fair $k$-means.} We compared the cost of $(\alpha,\beta)$-fair $k$-means of our Algorithm~\ref{alg-fair} and baselines. We choose the algorithm proposed by \citet{bera2019fair} (denoted by NIPS19) and \citet{bohm2021algorithms} (denoted by ORL21) as the baselines. The construction of an $\epsilon$-approximate centroid set is a theoretical algorithm that can be replaced by some efficient methods in practice. In our experiments, we adopted the alternative implementation of \citet{kanungo2002local}, which combines the kd-tree~\citep{friedman1977algorithm} and a sampling technique. 
% In our experiment, we do not  explicitly construct an $\epsilon$-approximate centroid set as \citet{matouvsek2000approximate} since the size of the set is so large when the dimension is high that the LP procedure will be very time comsuming. Instead, we use the alternative implementation of \citet{kanungo2002local}, which combine the kd-tree~\citep{friedman1977algorithm} and a sampling technique. 
Figure~\ref{fig:fractional_cost} shows that our algorithm gives the lowest cost of $(\alpha, \beta)$-fair $k$-means, indicating that Algorithm~\ref{alg-fair} can find better center locations. {This improvement is possible due to that
our method considers the fairness information of groups when choosing the locations of centers. 

%before running vanilla $k$-means, which enables our method to find better solution.
}
% \vspace{-0.1in}
\begin{figure}[h]
    \centering
    \includegraphics[width=\textwidth]{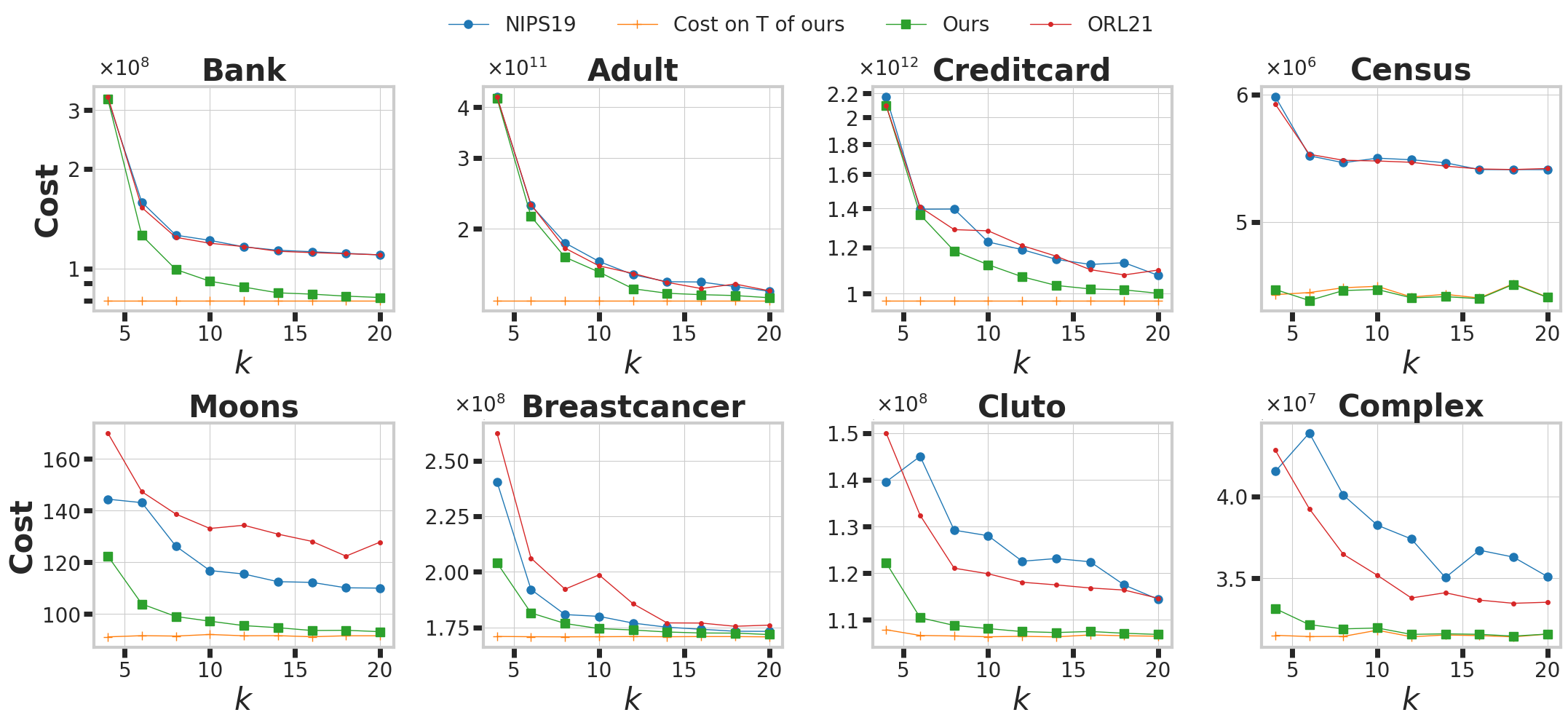}
    
    \caption{{The cost obtained by the algorithms with different $k$.}}
    \label{fig:fractional_cost}
\end{figure}

\textbf{Results on Strictly Fair $k$-means.}
% We compare our strictly fair $k$-means algorithm with ORL21~\citep{bohm2021algorithms} as a baseline, which is the best algorithm of strictly fair $k$-means for multiple groups to the best of our knowledge. 
We compare our strictly fair $k$-means algorithm with the state-of-the-art algorithm ORL21~\citep{bohm2021algorithms}. Both ORL21 and Algorithm~\ref{alg-balance} can return integral solution with no violation. Figure~\ref{fig:integral_cost} shows that our method has significant advantage in terms of the clustering cost.
\begin{figure}[h]
    \centering
    \includegraphics[width=\textwidth]{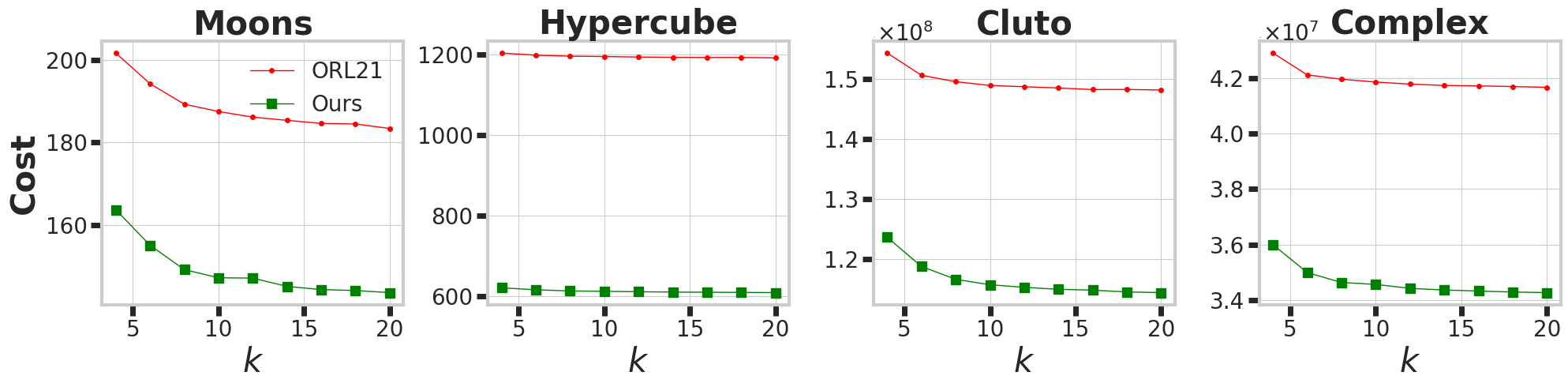}
    \caption{The cost of strictly fair $k$-means.}
    \label{fig:integral_cost}
\end{figure}
\section{Conclusion}
% \vspace{-0.1in}
% In this paper, we consider the $(\alpha, \beta)$-fair $k$-means problem. Our framework returns a fractional solution for general case and a integral solution for strictly fair case with better approximate solution. There are some important problems still open: how to obtain integral approximate solution of general case without violation? Is there exists PTAS in low dimensional space? We are aware of the previous PTAS of fair $k$-means~\citep{bohm2021algorithms, schmidt2020fair, ding2020unified}, but their methods have a exponential time complexity of $k$.
In this paper, we utilize the insight on the relationship between the fair $k$-means problem and a classic geometric structure, $\epsilon$-approximate centroid set, for developing a novel ``Relax and Merge" framework. It can achieve a $(1+4\rho+O(\epsilon))$ approximation ratio of fair $k$-means and $k$-sparse Wasserstein Barycenter problems, which improves the current state-of-the-art approximation guarantees. 
%We also conduct a set of experiments to illustrate the effectiveness of our proposed method in practice. 
There still exists some open problems: how to obtain an integral approximate solution of general case without violation? In addition, is it possible to extend our `Relax and Merge" framework to other types of clustering problems, such as  the  proportionally fair clustering~\citep{chen2019proportionally} and  socially fair $k$-means clustering~\citep{ghadiri2021socially}.

\section{Acknowledgement}
The authors want to thank Prof. Lingxiao Huang for the meaningful discussion about the rounding algorithm in Appendix~\ref{rounding-alg}.

%We are aware of the previous PTAS of fair $k$-means~\citep{bohm2021algorithms, schmidt2020fair}, but their methods have an exponential time complexity in $k$. So it is also interesting to consider designing  
%FPTAS in low dimensional space. 

%Empirical experiments demonstrate that our algorithms can achieve a lower clustering cost than the baselines.

\bibliography{main_paper}
\bibliographystyle{plainnat}

\newpage
\appendix

\section{\texorpdfstring{$\epsilon$}{}-approximate centroid set}
\label{sec-centroid-set}
The algorithm of constructing an $\epsilon$-approximate centroid set is proposed by \citet{matouvsek2000approximate}. Here we briefly introduce the idea. 
First, we use a quadtree to partition the space into hierarchical cubes. At each level of the tree, we construct a grid. The length of the grid is set to ensure that the grid points can always cover all approximate centroids of all cubes at this level.  The approximate centroid set is the union of all grid points across all levels.

In Figure~\ref{fig-centroid}, we visually illustrate the difference between the $k$-means clustering center and the fair $k$-means clustering center.
The vanilla $k$-means induces a Voronoi diagram, so that every $k$-means center is located at the centroid of a $k$-means cluster. However, a fair $k$-means center can be located at the centroid of any potential cluster that satisfies the fairness constraints. The $\epsilon$-approximate centroid set structure can help us to find these potential centroids and preserves the fairness constraints for the later procedures.

\begin{figure}[ht] %比如{r}{0.5\textwidth}
\centering
\includegraphics[width=0.8\textwidth]{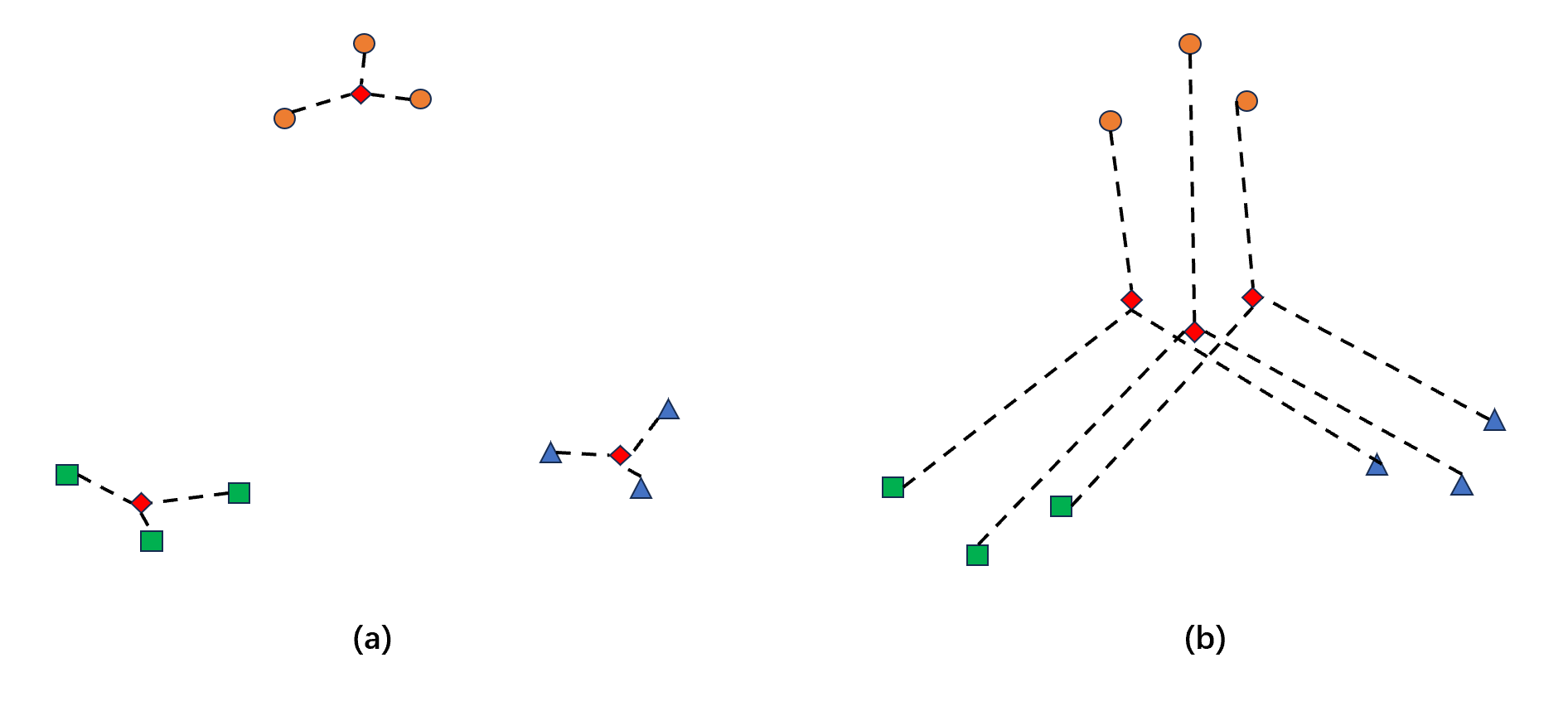}

\caption{The difference between the location of $k$-means clustering centers and the fair $k$-means clustering centers. The input dataset contains 3 different groups represented by orange, blue, and green points respectively. The red diamonds represent the cluster centers under different assumptions for the clustering problem. (a) shows the clustering result of $k$-means, while (b) shows the clustering result of fair $k$-means.}
\label{fig-centroid}
\end{figure}

\section{Omitted Proofs}
\label{app-omitted}

\paragraph{Theorem~\ref{the-wb}}If $T$ is 
    % Wasserstein Barycenter
    an $\epsilon$-approximate centroid set
    of $\cup_{i=1}^m P^{(i)}$, Algorithm~\ref{alg-fair} returns a $(1+4\rho + O(\epsilon))$-approximate solution for $k$-sparse Wasserstein Barycenter problem.

To prove this theorem, we need the following lemmas.
\begin{lemma}
    If $T$ is 
    an $\epsilon$-approximate centroid set
    of $\cup_{i=1}^m P^{(i)}$ and $w(t)$ for  each $t\in T$ is obtained by solving LP(\ref{eq-fair}), then $T$ is a $(1+O(\epsilon))$-approximate Wasserstein Barycenter.
    \label{lem-wb}
\end{lemma}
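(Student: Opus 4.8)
The plan is to transplant the proof of Lemma~\ref{lem-app-ratio-centroid-set} to the transport setting, using the reduction discussed in Section~\ref{sec-kwb}. Recall that, viewing the $m$ distributions $P^{(1)},\dots,P^{(m)}$ as $m$ groups of weighted points, the sum of squared Wasserstein distances from a weighted support $T$ to the inputs coincides with the optimal fair assignment cost $\mathtt{Cost}(P,T,\phi_T^*)$ under $\alpha_i=\beta_i=1/m$: the mass-balance constraints of the $m$ optimal transport plans are exactly the fairness constraints $w^{(i)}(t)=\tfrac1m w(t)$, so solving LP~(\ref{eq-fair}) on $T$ is the same fixed-support problem that recovers the barycenter weights $w(t)$ together with the (normalized) couplings. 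Hence it suffices to prove $\mathtt{Cost}(P,T,\phi_T^*)\le(1+O(\epsilon))\,OPT$, where $OPT$ is the optimal $k$-sparse WB cost.

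First I would fix an optimal $k$-sparse barycenter with support $\{\tilde s_1,\dots,\tilde s_k\}$ and its optimal transport plans, and for each $j$ let $C_j$ denote the (fractional) cluster of weighted point-pieces coupled to $\tilde s_j$, with total mass $w(C_j)$ and transport cost $OPT_j$. The structural fact I would invoke is the transport analogue of ``each optimal $k$-means center is the centroid of its cluster'': for fixed couplings the barycenter cost is separable and quadratic in each support location, so at the optimum $\tilde s_j=\mathtt{Cen}(C_j)$, the mass-weighted centroid of $C_j$.

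Next I would apply Definition~\ref{def-centroid-set} (in its weighted form, see below) to obtain, for each $j$, a point $t_j\in T$ with $\Vert t_j-\tilde s_j\Vert^2\le\tfrac{\epsilon^2}{9}\cdot\tfrac{1}{w(C_j)}\sum_{q\in C_j}w(q)\Vert q-\tilde s_j\Vert^2=\tfrac{\epsilon^2}{9}\cdot\tfrac{OPT_j}{w(C_j)}$. By Proposition~\ref{lem-centroid}, re-routing all of $C_j$ to $t_j$ raises the cost of cluster $j$ to $OPT_j+w(C_j)\Vert t_j-\tilde s_j\Vert^2\le(1+\tfrac{\epsilon^2}{9})OPT_j$. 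Summing over $j$ shows that the support $T'=\{t_1,\dots,t_k\}$ admits transport plans of total cost at most $(1+O(\epsilon))\,OPT$. Since $T'\subseteq T$ and the fixed-support WB (equivalently LP~(\ref{eq-fair})) on $T$ optimizes over all weights and couplings --- placing zero weight on $T\setminus T'$ is always feasible --- the cost attained on $T$ is no larger, giving $\mathtt{Cost}(P,T,\phi_T^*)\le(1+O(\epsilon))\,OPT$ and hence that $T$ is a $(1+O(\epsilon))$-approximate barycenter.

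The step I expect to be the main obstacle is the appeal to Definition~\ref{def-centroid-set} for $C_j$, since $C_j$ is a \emph{weighted} (fractional) subset of $P$ rather than an ordinary subset: the optimal couplings split each input point's mass among several support points. I would resolve this by observing that Matou\v{s}ek's grid construction (Remark~\ref{rem-complexity-matouvsek}) already covers the centroid of any weighted subset --- the mass-weighted centroid still lies in the bounding cube, and the grid-resolution estimate goes through with the weighted second moment replacing the unweighted one --- so the guarantee of Definition~\ref{def-centroid-set} holds verbatim after replacing $\tfrac{1}{|Q|}\sum$ by the mass-weighted average. (If one prefers to avoid weights entirely, one may instead discretize the rational masses into unit atoms and invoke the unweighted statement on the resulting multiset.) With this in hand, the remaining steps are a direct transcription of Lemma~\ref{lem-app-ratio-centroid-set}.
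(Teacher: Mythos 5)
Your overall architecture (bound the cost of the relaxed solution $T$ by $(1+O(\epsilon))\,OPT$ and then hand off to Lemma~\ref{the-ratio}) matches the paper, but the way you certify the $(1+O(\epsilon))$ bound is genuinely different, and the step you yourself flag as the main obstacle is a real gap. Definition~\ref{def-centroid-set} and the construction in Remark~\ref{rem-complexity-matouvsek} concern \emph{unweighted} subsets $Q\subseteq P$: for each fixed $Q$ there is a single centroid to cover. The clusters $C_j$ induced by the optimal $k$-sparse barycenter's couplings are fractionally weighted subsets, and as the weights vary the weighted centroid of a fixed support sweeps out a continuum of dimension up to $|\mathtt{supp}(Q)|-1$; the claim that a finite set of size $O(|P|\epsilon^{-d}\log(1/\epsilon))$ covers all of these within $\frac{\epsilon}{3}$ times the \emph{weighted} root-mean-square radius is materially stronger than Definition~\ref{def-centroid-set} and does not hold ``verbatim'' --- it would require redoing Matou\v{s}ek's level-selection and size analysis. (The coverage argument has a chance of surviving, since the distance from the weighted centroid to the nearest support point is still at most the weighted RMS radius, but nothing in the paper establishes this weighted variant, and it is the load-bearing step of your proof.) Your fallback of discretizing the masses into unit atoms does not escape the problem: the resulting multiset is not a subset of $P$, and its centroid is again a weighted centroid of a support set, which is exactly the object Definition~\ref{def-centroid-set} does not speak about.

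The paper closes this gap by a different and cleaner route. It invokes the structural theorem of \citet{anderes2016discrete}: some \emph{optimal unrestricted-support} barycenter $T^*$ has every support point located at the uniform centroid of an $m$-tuple containing one point from each $P^{(i)}$, i.e., at the centroid of an ordinary unweighted $m$-element subset of $P$. These are precisely the points the unweighted Definition~\ref{def-centroid-set} covers (which is also why Section~\ref{sec-kwb} computes the centroid set ``ignoring the weights''). Since the unrestricted problem is a relaxation, $\mathtt{Cost}(P,T^*,\phi^*_{T^*})\le OPT$, and snapping each point of $T^*$ to its nearby point of $T$ via Proposition~\ref{lem-centroid}, exactly as in Lemma~\ref{lem-app-ratio-centroid-set}, yields $\mathtt{Cost}(P,T,\phi^*_T)\le(1+O(\epsilon))\,OPT$ with no weighted machinery. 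To repair your argument, either prove the weighted centroid-set guarantee you assert, or --- more simply --- replace the $k$-sparse optimum by the unrestricted optimum as the reference solution and use the \citet{anderes2016discrete} characterization.
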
 
\begin{proof}
    A critical fact is that there exist an optimal Wasserstein Barycenter $T^*$ such that all points of $T^*$ located in the centroid of some fairlet of $P$. This claim has been proved in \citep{anderes2016discrete} (Section 2, Equation 4). Therefore, if we calculate an $\epsilon$-approximate centroid set $T$, then $T$ can always cover the locations of $T^*$, \emph{i.e.}, $\mathtt{Cost}(P, T, \phi^*_T) \le (1+O(\epsilon))\mathtt{Cost}(P, T^*, \phi^*_{T^*}) \le (1+O(\epsilon))OPT$. So using the same proof idea with Lemma~\ref{lem-app-ratio-centroid-set}, we can obtain the conclusion of Lemma~\ref{lem-wb}.
\end{proof}
Combine Lemma~\ref{lem-wb} and Lemma~\ref{the-ratio}, we arrive at Theorem~\ref{the-wb}.
\section{The Rounding Technique}
\label{rounding-alg}

% \begin{lemma}
% \label{lem-rounding}
%     There exists an algorithm that can round a fractional solution of $(\alpha,\beta)$-fair $k$-means to integral with at most $2$-violation while the cost does not increase.
% \end{lemma}

% The sketch of our method is described as follows. Our rounding algorithm consists of three steps: constructing a network structure of Minimum Cost Flow Problem (MCFP), setting the parameter of the vertices/edges based on the fractional solution, and solving the MCFP above. Recall that the dataset $P$ consists of $m$ different  groups, \emph{i.e.}, $\boldsymbol{P=\cup_{i=1}^{m}P^{(i)}}$ and we assume that the groups are disjoint. By executing the Algorithm~\ref{alg-fair}, we obtain a center set $S$ and corresponding fractional assignment matrix $\phi^*_S$. Now we construct a network structure as Figure~\ref{fig-MCFP}. We create a "copy" of $S$, denoted by $S^{(i)}$, for each group $P^{(i)}$ while setting the cost of the arc from every $p \in P^{(i)}$ to an $s \in S^{(i)}$ to be $||p-s||^2$. The cost of the remaining arcs are $0$. The demand at the source is set to be $-n$, which means it is sending a flow of $n$ into the network. The demand at the sink is $n$ and the demands at all other vertices are $0$. Now, to form a Minimum Cost Flow instance, we only need to assign the capacity of each arc.

 Our rounding algorithm consists of three steps: constructing a network structure of Minimum Cost Circulation Problem (MCCP), setting the parameters of each edge based on a fractional solution obtained by Algorithm~\ref{alg-fair}, and solving the MCCP above. This reduction to MCCP is inspired by  \citet{Ding2015soda} (Section 4.3), while having some fundamental differences with their method. Our algorithm has different objectives compared to theirs, as it is based on a different approach to setting network parameters, and our method offers better time complexity guarantees. Our rounding algorithm requires only a single call to the minimum-cost circulation algorithm, and it can be completed in $O(n^3k^2)$ time even when using the vanilla Edmonds-Karp algorithm~\citep{edmonds2,edmonds1}. 
 
 The process of our algorithm is described as follows. Recall that the dataset $P$ consists of $m$ different  groups, \emph{i.e.}, $P=\cup_{i=1}^{m}P^{(i)}$ and we assume that the groups are disjoint. By executing the Algorithm~\ref{alg-fair}, we obtain a center set $S$ and corresponding fractional assignment matrix $\phi^*_S$. Now, in order to build a minimum cost circulation instance, we need to construct a network structure as Figure~\ref{fig-MCCP} and for each arc $(u,v)$, we should set the lower/upper bound of the flow $f(u,v)$ and its cost $c(u,v)$. We create a copy of $S$, denoted by $S^{(i)}$, for each group $P^{(i)}$. Each $S^{(i)}$ is a \textbf{"hub"} used for transit, specifically to receive weights from group $P^{(i)}$ and transmit them to $S$.   To facilitate understanding, we can imagine that each $s_l^{(i)} \in S^{(i)}$, where $l\in [k]$, 
and its corresponding $s_l \in S$ are in the same position, but only accepts the weights from group $P^{(i)}$.
We set $c(p_j^{(i)},s_l^{(i)})$, \emph{i.e.,} the cost of the arc from any $p_j^{(i)} \in P^{(i)}$, where $j\in [n^{(i)}]$, to an $s_l^{(i)} \in S^{(i)}$ to be $||p_j^{(i)}-s_l^{(i)}||^2$. The cost of the remaining arcs are $0$.

\begin{figure}[ht] %比如{r}{0.5\textwidth}
\centering
\includegraphics[width=\textwidth]{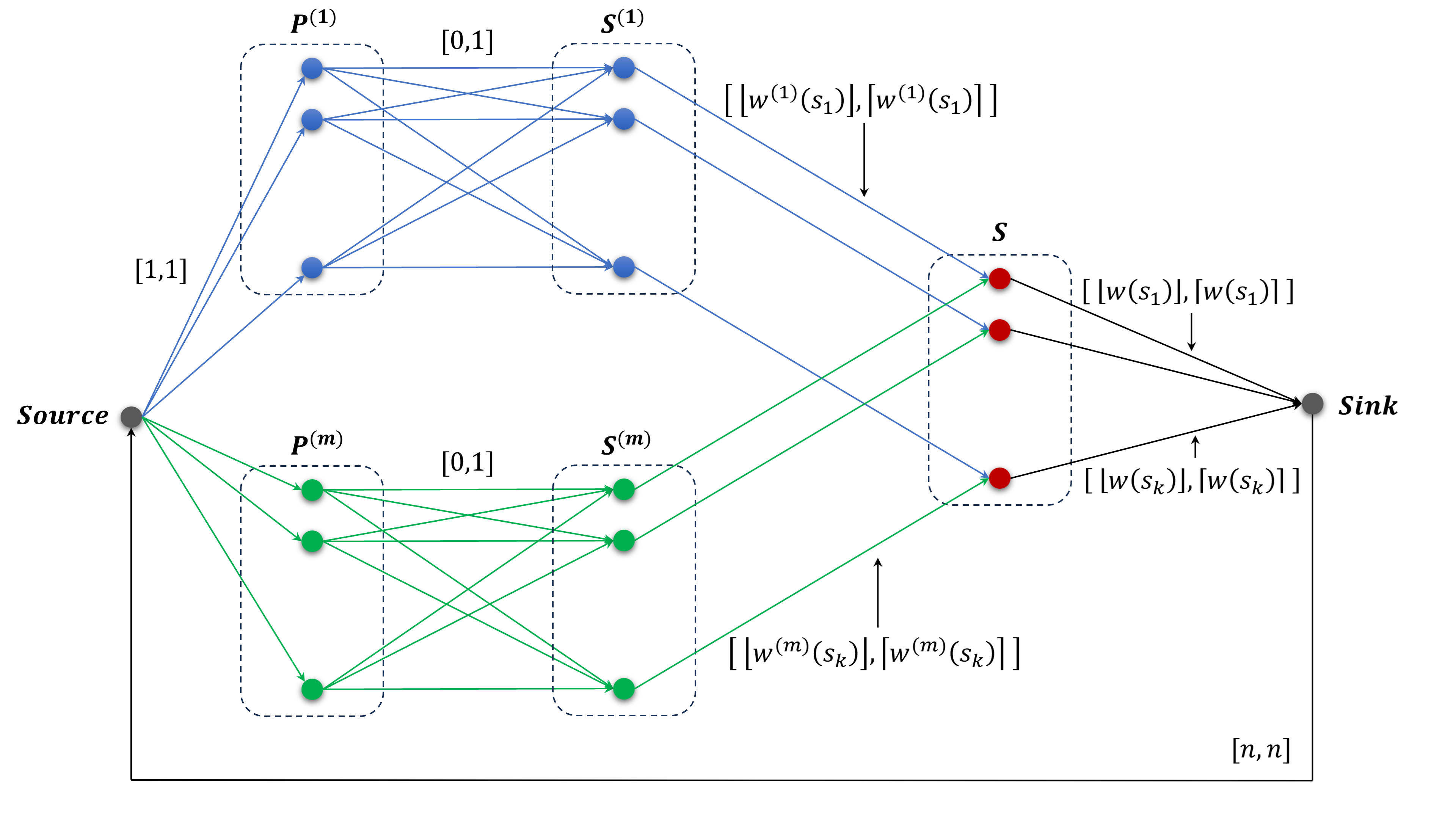}

\caption{The instance of the minimum cost circulation problem established through $(S, \phi_S^*)$. The upper and lower bounds of the flow for each arc are annotated in the graph.}
\label{fig-MCCP}
\end{figure}

Next, we set the lower bound and the upper bound of the flow on each arc, as shown in Figure~\ref{fig-MCCP}.
First, the flow from the "Source" node to each $p \in P$ is restricted to $1$, which means that each point $p \in P^{(1)}$ has a weight of $1$ to assign to $S^{(1)}$. Then, between each $P^{(i)}$ and its "hub" $S^{(i)}$, the flow from each $p_j^{(i)} \in P^{(i)}$ to each $s_l^{(i)} \in S^{(i)}$ is bounded by $[0,1]$. Here, the flow $f(p_j^{(i)}, s_l^{(i)})$ denotes the amount of the weight that assigned from $p_j^{(i)}$ to $s_l \in S$ in an $(\alpha,\beta)$-fair $k$-means solution. Subsequently, recall that in the solution $(S,\phi^*_S)$ we obtained before, the weight received by a center $s_l \in S$ from group $P^{(i)}$ is $w^{(i)}(s_l)$. We bound the flow $f(s_l^{(i)},s_l)$ by $\big{[}\lfloor w^{(i)}(s_l)\rfloor,\lceil  w^{(i)}(s_l)\rceil\big{]}$.
% which means the $s_l^{(i)}$ accepts the weights from group $P^{(i)}$ and bounded by the fairness constraints. 
Finally, the flow from each $s_l \in S$ to the "Sink" node is bounded by $\big{[}\lfloor w(s_l)\rfloor,\lceil  w(s_l)\rceil\big{]}$, and we set $f(Sink, Source)=n$ to form a circulation. At this point, we have established an instance of the minimum cost circulation problem, denoted by $MCCP(S,\phi^*_S)$. Obviously, we have the following observation:

\begin{observation}
    $\phi^*_S$ induces a feasible solution of $MCCP(S,\phi^*_S)$.
\end{observation}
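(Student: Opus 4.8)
The plan is to exhibit an explicit circulation built directly from the fractional assignment matrix $\phi^*_S$ and then verify that it respects every capacity constraint and obeys flow conservation at all nodes. Concretely, for each point $p_j^{(i)} \in P^{(i)}$ and each hub vertex $s_l^{(i)} \in S^{(i)}$ I would set $f(p_j^{(i)}, s_l^{(i)}) = \phi^*_S(p_j^{(i)}, s_l)$; on the arc from $s_l^{(i)}$ to $s_l$ I would set $f(s_l^{(i)}, s_l) = w^{(i)}(s_l)$; on the arc from $s_l$ to the sink I would set $f(s_l, \mathrm{Sink}) = w(s_l)$; every arc out of the source carries unit flow; and $f(\mathrm{Sink}, \mathrm{Source}) = n$. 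This is the natural ``lift'' of $\phi^*_S$ into the network, in which the hub layer simply records how much weight of each group passes through each center.

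First I would check the arc bounds. The arcs from $p_j^{(i)}$ to $s_l^{(i)}$ carry $\phi^*_S(p_j^{(i)}, s_l) \in [0,1]$, as required. The arc $s_l^{(i)} \to s_l$ has bounds $[\lfloor w^{(i)}(s_l)\rfloor, \lceil w^{(i)}(s_l)\rceil]$, and since $w^{(i)}(s_l) = \sum_{p \in P^{(i)}} \phi^*_S(p, s_l)$ by definition, its value trivially lies in this interval; the same reasoning applies to the arc $s_l \to \mathrm{Sink}$ whose bounds are $[\lfloor w(s_l)\rfloor, \lceil w(s_l)\rceil]$. This is the key mechanism behind the construction: the integer bounds are obtained precisely by rounding the exact fractional weights, so the fractional solution is always sandwiched between them.

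Next I would verify conservation layer by layer. At a point $p_j^{(i)}$, the outflow $\sum_l \phi^*_S(p_j^{(i)}, s_l)$ equals $1$ by the assignment constraint $\sum_{s \in S} \phi_S(p,s) = 1$, matching the unit inflow from the source. At a hub $s_l^{(i)}$, the inflow $\sum_j \phi^*_S(p_j^{(i)}, s_l)$ is exactly $w^{(i)}(s_l)$, which equals the prescribed outflow to $s_l$. At a center $s_l$, the inflow $\sum_i w^{(i)}(s_l)$ equals $w(s_l)$, matching the outflow to the sink. Finally the total flow out of the source is $n$, which equals $f(\mathrm{Sink}, \mathrm{Source}) = n$ and closes the circulation.

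I expect the only genuinely delicate point to be the bookkeeping at the hub and center layers, namely confirming that the group-wise weights $w^{(i)}(s_l)$ aggregate correctly into $w(s_l)$ and that the total mass is exactly $n$; these are immediate from the definitions of $w^{(i)}(\cdot)$ and $w(\cdot)$ together with the fact that $\phi^*_S$ assigns unit weight to each of the $n$ points. Once these identities are spelled out, feasibility of the constructed flow is immediate, and the observation follows.
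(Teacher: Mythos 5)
Your proposal is correct and is essentially the paper's own argument, which simply states that the flow induced by $\phi^*_S$ meets all the bounds; you have just spelled out the induced flow explicitly and verified the capacity and conservation constraints layer by layer. No further comment is needed.
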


The observation is straightforward because the flow induced by $\phi^*_S$ meet all the bounds applied to the flow. Then, we give the proof of Lemma~\ref{lem-rounding} mentioned in Section~\ref{sub-frac}.

\paragraph{Lemma~\ref{lem-rounding}.}
There exists an algorithm that can round a fractional solution of $(\alpha,\beta)$-fair $k$-means to integral with at most $2$-violation while the cost does not increase.

\begin{proof}
    It is known that the minimum cost circulation problem has an integrality property~\citep{cormen2009introduction}, which guarantees that if the arcs have integer capacities, there will always be an optimal solution with integer flow values on each arc. Utilizing an algorithm for minimum cost circulation problem or minimum cost flow problem (the two problems are equivalent), which  converges to an integer solution like Ford-Fulkerson~\citep{Ford_Fulkerson_1956}, we can obtain an integer optimal solution of $MCCP(S,\phi^*_S)$, which has a cost no larger than the solution induced by $\phi^*_S$. 
    
    Next, we prove that the assignment matrix, say $\phi'_S$, induced by the integer optimal solution of $MCCP(S,\phi^*_S)$ is a 2-violation assignment from $P$ to $S$. 
    Recall that we presented the definition of the violation factor in Section~\ref{sec-pre}: An assignment matrix $\phi_S$ is a $\lambda$-violation solution if $\beta_i \sum_{p\in P}\phi_S(p,s)  - \lambda \le \sum_{p\in P^{(i)}} \phi_S(p,s) \le \alpha_i \sum_{p\in P}\phi_S(p,s) + \lambda, \quad \forall s\in S, \forall i\in [m]$. According to the construction procedure of $MCCP(S,\phi^*_S)$, the lower bound of the flow $f(s_l^{(i)},s_l)$ is $\lfloor w^{(i)}(s_l)\rfloor$, which satisfies:

    \begin{equation}
        \begin{aligned}
    \lfloor w^{(i)}(s_l)\rfloor 
            &\geq \lfloor \alpha^{(i)} (\lceil w(s_l)\rceil -1)\rfloor \\
            &= \lfloor \alpha_i \lceil w(s_l)\rceil -\alpha_i\rfloor \\
            &\geq \lceil \alpha_i \lceil w(s_l)\rceil -\alpha_i \rceil -1 \\
            &\geq \big{(} \alpha_i \lceil w(s_l)\rceil -\alpha_i \big{)}-1. \\
        \end{aligned}
    \end{equation}

    Note that the upper bound of the flow $f(s_l,Sink)$ is $\lceil w(s_l)\rceil$ so we have:
        \begin{equation}
        \begin{aligned}
        \lfloor w^{(i)}(s_l)\rfloor 
        &\geq  \alpha_i \lceil w(s_l)\rceil -\alpha_i -1 \\
        &\geq \alpha_i \lceil w(s_l)\rceil -2, \\
        \end{aligned}
        \end{equation}
        and similarly,
        \begin{equation}
        \begin{aligned}
        \lceil w^{(i)}(s_l)\rceil 
        &\leq  \beta_i \lfloor w(s_l)\rfloor +\beta_i +1 \\
        &\leq \beta_i \lfloor w(s_l)\rfloor +2, \\
        \end{aligned}
        \end{equation}
        which indicates that $\phi'_S$ is a 2-violation assignment and complete the proof of Lemma~\ref{lem-rounding}.
    % \emph{i.e.}, $(S,\phi'_S)$ is 
\end{proof}

\section{Fixed Support Wasserstein Barycenter}
\label{fix-wb}
Given $m$ discrete distributions (weighted point sets, each set has total weight sum to $1$) $P^{(1)},\cdots, P^{(m)}$ and a $k$ locations set $T$ of WB, the objective of fixed support WB as follows:
\begin{eqnarray}
    \begin{aligned}
        \min_{x} \quad     & \frac{1}{m} \sum_{l = 1}^m \sum_{i = 1}^{n^{(i)}} \sum_{j=1}^{n^{(j)}} \Vert P^{(l)}_i - T_j \Vert ^2 x^{(l)}_{ij}                                      \\
      s.t. \quad & \sum_{j=1}^k x^{(l)}_{ij} = 1, \quad \forall l \in [m], \forall i \in [n^{(l)}] \\
      & \sum_{i=1}^{n^{(l)}} x^{(l)}_{ij} w(P^{(w)}_i) = y_j, \quad \forall l \in [m], \forall j \in [k] \\
      & \sum_{j=1}^k y_j = 1, \\
      & x^{(l)}_{ij} \ge 0, \quad \forall l \in [m], \forall i \in [n^{(l)}], \forall j \in k \\
      & y_j\ge 0 , \quad \forall j \in k
      \label{eq-wb}
    \end{aligned}
\end{eqnarray}
It is easy to see that {\color{black}fixed support WB} problem can be solved using linear programming method. Several existing works {\color{black}on solving LP (\ref{eq-wb}) including \citep{claici2018stochastic, cuturi2014fast, cuturi2016smoothed, lin2020fixed}.}

For the sake of completeness, we need to clarify how the solution to the $k$-sparse Wasserstein barycenter solution is guaranteed to be a distribution.
After we run Algorithm 1, we obtain the support $S$ (the locations of centers) of the returned solution and the assignment matrix $\phi_S^*$ (the transportation weight from $p = P^{(l)}_{i}$ to $f = S_j$ is denoted by $\phi^*_{S}(p,f) = x^{(l)}_{ij}$ in (\ref{eq-wb})). The key question is how to ensure that the summation of the weight of points in $S$ is equal to 1. Let us consider an arbitrary given distribution (or "group" in the context of fair $k$-means), e.g., $P^{(l)}$. For every facility $f$ in $S$, we define its weight $w(f) = \sum_{p\in P^{(l)}} \phi^*_{S}(p,f)$. This ensures that the total weight of $S$ must be equal to the total weight of $P^{(l)}$, which is 1 because $P^{(l)}$ is a distribution. The choice of $P^{(l)}$ can be arbitrary because, recall that $k$-sparse WB can be seen as a special fractional version of strictly fair $k$-means, meaning no matter which given distribution you choose, you will obtain the same weight distribution of $S$. The optimization will not change by setting the weight of $S$ because the weight of $S$ does not affect the cost. 

\section{Extend Algorithm~\ref{alg-fair} to \texorpdfstring{$k$}{}-Median and \texorpdfstring{$k$}{}-Means in General Metric Space}

Although we mainly consider the fair $k$-means problem in Euclidean space in this paper, for the sake of completeness, in this section, we illustrate how to extend our framework to solve $k$-median and $k$-means in general metric space. In summary, if the potential facility set is given, our framework achieves a $(1+2\rho)$-approximate solution for $k$-median ($(2+8\rho)$-approximate solution for $k$-means) in metric space, where $\rho$ is the approximation ratio for vanilla $k$-median ($k$-means) with a constant violation factor. If the metric space has a fixed doubling dimension~\citep{gupta2003bounded}, then equipped with existing PTAS for metric $k$-median and $k$-means~\citep{cohen2021near, cohen2019local, friggstad2019local}, the best approximation ratios our framework can achieve are $(3+O(\epsilon))$ for fair $k$-median and $(10+O(\epsilon))$ for fair $k$-means.

Unfortunately, our theoretical guarantees in general metric space are weaker than those of \citet{bera2019fair}, in which they obtained a $(\rho+2)$-approximation for $k$-median and a $(\sqrt{\rho} + 2)^2$-approximation for $k$-means. The obstacle to achieving a better approximation ratio for our framework is the "candidate set". In Euclidean space, we have an approximate centroid set. However, in general metric space, how can we obtain a candidate set that has similar properties to Proposition~\ref{lem-centroid}, which provides a more powerful tool than the basic triangle inequality? This is not only a potential future work of our framework but also an important open theoretical problem.

% When we consider $k$-clustering problem in general metric space, we usually assume that the potential facility set $T$ is given. Therefore, we can use $T$ as the candidate set, then $\eta \le 1$. Therefore, modified Algorithm 1 obtains $(1+2\rho)$-approximation for fair $k$-median problem and $2+8\rho$-approximation for fair $k$-means problem in metric space.

% Algorithm~\ref{alg-fair} can be easily extended to address $k$-median and $k$-means in general metric space.

\paragraph{$k$-Median in metric space.}Firstly, we consider fair $k$-median in general metic space. We use $\dist(\cdot, \cdot)$ to denote the distance between two points. We assume that the potential facility set $T$ is given. Therefore, in Algorithm~\ref{alg-fair}, we just use the given facility set $T$ rather than computing the approximate centroid set. The cost of fair $k$-median can be written as 
\begin{equation}
    \begin{aligned}
        \mathtt{Cost}(P, S, \phi_S^*)
        &= \sum_{p\in P}\sum_{s\in S} \mathtt{dist}( p,s) \phi_S^*(p,s). \\
        \end{aligned}
        \end{equation}
Similar to Lemma~\ref{the-ratio}, we have the following lemma.
\begin{lemma}
Let $\eta$ be any positive number. If we suppose $\mathtt{Cost}(P, T, \phi_T^*) \le \eta \cdot OPT$, then the solution $(S, \phi_S^*)$ returned by Algorithm~\ref{alg-fair} (the construction of $T$ should be slightly changed) is an $\big(\eta+(\eta + 1)\rho\big)$-approximate solution for fair $k$-median problem in metric space, where $\rho$ is the approximation ratio of vanilla $k$-median.
    \label{median-ratio}
\end{lemma}
%% \vspace{-0.1in}
\begin{proof}
% \begin{observation}
%     Every $\tilde{s} \in S_{opt}$ is the centroid of $N_O(V)$, \emph{i.e.}, 
% %$o = \frac{\sum_{p\in N_O(V)}p\cdot \phi_{S_{opt}}(p,\tilde{s})}{\sum_{p\in N_O(V)}}\phi_{S_{opt}}(p)$    
%     { $o = \frac{\sum_{p\in N_O(V)}p\cdot \phi_{S_{opt}}(p,\tilde{s})}{\sum_{i=1}^{m} w_O^{(i)}(V)}$}.
% \end{observation}
Now we consider another assignment strategy: we firstly assign $P$ to $T$ according to $\phi_T^*$ ( recall that $\phi_T^*$ is the optimal fractional assignment matrix from $P$ to $T$), and then we assign every weighted point in $T$ to some $s\in S$ such that $s$ is closest point to $\pi(t)$. Since $\phi_S^*$ is the optimal assignment matrix from $P$ to $S$, the cost of this assignment strategy should have:
\begin{equation}
    \begin{aligned}
       % &\le 
        % \sum_{i=1}^{m}\sum_{p\in P^{(i)}}\sum_{t\in T} \Vert p-\mathcal{N}(\pi(t), S)\Vert ^2 \phi_T^*(p,t) \\
        % \sum_{p\in P}\sum_{t\in T} \Vert p-\mathcal{N}(\pi(t), S)\Vert ^2\phi_T^*(p,t)&\geq \mathtt{Cost}(P, S, \phi_S^*).\\
   \mathtt{Cost}(P, S, \phi_S^*) &\le \sum_{p\in P}\sum_{t\in T} \mathtt{dist}( p,\mathcal{N}(t, S)) \phi_T^*(p,t) \\
        &\le\sum_{t\in T}\sum_{p\in P} \Big[\mathtt{dist}( p,t) + \dist( t , \mathcal{N}(t, S))\Big]\phi_T^*(p,t)\\
        &= \underbrace{\sum_{p\in P}\sum_{t\in T} \mathtt{dist} (p,t)\phi_T^*(p,t)}_{\text{(a)}} + \underbrace{\sum_{p\in P}\sum_{t\in T}\mathtt{dist}( t , \mathcal{N}(t, S))\phi_T^*(p,t)}_{(b)}.
    \end{aligned}
\end{equation}
The second inequality is triangle inequality. Then we bound (a) and (b) separately. Firstly, 
\begin{equation}
    \begin{aligned}
        (a)=\sum_{p\in P}\sum_{t\in T} \mathtt{dist}( p,t)\phi_T^*(p,t) = \mathtt{Cost}(P,T,\phi^*_{T}) \le \eta \cdot OPT
    \end{aligned}
\end{equation}
% The first inequality holds because $\pi(t)$ is the centroid of the weighted points assigned to $t$, so that $\pi(t)$ minimizes the sum of the squared distances. 
% The first inequality holds because $\pi(t)$ is the centroid of the weighted points assigned to $t$, minimizing the weighted sum of the squared distances between them.
% from points in $C_t$. 
Next, we focus on (b). 
Suppose $S_{median}$ is the optimal $k$-median solution of $T$. Then we have:
% we use $nrst_{S^*} (t)$ denote the nearest center of $t$ in $S^*$.
\begin{equation}
    \begin{aligned}
(b)&=\sum_{p\in P}\sum_{t\in T}\dist ( t, \mathcal{N}(t, S))\phi_T^*(p,t) \\
        &\le \rho \sum_{p\in P}\sum_{t\in T}\dist (t, \mathcal{N}(t, S_{median})\phi_T^*(p,t) \\
        % &= \rho \sum_{p\in P}\sum_{t\in T} \Vert \pi(t) - \mathcal{N}(\pi(t), S_{means})\Vert ^2 \phi_T^*(p,t)\\
        &= \rho \sum_{p\in P}\sum_{t\in T} \big[\sum_{\tilde{s} \in S_{opt}} \dist ( t , \mathcal{N}(t, S_{median}))\phi_{S_{opt}}^*(p,\tilde{s})\big]\phi_T^*(p,t)\\
        &\le \rho \sum_{p\in P}\sum_{t\in T} \big[\sum_{\tilde{s} \in S_{opt}} \dist (t , \tilde{s})\phi_{S_{opt}}^*(p,\tilde{s})\big]\phi_T^*(p,t).
\end{aligned}
\end{equation}
Further, according to the triangle inequality, we have

\begin{equation}
    \begin{aligned}
        (b)&\le \rho \sum_{p\in P}\sum_{t\in T} \Big[\sum_{\tilde{s} \in S_{opt}} \big[ \dist ( t , p)  + \dist( p , \tilde{s})  \big]\phi_{S_{opt}}^*(p,\tilde{s})\Big]\phi_T^*(p,t)\\
        &\le \rho \sum_{p\in P}\sum_{t\in T} \sum_{\tilde{s} \in S_{opt}}\dist( t , p)\phi_{S_{opt}}^*(p,\tilde{s})\phi_T^*(p,t) \\&+ \rho\sum_{p\in P}\sum_{t\in T} \sum_{\tilde{s} \in S_{opt}}\dist( p , \tilde{s})\phi_{S_{opt}}^*(p,\tilde{s})\phi_T^*(p,t) \\
        & = \rho \sum_{p\in P}\sum_{t\in T} \dist (t , p)\phi_T^*(p,t) + \rho\sum_{p\in P}\sum_{\tilde{s}\in S_{opt}} \dist ( p , \tilde{s})\phi_{S_{opt}}^*(p,\tilde{s}). 
        % \le (2\eta + 2)\rho \cdot OPT.
    \end{aligned}
\end{equation}
The last equality holds because for any $p\in P$, $\sum_{\tilde{s} \in S_{opt}} \phi_{S_{opt}}^*(p,\tilde{s}) = 1$ and $\sum_{\tilde{t} \in T} \phi_{T}^*(p,t) = 1$.
The first term is exactly $\rho$ times of (a) and the second term equals $\rho \cdot OPT$.
Through combining (a) and (b),  we can obtain an approximation factor of $\eta + (\eta + 1)\rho$.
\end{proof}

\paragraph{$k$-Means in metric space.} Using the same idea of Lemma~\ref{median-ratio} with squared triangle inequality $\dist^2 (a,b) \le 2\dist^2(a,c) + 2\dist^2(c,b)$, we can immediately obtain the following corollary.
\begin{corollary}
    Let $\eta$ be any positive number. If we suppose $\mathtt{Cost}(P, T, \phi_T^*) \le \eta \cdot OPT$, then the solution $(S, \phi_S^*)$ returned by Algorithm~\ref{alg-fair} (slightly changed as above) is an $\big(2\eta+(4\eta + 4)\rho\big)$-approximate solution for fair $k$-means problem in metric space, where $\rho$ is the approximation ratio of vanilla $k$-means.
\end{corollary}

When considering $k$-clustering problem in metric space, we usually assume that the potential facility set is given. We just use it as our candidate set $T$. Hence, the $\eta = 1$ in the above analysis, which leads a $(2+\rho)$-approximation for fair $k$-median and a $(2+8\rho)$-approximation for fair $k$-means.

\section{Supplementary Experiment}
\label{sup-exp}
\subsection{Datasets}
The detailed information of our datasets is shown in Table~\ref{tab-dataset}. The group partition of every dataset is based on the ``Group Column''. Every group column has some group values. The set of groups is the Cartesian product of group values of all group column. For example, the groups of \textbf{Bank} dataset are (married, yes), (married, no), (single, yes), (single, no), (divorced, yes), (divorced, no). For large dataset  \textbf{Census} and \textbf{Creditcard}, we sample 1000 points to make sure the LP solver works in acceptable time. 
% All the datasets of our experiments from UCI repository have CC BY 4.0 license.
\begin{table}[ht]
\caption{Detailed Datasets Information}
\begin{tabular}{rcccc} 
\textbf{Dataset} & \textbf{Size} & \textbf{Dimension} & \begin{tabular}{r} 
\textbf{Group Column}
\end{tabular} & \textbf{Groups Values} \\ \toprule Bank &9999 & 3 & marital & married, single, divorced \\
\cline { 4 - 5 } & & & default & yes, no \\
 \hline \multirow{2}{*}{ Adult } &4522 & 5 & sex & female, male \\
\cline { 4 - 5 } & &  & race & Amer-ind, asian-pac-isl, \\
& &  & & black, other, white \\
\hline Creditcard & 30000 & 5 & marriage & married, single, other, null \\
\cline { 4 - 5 } & &  & education & 7 groups \\
\hline Census1990 &50000 & 12 & dAge & 8 groups \\
\cline { 4 - 5 } & & & iSex & female, male \\
\hline
Moons & 200 & 2 & color & 2 groups \\
\hline
Hypercube & 200 & 3 & color & 2 groups \\
\hline
Complex & 3032 & 2 & color & 9 groups \\
\hline
Cluto & 10000 & 2 & color & 8 groups \\ \hline
Breastcancer & 570 & 31 & label & 2 groups \\ \hline
Biodeg  & 1055  & 40 & label & 2 groups \\
\bottomrule
\end{tabular}
\label{tab-dataset}
\end{table}
% \subsection{Comparison on Violations}
% We compare the violation yielded by the rounding technique in Section~\ref{sec-frac} to the vanilla $k$
% -means and NIPS19~\citep{bera2019fair} in strictly fair setting. We use the definition of violation introduced by \citep{bera2019fair}. The result shows that the violation introduced by our rounding method is $0$ in most cases and no more than $1$. And our rounding method achieve the smallest violation.
\subsection{Comparison on Cost with different \texorpdfstring{$k$}{} and \texorpdfstring{$(\alpha, \beta)$}{}}
\label{D2}

In the main paper, we set $\alpha_i = \beta_i = \frac{|P^{(i)}|}{|P|}$. Here, we try different $\alpha$ and $\beta$ to compare our algorithm to baselines. In order to make sure that the values of $\alpha$ and $\beta$ are feasible, we introduce the parameter $\delta \in (0,1)$, which represents the degree of relaxation of fairness constraints, with a larger $\delta$ indicating looser constraints. We set $\alpha_i = \frac{|P^{(i)}|}{|P|} \cdot \frac{1}{1-\delta}$ and $\beta_i = \frac{|P^{(i)}|}{|P|} \cdot {(1-\delta)}$. We set $\delta = 0.1$ and $0.2$ to compare the cost
with baselines. The results are shown in Figure~\ref{fig:delta_1} and Figure~\ref{fig:delta_2}, respectively.

{In fact, as $\delta$ increases, the fairness constraints of the $(\alpha,\beta)$-fair $k$-means problem become more relaxed, and the corresponding fair $k$-means problem approaches the vanilla $k$-means problem. In cases where $\delta$ is large, in each cluster, the legal range of points from each group is larger, making the protection of fairness constraints less important, thus resulting in the optimal fair $k$-means center positions being very close to the  centers of vanilla $k$-means. In the Table 2 of \citep{bohm2021algorithms}, it is mentioned that when $\delta=0.2$, the clustering results of vanilla $k$-means only violate the fairness constraints by 0.4\%-2\%, which makes our algorithm less advantageous under a relatively relaxed $\delta$ value.}

\begin{figure}[ht]
    \centering
    \includegraphics[width=\textwidth]{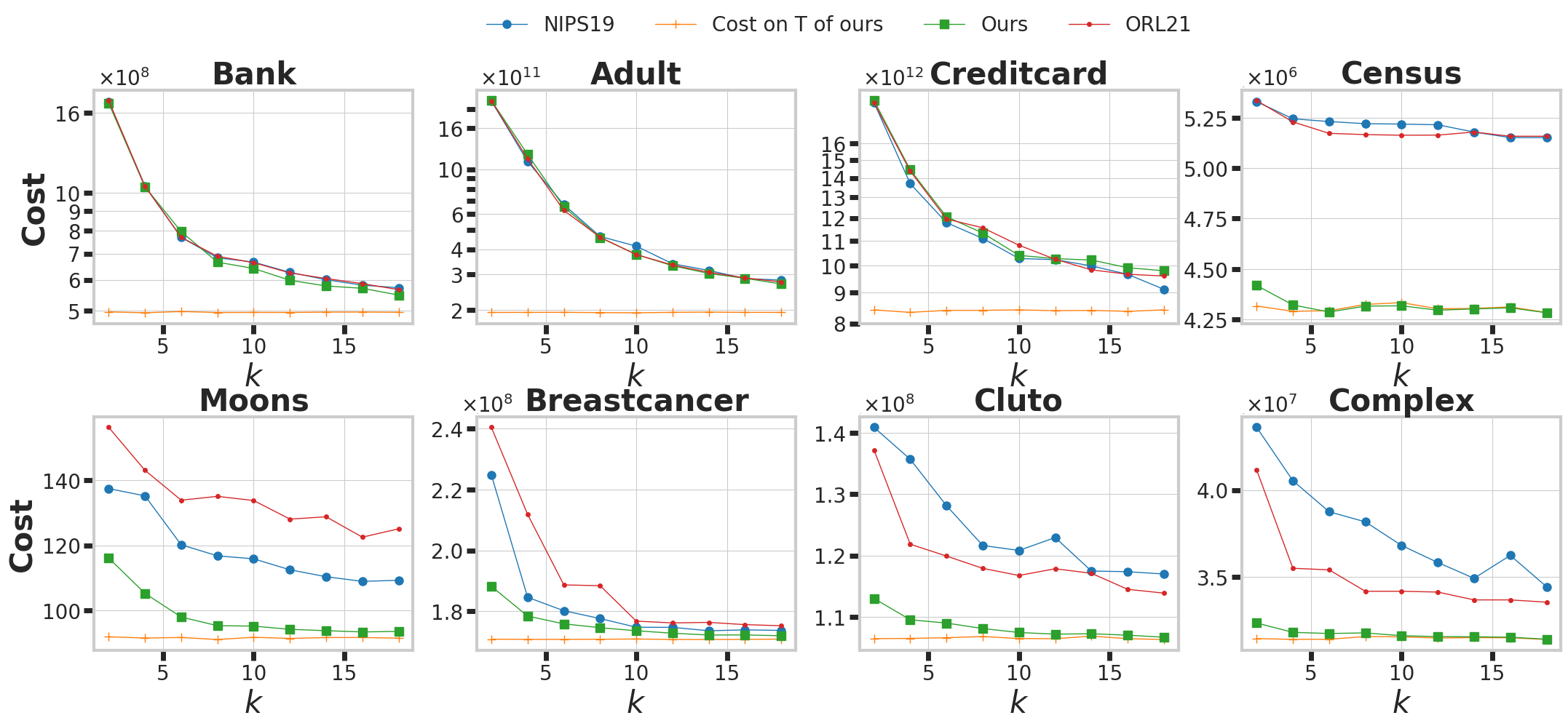}
    \caption{Comparison on Clustering Cost with $\delta = 0.1$}
    \label{fig:delta_1}
\end{figure}

\begin{figure}[ht]
    \centering
    \includegraphics[width=\textwidth]{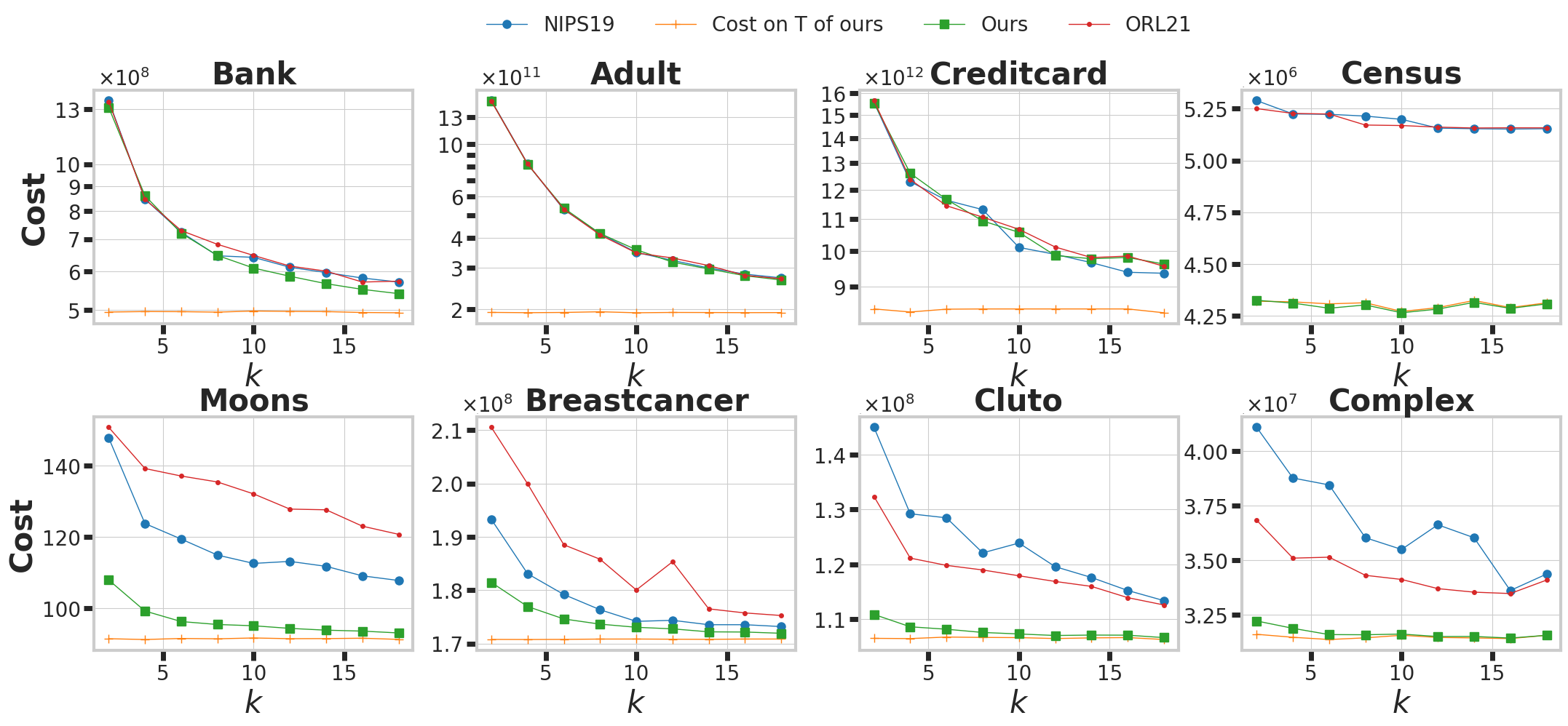}
    \caption{Comparison on Clustering Cost with $\delta = 0.2$}
    \label{fig:delta_2}
\end{figure}

\subsection{Comparison on Cost of \texorpdfstring{$k$}{}-sparse Wasserstein Barycenter}
We compare our algorithm with the very recent work~\citep{yang2024approximate} (denoted by IJCAI24) who obtain $(2+\sqrt{\rho})^2$-approximate solution of $k$-sparse WB. The results are shown in Figure~\ref{fig-wb}. In most cases, our algorithm can achieve a  10\%-30\% cost advantage over the previous work.
\begin{figure}
    \centering
    \includegraphics[width = \textwidth]{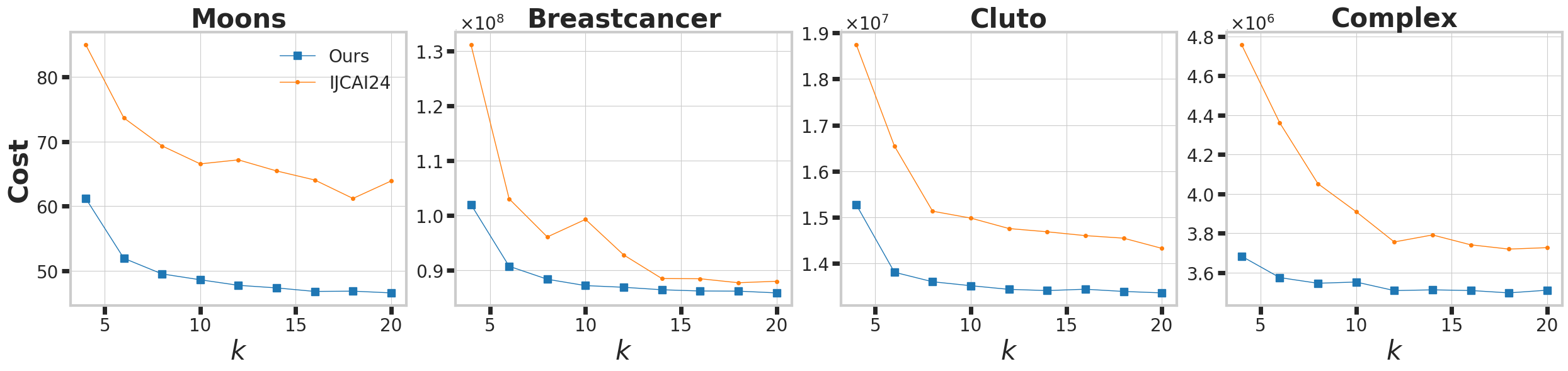}
    \caption{Comparison on the Cost of $k$-sparse Wasserstein Barycenter}
    \label{fig-wb}
\end{figure}

\subsection{Cost on Different sampling ratio}
\label{sec-sample}
In our algorithm, the most time consuming step is to solve LP(\ref{eq-fair}) on $T$. A key observation during our experiment is that, after solving LP(\ref{eq-fair}) on $T$, a large amount of points of $T$ have weight of $0$. Therefore, it is possible to reduce the size of $T$ while maintain the quality of $T$. Meanwhile, smaller $T$ helps to reduce the running time. In order to verify our thoughts, we use sampling method after we obtain $T$. We use sampling ratio of $100\%$, $50\%$, $20\%$ and $10\%$ and calculate the final cost of Algorithm~\ref{alg-fair} with different $k$. The results are shown in Figure~\ref{fig-sample-5}\ref{fig-sample-10}\ref{fig-sample-15}\ref{fig-sample-20}. In these figures, we can see that in most cases, the cost of sampled $T$ do not increase too much (50\% sample yields no more than 10\% cost increasing and even 10\% sample yields no more than 20\% cost increasing in most cases).

\subsection{Running time with different sampling ratio on \texorpdfstring{$T$}{}}
As we discussed in \ref{sec-sample}, sampling on relaxed solution $T$ can reduce the running time while the overall cost not increasing too much. We also test the running time with different sampling ratio. In summary, the running time of solving LP(\ref{eq-fair}) on $T$ and overall Algorithm~\ref{alg-fair}, shown in Table~\ref{tab-time-t} and Table~\ref{tab-time-overall}, can be significantly reduced by sampling.

\begin{table}[htbp]
\label{tab-time-t}
\caption{Time (seconds) of solving LP(\ref{eq-fair}) on $T$ with different sampling ratio}
\begin{tabular}{rcccc}
\toprule
\textbf{Dataset} & \textbf{100\%} & \textbf{50\%} & \textbf{20\%} & \textbf{10\%} \\ \midrule
Bank             & 39.97          & 19.14         & 7.12          & 3.39          \\ \hline
Adult            & 66.48          & 28.58         & 9.67          & 4.64          \\ \hline
Creditcard       & 80.235         & 32.51         & 11.08         & 5.43          \\ \hline
Census           & 76.46          & 37.78         & 13.96         & 6.64          \\ \hline
Moons            & 3.75           & 1.89          & 0.68          & 0.33          \\ \hline
Breastcancer     & 11.03          & 5.28          & 2.01          & 1.07          \\ \hline
Cluto            & 192.57         & 91.72         & 36.03         & 18.18         \\ \hline
Complex          & 49.70          & 24.74         & 9.11          & 4.54 \\ \bottomrule        
\end{tabular}
\end{table}

\begin{table}[htbp]
\label{tab-time-overall}
\caption{Overall time (seconds) with different sampling ratio of $T$ when $k=20$}
\begin{tabular}{rcccc}
\toprule
\textbf{Dataset} & \textbf{100\%} & \textbf{50\%} & \textbf{20\%} & \textbf{10\%} \\ \midrule
Bank             & 42.03          & 21.20         & 9.16          & 5.44          \\ \hline
Adult            & 69.19          & 31.24         & 12.31         & 7.34          \\ \hline
Creditcard       & 83.42         & 35.62          & 14.20         & 8.57          \\ \hline
Census           & 80.23          & 41.62         & 17.86         & 10.48          \\ \hline
Moons            & 4.07           & 2.15          & 0.97          & 0.60          \\ \hline
Breastcancer     & 11.78          & 6.05          & 2.68          & 1.67          \\ \hline
Cluto            & 201.54         & 100.64         & 45.82         & 27.74         \\ \hline
Complex          & 52.23          & 27.25         & 11.66          & 7.05 \\ \bottomrule        
\end{tabular}
\end{table}

\begin{figure}[htbp]
    \centering
    \includegraphics[width=\textwidth]{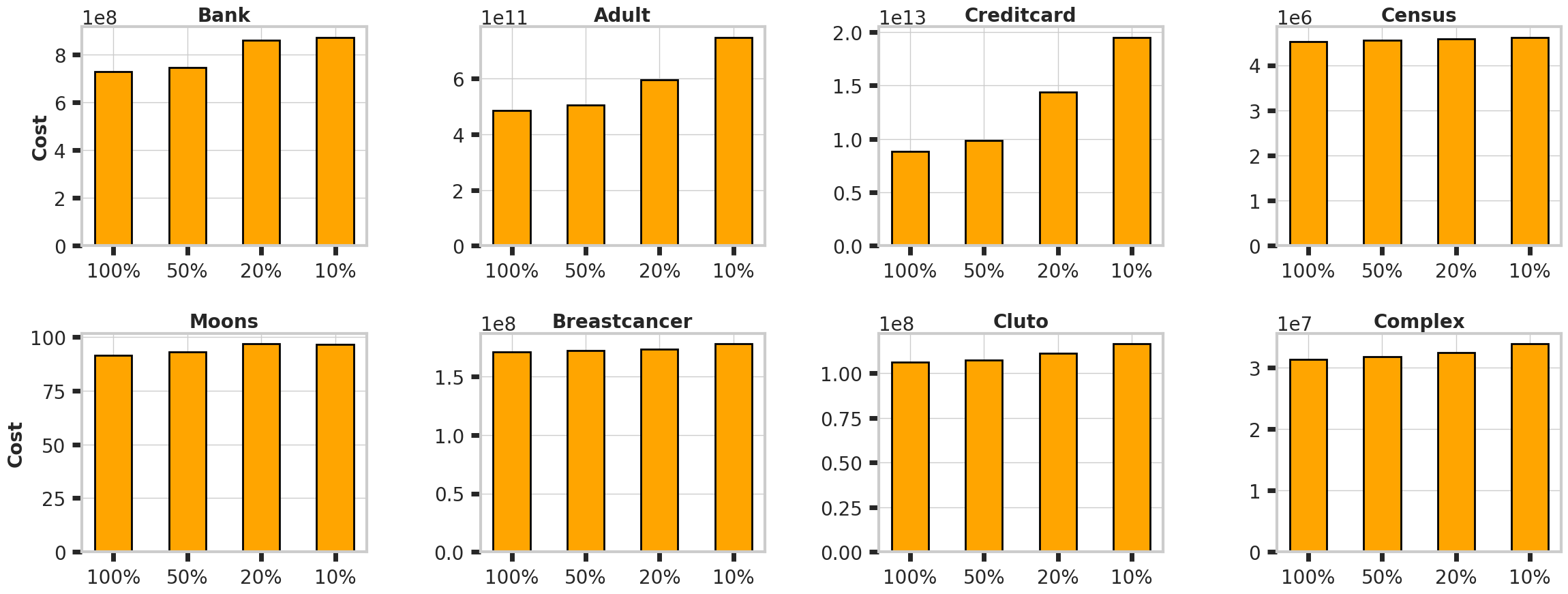}
    \caption{The cost on centriod set $T$ with different sampling ratio when $k=5$}
    \label{fig-sample-5}
\end{figure}
\begin{figure}[htbp]
    \centering
    \includegraphics[width=\textwidth]{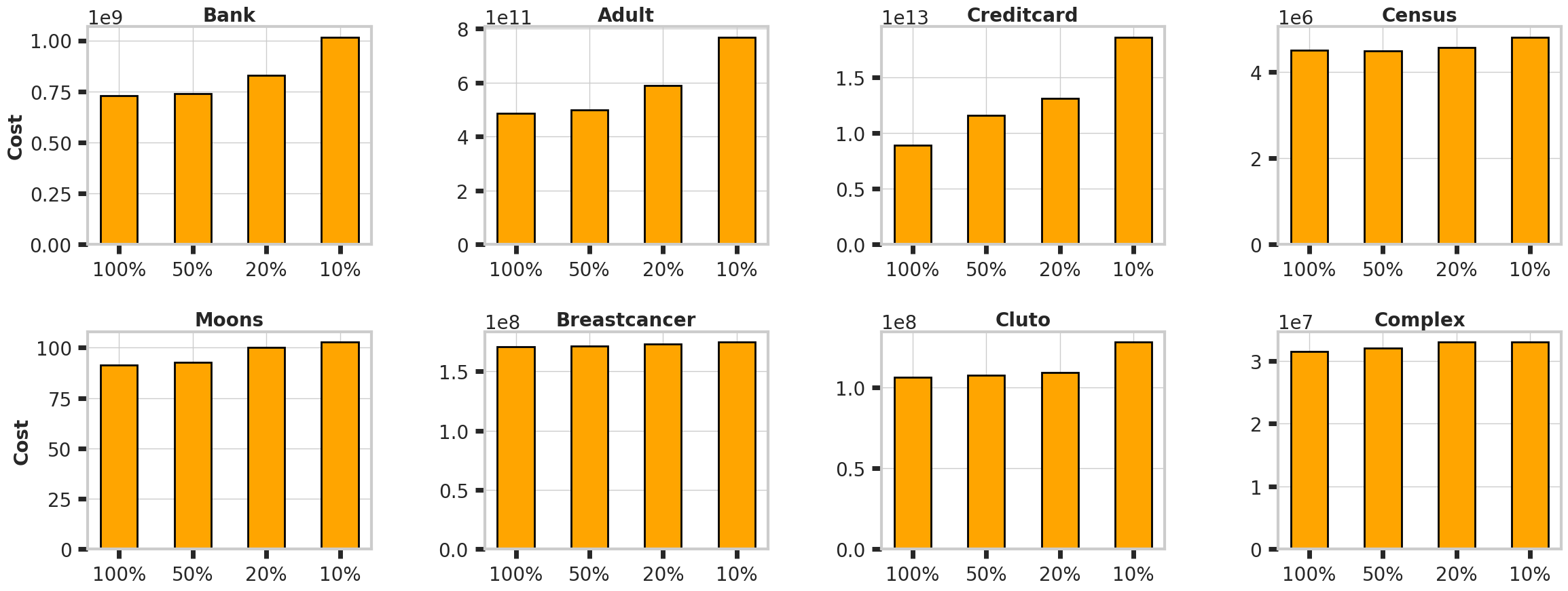}
    \caption{The cost on centriod set $T$ with different sampling ratio when $k=10$}
    \label{fig-sample-10}
\end{figure}
\begin{figure}[htbp]
    \centering
    \includegraphics[width=\textwidth]{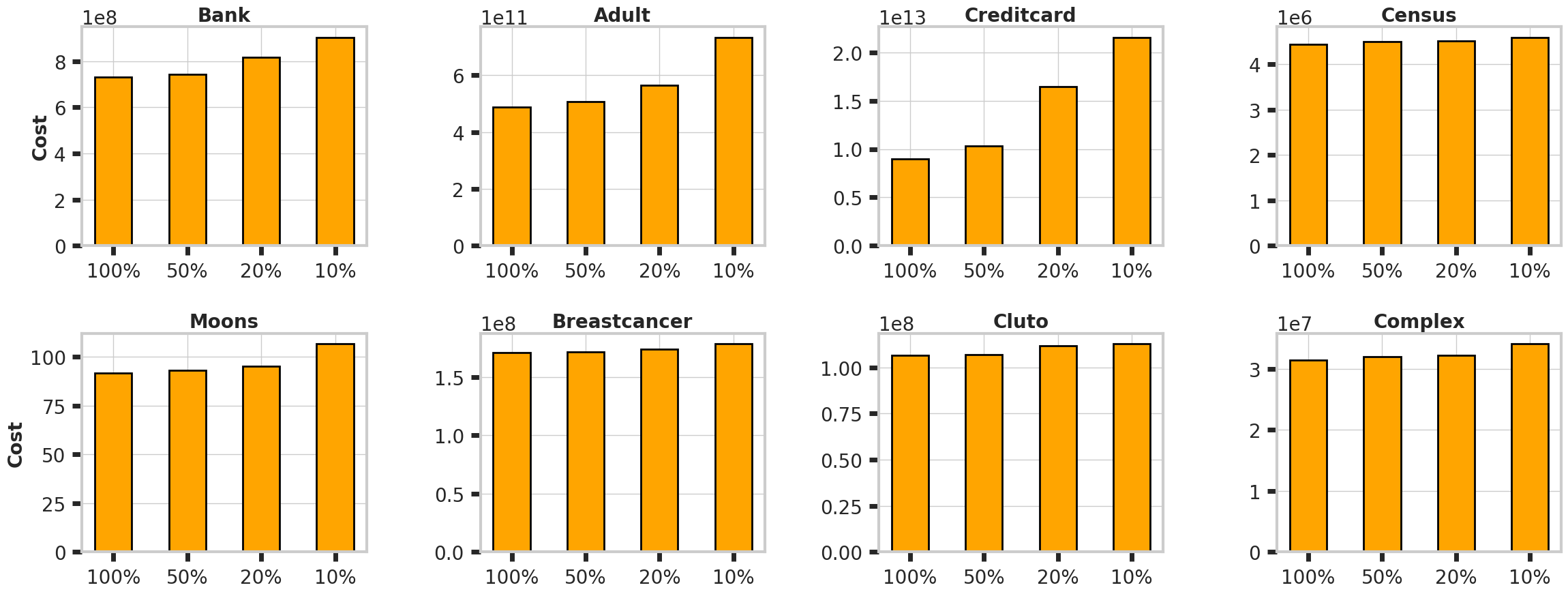}
    \caption{The cost on centriod set $T$ with different sampling ratio when $k=15$}
    \label{fig-sample-15}
\end{figure}
\begin{figure}[htbp]
    \centering
    \includegraphics[width=\textwidth]{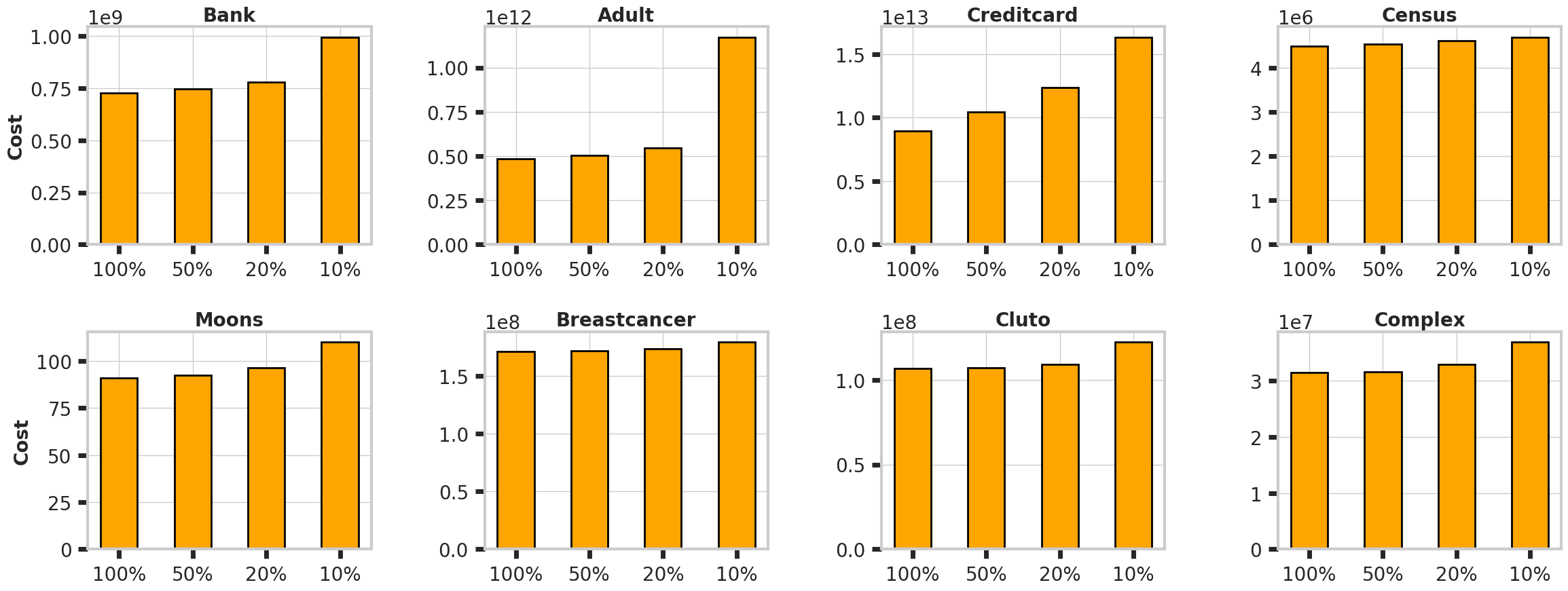}
    \caption{The cost on centriod set $T$ with different sampling ratio when $k=20$}
    \label{fig-sample-20}
\end{figure}

\subsection{Comparison of Running Time with  baselines}

We compared the running time of our algorithm {(Algorithm~\ref{alg-fair} with our rounding technique)} with the baseline  NIPS19~\citep{bera2019fair}. For strictly fair datasets, we also tested the running time of Algorithm~\ref{alg-balance} and ORL21~\citep{bohm2021algorithms}. The results are shown in Table~\ref{non-strict-time} and Table~\ref{strict-time}. {Below, we provide a detailed analysis on the comparisons.} 

\paragraph{Comparison between Algorithm~\ref{alg-fair} and NIPS19~\citep{bera2019fair}.} Algorithm~\ref{alg-fair} and NIPS19 both have two important subprocedures: linear programming and the $k$-means algorithm. These two steps are the bottlenecks for Algorithm~\ref{alg-fair} and NIPS19. {Specifically, NIPS19 first runs the $k$-means algorithm (\emph{i.e.}, $k$-means++), and then}  calls the LP solver once to compute the fractional assignment. {A different part of our Algorithm~\ref{alg-fair} is that it} calls the LP solver twice, once to compute the weights of candidate set $T$ and once to compute the fractional assignment, and calls the $k$-means algorithm once. In Algorithm~\ref{alg-fair}, we only need to run $k$-means on $T$, which should be much smaller than the whole dataset, leading to less running time for the $k$-means subprocedure compared to NIPS19. However, the first call to the LP solver to compute the weight of $T$ consumes more time than the second call because $|T| > k$ usually. We illustrate the running time of every critical subprocedure of both algorithms in Table~\ref{non-strict-time}. Our $k$-means step is faster, but we have to run an extra LP step. Therefore, the running time comparison between these two algorithms is complex. Generally speaking, LP takes more time than $k$-means, which means our Algorithm~\ref{alg-fair} usually runs slower than NIPS19. However, with the development of LP solvers, { we can expect that the runtime of Algorithm~\ref{alg-fair} could be further reduced with more advanced LP solvers.
%LP is becoming faster, allowing our Algorithm~\ref{alg-fair} to run in acceptable time.
}

\begin{table}[H]
    \centering
    \begin{tabular}{cccccccc}
        \toprule
        &       & Construct T           & LP on T               & k-means               & LP on S               & Rounding             & Total     \\ \hline
\multirow{2}{*}{Bank}         &    Algorithm\ref{alg-fair}   & 0.01   & 2.4   & $<$0.01   & 1.23   & \textless{}0.01      & 3.78      \\
               &    NIPS19    & /      & /      & 0.14   & 0.81   & \textless{}0.01      & 1.11     \\ \hline
\multirow{2}{*}{Creditcard}   &  Algorithm~\ref{alg-fair}  & 0.01   & 4.06   & $<$0.01   & 2.27   & \textless{}0.01      & 6.51      \\
               &  NIPS19  & /      & /      & 0.18   & 2.05   & \textless{}0.01      & 2.39      \\ \hline
\multirow{2}{*}{Census1990}   &  Algorithm~\ref{alg-fair} & 0.01   & 7.51   & 0.02   & 5.19   & \textless{}0.01      & 12.99      \\
               &  NIPS19  & /      & /      & 0.30   & 3.94   & \textless{}0.01      & 4.42     \\ \hline
\multirow{2}{*}{Adult}        &  Algorithm~\ref{alg-fair}  & 0.01   & 4.14   & $<$0.01   & 1.80   & \textless{}0.01      & 6.12      \\
               & NIPS19 & /      & /      & 0.18   & 1.23   & \multicolumn{1}{c}{\textless{}0.01} & 1.59     \\ \hline
\multirow{2}{*}{Breastcancer} &   Algorithm~\ref{alg-fair}     & 0.01   & 0.19   & $<$0.01   & 0.82   & \textless{}0.01      & 1.33      \\
               &    NIPS19   & /      & /      & 0.10   & 0.22   & \textless{}0.01      & 0.45     \\ \bottomrule
\end{tabular}
\caption{Running time (s) on non-strictly fair datasets}
\label{non-strict-time}
% { why lp on S are so different?}}
\end{table}

\vspace{-0.2in}

\begin{table}[H]
\caption{Running time (s) on strictly fair datasets}
\label{strict-time}
\begin{tabular}{cccccccc}
\toprule
               &       & Construct T           & LP on T               & k-means               & LP on S               & Rounding             & Total     \\ \hline
\multirow{4}{*}{Moons}        & Algorithm~\ref{alg-fair}          & 0.01   & 0.18   & $<$0.01   & 0.64   & \textless{}0.01      & 0.83      \\
               & NIPS19               & /      & /      & 0.07   & 0.70    & 0.01  & 0.78       \\
               & Algorithm~\ref{alg-balance}          & \multicolumn{1}{c}{/} & \multicolumn{1}{c}{/} & $<$0.01 & \multicolumn{1}{c}{/} & /     & \multicolumn{1}{c}{0.59} \\
               & ORL21               & \multicolumn{1}{c}{/} & \multicolumn{1}{c}{/} & 0.02 & \multicolumn{1}{c}{/} & /     & \multicolumn{1}{l}{0.48} \\ \hline
\multirow{4}{*}{Cluto}        &   Algorithm~\ref{alg-fair}  & 0.01   & 1.01   & $<$0.01   & 1.30   & \textless{}0.01      & 2.36      \\
               &    NIPS19  & /      & /      & 0.07   & 1.54   & \textless{}0.01      & 1.66      \\
               & Algorithm~\ref{alg-balance}          & \multicolumn{1}{c}{/} & / & $<0.01$ & \multicolumn{1}{c}{/} & /     & \multicolumn{1}{c}{0.56} \\
               & ORL 21               & \multicolumn{1}{c}{/} & \multicolumn{1}{c}{/} & 0.56 & \multicolumn{1}{c}{/} & /     & \multicolumn{1}{c}{0.72} \\ \hline
\multirow{4}{*}{Complex}      &   Algorithm~\ref{alg-fair}  & 0.01   & 1.08   & $<$0.01   & 0.61   & \textless{}0.01      & 1.71      \\
               &   NIPS19  & /      & /      & 0.05   & 0.72   & \textless{}0.01      & 0.79      \\
               &   Algorithm~\ref{alg-balance}   & \multicolumn{1}{c}{/} & \multicolumn{1}{c}{/} & $<0.01$ & \multicolumn{1}{c}{/} & /     & \multicolumn{1}{c}{0.58} \\
               &  ORL21  & \multicolumn{1}{c}{/} & \multicolumn{1}{c}{/} & 0.56 & \multicolumn{1}{c}{/} & /     & \multicolumn{1}{l}{0.72} \\ \hline
    \multirow{4}{*}{Hypercube}      &   Algorithm~\ref{alg-fair}  & 0.01   & 5.71   & 0.01   & 4.40   & \textless{}0.01      & 10.27      \\
               &   NIPS19  & /      & /      & 0.15   & 2.58   & \textless{}0.01      & 2.87      \\
               &   Algorithm~\ref{alg-balance}   & \multicolumn{1}{c}{/} & \multicolumn{1}{c}{/} & $<0.01$ & \multicolumn{1}{c}{/} & /     & \multicolumn{1}{c}{0.39} \\
               &  ORL21  & \multicolumn{1}{c}{/} & \multicolumn{1}{c}{/} & 0.67 & \multicolumn{1}{c}{/} & /     & \multicolumn{1}{l}{0.83} \\ \bottomrule
\end{tabular}
% \caption{Comparison of Running Time}
\end{table}

\paragraph{Discussion on the construction of $T$.} According to Algorithm~\ref{alg-fair}, $T$ should be an approximate centroid set~\citep{matouvsek2000approximate}. {Thanks to the open-source project by \citep{kanungo2002local},} which provides an efficient implementation of the approximate centroid set, we used their algorithm as part of our procedure in our code. \citet{kanungo2002local} used a sampling technique, leading to a trade-off between performance and efficiency. In our experiment, we sampled 10\% of points in the approximate centroid set as $T$. A higher sample rate yields better performance (lower cost) but longer running time. 

{Besides, an implicit benefit of the construction of $T$ is that it is irrelevant to the parameters $k$, $\alpha$, and $\beta$. So if we consider a real scenario that we need to repeatedly try different choices for these parameters (e.g., we may want to tune the value $k$ and select the most satisfying result), 
the step of constructing $T$ and performing linear programming on $T$ can be seen as preprocessing of datasets before the tuning. Namely, we just need to run this preprocessing one time, and consequently the amortized cost over the whole tuning procedure 
%because it is irrelevant to the parameters $k$, $\alpha$, and $\beta$. Therefore, if we compute $T$ in advance, our algorithm is conducive to parameter tuning, 
%consequently the average running time 
can be reduced significantly.
}

\paragraph{Running time comparison on strictly fair datasets.} For strictly fair datasets, 
we consider Algorithm~\ref{alg-fair}, NIPS19, Algorithm~\ref{alg-balance}, and {ORL21~\cite{}}. Algorithm~\ref{alg-balance} has an advantage in efficiency in most datasets. The primary reason is that Algorithm~\ref{alg-balance} only calls the $k$-means algorithm once and does not need to solve the LP. As for ORL21, it needs to run $k$-means for each group and then choose the best one. As a result, ORL21 takes longer time than Algorithm~\ref{alg-balance}, especially on the datasets with large number of groups.

\subsection{Experiments of Our Rounding Algorithm}

In this section, we implement our rounding algorithm in Appendix~\ref{rounding-alg} and compute the violation factor across different datasets and parameters. For convenience, we parameterize $\alpha_i$ and $\beta_i$ for the $i$-th group using a single parameter $\delta$. Specifically, we set $\beta_i = \frac{| P^{(i)} | (1-\delta)}{| P | }$ and $\alpha_i = \frac{| P^{(i)} |}{| P | (1-\delta)}$. Generally speaking, the smaller the $\delta$, the stricter the fairness constrains are. In Table~\ref{violation0}~\ref{violation01}~\ref{violation02}, the violation introduced by our rounding algorithm is less than 1 in most of the cases and never exceeds 2, which aligns with our theoretical analysis.

\begin{table}[ht]
\caption{Violation factor of our rounding algorithm with different $k$ ($\delta = 0$)}
\begin{tabular}{ccccccccccccc}
\toprule
\textbf{dataset}      & k=2    & 4    & 6    & 8    & 10   & 12   & 14   & 16   & 18   & 20   & 25   & 30   \\ \hline
Moons        & 0    & 0    & 0    & 0    & 0    & 0    & 0    & 0    & 0    & 0    & 0    & 0    \\ \hline
Hypercube    & 0    & 0    & 0    & 0    & 0    & 0    & 0    & 0    & 0    & 0    & 0    & 0    \\ \hline
Complex      & 0.82 & 0.89 & 0.5 & 0.83 & 0.96 & 0.95 & 0.87    & 0.95 & 0.91 & 0.85 & 0.80 & 0.89 \\ \hline
Cluto        & 0.80 & 0.86 & 0.72 & 1.01 & 1.04 & 0.94  & 1.0 & 1.02 & 0.90  & 0.90 & 1.1 & 0.9 \\ \hline
Biodeg       & 0.05 & 0.66 & 0.65 & 0.63 & 0.64 & 0.62 & 0.63 & 0.68 & 0.77 & 0.79 & 0    & 0.01 \\ \hline
Breastcancer & 0.33 & 0.34 & 0.13 & 0.69 & 0.87 & 0.90 & 0.35 & 0.94 & 0.78 & 0.76 & 0.76 & 0.18 \\ \bottomrule
\end{tabular}
\label{violation0}
\end{table}

\begin{table}[ht]
\caption{Violation factor of our rounding algorithm with different $k$ ($\delta = 0.1$)}
\begin{tabular}{ccccccccccccc}
\toprule
\textbf{dataset} & k=2    & 4    & 6    & 8    & 10   & 12   & 14   & 16   & 18   & 20   & 25   & 30   \\ \hline
Moons            & 0    & 0.3  & 0.35 & 0.40 & 0.30 & 0.40 & 0.70 & 0.5  & 0.35 & 0    & 0.20 & 0.40 \\ \hline
Hypercube        & 0    & 0.94 & 0.98 & 0.94 & 0.83 & 0.95 & 0.85 & 0.91 & 0.80 & 0.88 & 1.02 & 0.83 \\ \hline
Complex          & 0.67 & 0.98 & 0.66 & 0.87 & 0.88 & 0.97 & 0.76 & 0.77 & 0.89 & 0.97 & 0.67 & 1.03 \\ \hline
Cluto            & 0.38 & 1.05 & 0.99 & 0.83 & 0.96 & 0.94 & 0.95 & 0.93 & 0.94 & 0.91 & 0.57 & 0.99 \\ \hline
Biodeg           & 0    & 0.01 & 0.33 & 0.79 & 0.38 & 0.37 & 0.59 & 0.38 & 0.78 & 0.51 & 0.78 & 0.80 \\ \hline
Breastcancer     & 0.18 & 0.23 & 0.40 & 0.23 & 0.39 & 0.89 & 0.53 & 0.33 & 0.47 & 0.51 & 0.34 & 0.68 \\ \bottomrule
\end{tabular}
\label{violation01}
\end{table}

\begin{table}[ht]
\caption{Violation factor of our rounding algorithm with different $k$ ($\delta = 0.2$)}
\begin{tabular}{ccccccccccccc}
\toprule
\textbf{dataset} & k=2    & 4    & 6    & 8     & 10   & 12    & 14   & 16   & 18   & 20   & 25   & 30   \\ \hline
Moons            & 0    & 0.20 & 0.40 & 0.40  & 0.40 & 0.40  & 0.60 & 0.60 & 0.40 & 0.40 & 0.60 & 0.80 \\ \hline
Hypercube        & 0    & 0.56 & 0.69 & 0.88  & 0.90 & 1.125 & 0.80 & 0.91 & 0.90 & 0.97 & 0.90 & 0.90 \\ \hline
Complex          & 0.92 & 1.02 & 0.92 & 0.768 & 0.96 & 1.01  & 0.95 & 0.79 & 0.88 & 0.90 & 1.04 & 1.01 \\ \hline
Cluto            & 0.85 & 0.90 & 0.90 & 0.88  & 0.83 & 1.024 & 0.85 & 0.86 & 0.90 & 0.88 & 0.96 & 1.00 \\ \hline
Biodeg           & 0    & 0.50 & 0.56 & 0.39  & 0.51 & 0.69  & 0.19 & 0.57 & 0.56 & 0.64 & 0.75 & 0.65 \\ \hline
Breastcancer     & 0    & 0.26 & 0.42 & 0.26  & 0.69 & 0.39  & 0.29 & 0.67 & 0.80 & 0.81 & 0.85 & 0.68 \\ \bottomrule
\end{tabular}
\label{violation02}
\end{table}

% \subsection{Name of the licenses for existing assets}
% \label{sec-lic}
% All the datasets of our experiments from UCI repository, including \textbf{Breastcancer}, \textbf{Biodeg}, \textbf{Bank}. \textbf{Census1990}, \textbf{Adult} have CC BY 4.0 license. \textbf{Moons} dataset has BSD license. We use the educational version of the optimizer Gurobi~\citep{gurobi} with an educational license (non commercial).

% \subsection{Experiment Statistical Significance}
% \label{sec-stat}

% On the dataset that appears multiple times in our experiments, we run an experiment of statistical significance as shown in Figure~\ref{fig:stat2}\ref{fig:stat}. We use the same setting as the first experiment in Section~\ref{sec-exp} with 2-sigma error bars (10 executions each marked point). The experimental results indicate that our algorithm significantly outperforms the baselines.

% \vspace{-0.1in}
% \begin{figure}[h]
%     \centering
%     \includegraphics[width=\textwidth]{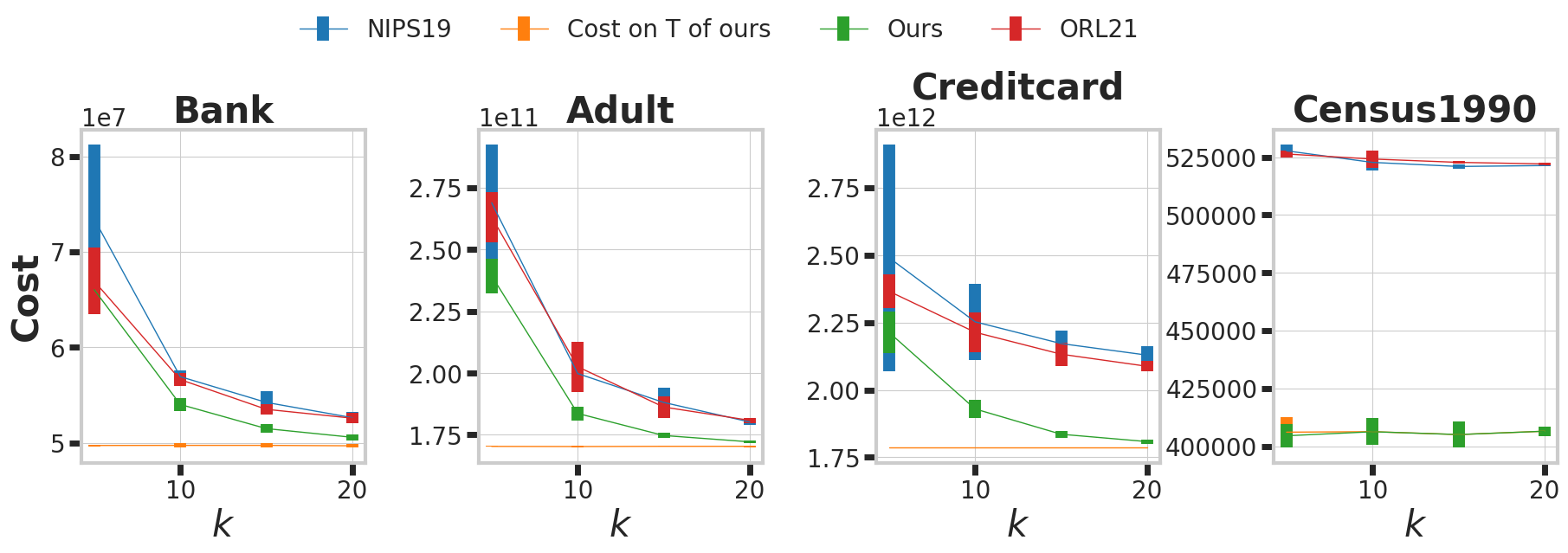}
    
%     \caption{{The cost obtained by the algorithms with different $k$ (with 2-sigma error bars).}}
%     \label{fig:stat2}
% \end{figure}

% \begin{figure}[h]
%     \centering
%     \includegraphics[width=\textwidth]{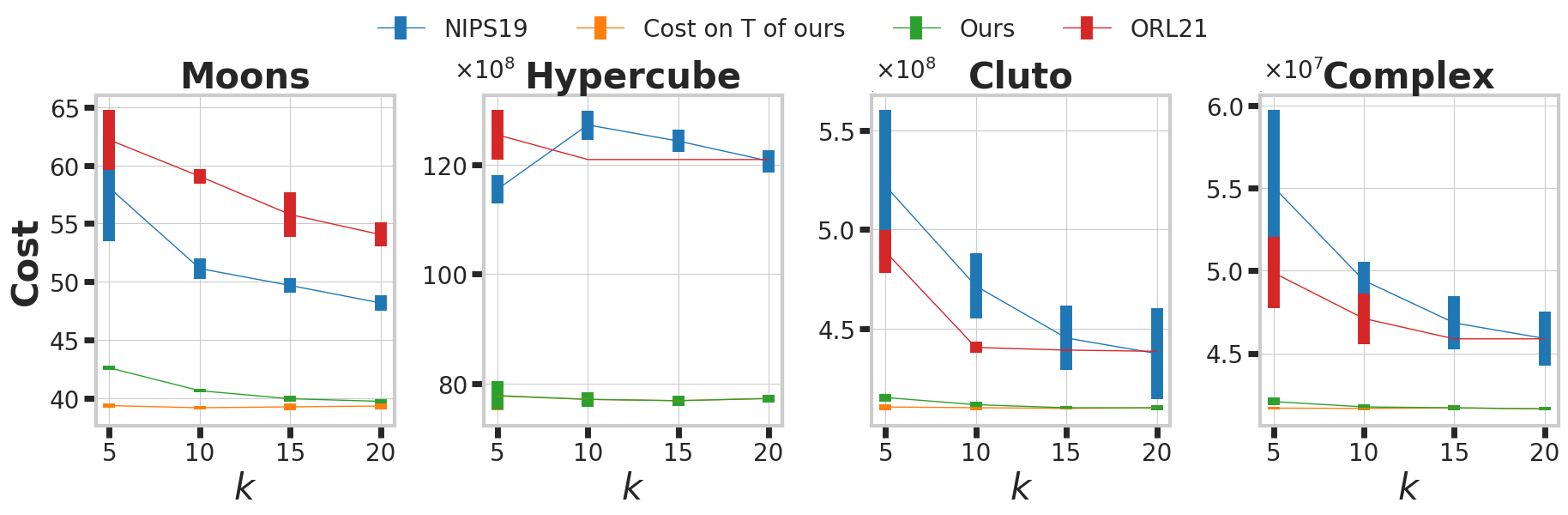}
    
%     \caption{{The cost obtained by the algorithms with different $k$ (with 2-sigma error bars).}}
%     \label{fig:stat}
% \end{figure}

\end{document}